\newcommand{\phat}{\hat{p}}
\newcommand{\qhat}{\hat{q}}
\newcommand{\fhat}{\hat{f}}
\newcommand{\wT}{\widehat{T}}
\newcommand{\Thatlin}{\widehat{T}_{lin}}
\newcommand{\Thatplugin}{\widehat{T}_{pl}}
\newcommand{\Thatquad}{\widehat{T}_{quad}}
\newcommand{\yrcite}{\cite}
\newenvironment{packed_enum}{
\begin{enumerate}
\setlength{\itemsep}{1pt}
\setlength{\parskip}{0pt}
\setlength{\parsep}{0pt}
}{\end{enumerate}}
\newcommand{\version}{arxiv}
\begin{document}
\title{Nonparametric Estimation of R\'{e}nyi Divergence and Friends}

\author[1]{
Akshay Krishnamurthy
\thanks{akshaykr@cs.cmu.edu}}

\author[2]{
Kirthevasan Kandasamy
\thanks{kandasamy@cs.cmu.edu}}

\author[2]{
\\Barnab\'{a}s P\'{o}czos
\thanks{bapoczos@cs.cmu.edu}}

\author[3]{
Larry Wasserman
\thanks{larry@stat.cmu.edu}}

\affil[1]{Computer Science Department\\
Carnegie Mellon University}
\affil[2]{Machine Learning Department\\
Carnegie Mellon University}
\affil[3]{Statistics Department\\
Carnegie Mellon University}

\maketitle

\begin{abstract}
We consider nonparametric estimation of $L_2$, R\'{e}nyi-$\alpha$ and Tsallis-$\alpha$ divergences between continuous distributions.
Our approach is to construct estimators for particular integral functionals of two densities and translate them into divergence estimators.
For the integral functionals, our estimators are based on corrections of a preliminary plug-in estimator.
We show that these estimators achieve the parametric convergence rate of $n^{-1/2}$ when the densities' smoothness, $s$, are both at least $d/4$ where $d$ is the dimension.
We also derive minimax lower bounds for this problem which confirm that $s > d/4$ is necessary to achieve the $n^{-1/2}$ rate of convergence.
We validate our theoretical guarantees with a number of simulations.
\end{abstract}

\section{Introduction}
\label{sec:intro}

Given samples from two distributions, one fundamental and classical question to ask is: how close are the two distributions?
First, one must specify what it means for two distributions to be close, for which a number of {\em divergences} have been proposed.
Then there is the statistical question: how does one estimate divergence given samples from two distributions. 
In this paper, we propose and analyze estimators for three common divergences. 

Divergence estimation has a number of applications across machine learning and statistics.
In statistics, one can use these estimators to construct two-sample and independence tests~\cite{pardo2005statistical}.
In machine learning, it is often convenient to view training data as a set of distributions and use divergences to estimate dissimilarity between examples.
This idea has been used in neuroscience, where the neural response pattern of an individual is modeled as a distribution, and divergence is used to compare responses across subjects~\cite{johnson2001information}.
It has also enjoyed success in computer vision, where features are computed for each patch of an image and these feature vectors are modeled as independent draws from an underlying distribution~\cite{poczos2012nonparametric}.

For these applications and others, it is crucial to accurately estimate divergences given samples drawn independently from each distribution. 
In the nonparametric setting, a number of authors have proposed various estimators which are provably consistent.
However, apart from a few examples, the actual {\em rates of convergence} of these estimators and the minimax optimal rates are still unknown.

In this work, we propose three estimators for the $L_2^2$, R\'{e}nyi-$\alpha$, and Tsallis-$\alpha$ divergence between two continuous distributions. 
Our strategy is to correct an initial plug-in estimator by estimates of the higher order terms in the von Mises expansion of the divergence functional. 
We establish the rates of convergence for these estimators under the assumption that both densities belong to a H\"{o}lder class of smoothness $s$.
Concretely, we show that the plug-in estimator achieves rate $n^{\frac{-s}{2s+d}}$ while correcting by the first order terms in the expansion results in an $n^{-\min\{\frac{2s}{2s+d}, 1/2\}}$-estimator and correcting further by the second order terms gives an $n^{-\min\{\frac{3s}{2s+d}, 1/2\}}$-estimator.
These last two estimators achieve the parametric $n^{-1/2}$ rate as long as the smoothness $s$ is larger than $d/2, d/4$, respectively, where $d$ is the dimension.
Moreover the first-order estimator, while worse statistically than the second-order estimator, is computationally very elegant.
These results contribute to our fairly limited knowledge on this important problems~\cite{nguyen2010estimating,singh2014generalized}.

We also address the issue of {\em statistical optimality} by deriving a minimax lower bound on the convergence rate. 
Specifically, we show that one cannot estimate these quantities at better than $n^{\frac{-4s}{4s+d}}$-rate when $s \le d/4$ and $n^{-1/2}$-rate otherwise.
This establishes the optimality of our best estimator in the smooth regime and also that $d/4$ is the critical smoothness for this problem. 

The remainder of this manuscript is organized as follows.
After discussing some related work on divergence estimation and the closely-related entropy estimation in Section~\ref{sec:related}, we present our estimators and main results in Sections~\ref{sec:estimators} and~\ref{sec:results}.
We provide proof sketches in Section~\ref{sec:proofs}. 
We present some numerical simulations in Section~\ref{sec:experiments} and conclude with some open questions in Section~\ref{sec:discussion}.
We defer many proof details and several calculations to the appendices. 

\subsection{Preliminaries}
Let us begin by standardizing notation and presenting some basic definitions. 
We will be concerned with two densities, $p,q: [0,1]^d \rightarrow \RR_{\ge 0}$ where $d$ denotes the dimension. 
Formally, letting $\mu$ denote the Lebesgue measure on $[0,1]^d$, we are interested in two probability distributions $\PP,\QQ$ with Radon-Nikodym derivatives $p = d\PP/d\mu, q = d\QQ/d\mu$.
Except for in this section, we will operate exclusively with the densities.
Throughout, the samples $\{X_i\}_{i=1}^n$ will be drawn independently from $p$ while the samples $\{Y_i\}_{i=1}^n$ will be drawn independently from $q$. 
For simplicity, assume that we are given $n$ samples from each distribution, although it is not hard to adjust the estimators and results to unequal sample sizes.
The divergences of interest are:
\begin{packed_enum}
\item \textbf{$L_2^2$-divergence}
\[
L_2^2(p,q) = \int (p(x)-q(x))^2 d\mu(x)
\]
\item \textbf{R\'{e}nyi-$\alpha$ Divergence}~\cite{renyi1961measures}
\[
D_\alpha(p, q) = \frac{1}{\alpha-1} \log\left( \int p^\alpha(x) q^{1-\alpha}(x) d\mu(x)\right)
\]
\item \textbf{Tsallis-$\alpha$ Divergence}~\cite{tsallis1988possible}
\[
T_\alpha(p,q) = \frac{1}{\alpha-1}\left( \int p^\alpha(x) q^{1-\alpha}(x)d\mu(x) - 1\right)
\]
\end{packed_enum}
Technically, these divergences are functionals on distributions, rather than densities, but we will abuse notation and write them as above.
As a unification, we consider estimating functionals of the form, $T(p,q) = \int p^\alpha(x)q^\beta(x)d\mu(x)$ for given $\alpha, \beta$.
Various settings of $\alpha,\beta$ yield the main terms in the divergences, and we will verify that estimators for $T(p,q)$ result in good divergence estimators.

The \emph{sine qua non} of our work is the \textbf{von Mises expansion}\footnote{See Chapter 20 of van der Vaart's book for an introduction to von Mises calculus~\yrcite{van2000asymptotic}.}.
Given a functional $T$ mapping distributions to the reals, the first-order von Mises expansion is:
\[
T(F) = T(G) + dT(G; F-G) + R_2, 
\]
where $F$ and $G$ are distributions, $R_2$ is a remainder term, and $dT(G; F-G)$ is the Gateaux derivative of $T$ at $G$ in the direction of $F-G$:
\[
dT(G; F-G) = \lim_{\tau \rightarrow 0} \frac{T(G + \tau(F-G)) - T(G)}{\tau}.
\]
In our work, $T$ is always of the form $T(F) = \int \phi(f)d\mu$ where $f = dF/d\mu$ is the Radon-Nikodym derivative and $\phi$ is differentiable.
In this case, the von Mises expansion reduces to a functional Taylor expansion on the densities\footnote{See Lemma~\ref{lem:gateaux} in the Appendix.}:
\ifthenelse{\equal{\version}{arxiv}}{
\begin{eqnarray*}
T(F) = T(G) +  \int \frac{\partial \phi(g(x))}{\partial g(x)} (f(x) - g(x))d\mu(x) +
O(\|f - g\|_2^2).
\end{eqnarray*}
}{
\begin{eqnarray*}
T(F) = T(G) +  \int \frac{\partial \phi(g(x))}{\partial g(x)} (f(x) - g(x))d\mu(x) +\\
O(\|f - g\|_2^2).
\end{eqnarray*}
}
We generalize these ideas to functionals of two distributions and with higher order expansions analogous to the Taylor expansion.
We often write $T(f)$ instead of $T(F)$.

\section{Related Work}
\label{sec:related}

Divergence estimation and its applications have received considerable attention over the past several decades.
Pardo provides a fairly comprehensive discussion of methods and applications in the context of discrete distributions~\yrcite{pardo2005statistical}.

Only recently has attention shifted to the continuous, nonparametric setting, where a number of efforts have established consistent estimators.
Many of the approaches are based on nearest-neighbor graphs~\cite{hero1999estimation,wang2009divergence,poczos2011estimation,kallberg2012estimation}.
For example, P\'{o}czos and Schneider use a $k$-nearest-neighbor estimator and show that one does not need a consistent density estimator to consistently estimate R\'{e}nyi-$\alpha$ and Tsallis-$\alpha$ divergences.
A number of other authors have also proposed consistent estimators via the empirical CDF or histograms~\cite{wang2005divergence,perez2008kullback}.
Unfortunately, the rates of convergence for all of these methods are still unknown.

Singh and Poczos~\yrcite{singh2014generalized} recently established a rate of convergence for an estimator based on simply plugging kernel density estimates into the divergence functional.
Their estimator converges at $n^{\frac{-s}{s+d}}$-rate when $s < d$ and $n^{-1/2}$ otherwise which matches some existing results on estimating entropy functionals~\cite{liu2012exponential}.
In comparison, we show that corrections of the plug-in estimator lead to faster convergence rates and that the $n^{-1/2}$ rate can be achieved at the much lower smoothness of $s > d/4$. 
Moreover we establish a minimax lower bound for this problem, which shows that $d/4$ is the critical smoothness index.

Nguyen et al.~\yrcite{nguyen2010estimating} construct an estimator for Csisz\'{a}r $f$-divergences via regularized $M$-estimation and prove a rate of convergence when the likelihood-ratio $d\PP/d\QQ$ belongs to a Reproducing Kernel Hilbert Space.
Their rate depends on the complexity of this RKHS, but it is not clear how to translate these assumptions into our H\"{o}lderian one, so the results are somewhat incomparable.


K\"{a}llberg and Seleznjev~\yrcite{kallberg2012estimation} study an $\epsilon$-nearest neighbor estimator for the $L_2^2$-divergence that enjoys the same rate of convergence as our projection-based estimator. 
They prove that the estimator is asymptotically normal in the $s > d/4$ regime, which one can also show for our estimator.
In the more general setting of estimating polynomial functionals of the densities, they only show consistency of their estimator, while we also characterize the convegence rate.

A related and flourishing line of work is on estimating entropy functionals.
The majority of the methods are graph-based, involving either nearest neighbor graphs or spanning trees over the data~\cite{hero2002convergence,leonenko2008class,leonenko2010statistical,pal2010renyi,sricharan2010empirical}.
One exception is the KDE-based estimator for mutual information and joint entropy of Liu, Lafferty, and Wasserman~\yrcite{liu2012exponential}.
A number of these estimators come with provable convergence rates. 

While it is not clear how to port these ideas to divergence estimation, it is still worth comparing rates.
The estimator of Liu et al.~\yrcite{liu2012exponential} converges at rate $n^{\frac{-s}{s+d}}$, achieving the parametric rate when $s > d$.
Similarly, Sricharan et al.~\yrcite{sricharan2010empirical} show that when $s > d$ a $k$-NN style estimator achieves rate $n^{-2/d}$ (in absolute error) ignoring logarithmic factors. 
In a follow up work, the authors improve this result to $O(n^{-1/2})$ using an ensemble of weak estimators, but they require $s > d$ orders of smoothness~\cite{sricharan2012ensemble}.
In contrast, our estimators achieve the parametric $n^{-1/2}$ rate at lower smoothness ($s > d/2, d/4$ for the first-order and second-order estimators, respectively) and enjoy a faster rate of convergence uniformly over smoothness.

Interestingly, while many of these methods are plug-in-based, the choice of tuning parameter typically is sub-optimal for density estimation.
This contrasts with our technique of correcting optimal density estimators.

We are not aware of any lower bounds for divergence estimation, although analogous results have been established for the entropy estimation problem.
Specifically, Birg\'{e} and Massart~\yrcite{birge1995estimation} prove a $n^{\frac{-4s}{4s+d}}$-lower bound for estimating integral functionals of a density.
Hero et al.~\yrcite{hero2002convergence} give a matching lower bound for estimating R\'{e}nyi-$\alpha$ entropies. 

Finally, our estimators and proof techniques are based on several classical works on estimating integral functionals of a density. 
The goal here is to estimate $\int \phi(f(x))d\mu(x)$, for some known function $\phi$, given samples from $f$. 
A series of papers show that $n^{-1/2}$ rate of convergence is attainable if and only if $s > d/4$, which is analogous to our results~\cite{birge1995estimation,laurent1996efficient,kerkyacharian1996estimating,bickel1988estimating}.
Of course, our results pertain to the two-density setting, which encompasses the divergences of interest.
We also generalize some of these results to the multi-dimensional setting.





\section{The Estimators}
\label{sec:estimators}
Recall that we are interested in estimating integral functionals of the form $T(p,q) = \int p^\alpha(x)q^\beta(x)$. 
As an initial attempt, with estimators $\phat$ and $\qhat$ for $p$ and $q$, we can use the plug-in estimator $\Thatplugin = T(\phat, \qhat)$. 
Via the von Mises expansion of $T(p,q)$, the error is of the form: 
\[
|\Thatplugin - T(p,q)| \le c_1 \|\phat-p\|_1 + c_2 \|\qhat-q\|_1.
\]
Classical results on density estimation then suggest that $\Thatplugin$ will enjoy a $n^{\frac{-s}{2s+d}}$-rate~\cite{devroye1985nonparametric}.

A better convergence rate can be achieved by correcting the plug-in estimator with estimates of the linear term in the von Mises expansion. 
Informally speaking, the remainder of the first order expansion is $O(\|\phat-p\|_2^2 + \|\qhat-q\|_2^2)$ which decays with $n^{\frac{-2s}{2s+d}}$, while the linear terms can be estimated at $n^{-1/2}$-rate.
This estimator, which we call $\Thatlin$ enjoys a faster convergence rate than $\Thatplugin$.

It is even better to augment the plug-in estimator with both the first and second-order terms of the expansion.
Here the remainder decays at rate $n^{-\frac{3s}{2s+d}}$ while the linear and quadratic terms can be estimated at $n^{-1/2}$ and $n^{\frac{-4s}{4s+d}}$ rate respectively.
This corrected estimator $\Thatquad$ achieves the parametric rate whenever the smoothness $s > d/4$ which we will show to be minimax optimal.

We now formalize these heuristic developments\footnote{See Appendices~\ref{app:von} and~\ref{app:est} for details omitted in this section.}. 
Below we enumerate the terms in the first and second order von Mises expansions that we will estimate or compute: 
\begin{align*}
\theta_{1,1}^p & = \EE_{X \sim p} \alpha \phat^{\alpha-1}(X)\qhat^\beta(X)\\
\theta_{1,1}^q & = \EE_{Y \sim q} \beta \phat^{\alpha}(Y)\qhat^{\beta-1}(Y)\\
\theta_{2,1}^p & = \EE_{X\sim p} \alpha(2-\alpha-\beta) \phat^{\alpha-1}(X)\qhat^\beta(X)\\
\theta_{2,1}^q & = \EE_{Y\sim q} \beta(2-\alpha-\beta) \phat^{\alpha}(Y)\qhat^{\beta-1}(Y)\\
\theta_{2,2}^p & = \frac{1}{2} \int \alpha(\alpha-1)\phat^{\alpha-2}\qhat^\beta p^2\\
\theta_{2,2}^q & = \frac{1}{2} \int \beta(\beta-1)\phat^{\alpha}\qhat^{\beta-2} q^2\\
\theta_{2,2}^{p,q} & =  \int \alpha\beta \phat^{\alpha-1}\qhat^{\beta-1} pq\\
C_1 & = 1 - \alpha - \beta\\
C_2 & = 1 - \frac{3}{2}(\alpha+\beta) + \frac{1}{2}(\alpha+\beta)^2
\end{align*}
These definitions allow us to succinctly write the expansions of $T(p,q)$ about $T(\phat, \qhat)$:
\begin{eqnarray*}
&& T_0(p,q) = T(\phat,\qhat) + R_1\\
&& T_1(p,q) = C_1 T(\phat,\qhat) + \theta_{1,1}^p + \theta_{1,1}^q + R_2\\
&& T_2(p,q) = C_2 T(\phat,\qhat) + \sum_{i=1,2 \atop f = p,q}\theta_{2,i}^f + \theta_{2,2}^{p,q} + R_3,
\end{eqnarray*}
with remainders, $R_a = O(\|p-\phat\|_a^a + \|q-\qhat\|_a^a)$.


We now turn to estimation of the $\theta_{(\cdot), (\cdot)}^{(\cdot)}$ terms.
All of the $\theta_{(\cdot),1}^{(\cdot)}$ terms are \emph{linear}; that is, they are of the form $\theta = \EE_{Z\sim f}[\psi(Z)]$ where $\psi$ is known.
A natural estimator, given data $Z_1^n \sim f$, is the sample mean:
\begin{align*}
\hat{\theta} = \frac{1}{n}\sum_{j=1}^n \psi(Z_j).
\end{align*}

The terms $\theta_{(\cdot),2}^{(\cdot)}$ are of the form:
\[
\int \psi(x) f^2(x), \qquad \textrm{or} \qquad \int \psi(x) f(x) g(x),
\]
again with known $\psi$.
To estimate these terms, we have samples $X_1^n \sim f, Y_1^n \sim g$. 
If $\{\phi_k\}_{k \in D}$ is an orthonormal basis for $L_2([0,1]^d)$ then the estimator for the bilinear term is:
\begin{eqnarray}
\hat{\theta} = \frac{1}{n}\sum_{j=1}^n \sum_{k \in M} \left(\frac{1}{n}\sum_{i=1}^n \phi_k(X_i)\right) \phi_k(Y_j)\psi(Y_j),
\label{eq:both_estimator}
\end{eqnarray}
where $M \subset D$ is chosen to tradeoff the bias and the variance. 
To develop some intuition, if we knew $f$, we would simply use the sample mean $\frac{1}{n} \sum_{j=1}^n f(Y_j)\psi(Y_j)$.
Since $f$ is actually unknown, we replace it with an estimator formed by truncating its Fourier expansion.
Specifically, we replace $f$ with $\hat{f}(\cdot) = \sum_{k \in M} \hat{a}_k \phi_k(\cdot)$ with $\hat{a}_k = \frac{1}{n}\sum_{i=1}^n \phi_k(X_i)$.

For the quadratic functional, a projection estimator was proposed and analyzed by Laurent \yrcite{laurent1996efficient}:
\ifthenelse{\equal{\version}{arxiv}}{
\begin{equation}
\hat{\theta} = \frac{2}{n(n-1)} \sum_{k \in M} \sum_{i \ne j} \phi_k(X_i)\phi_k(X_j)\psi(X_j)
- \frac{1}{n(n-1)} \sum_{k,k' \in M} \sum_{i \ne j} \phi_k(X_i)\phi_{k'}(X_j)b_{k,k'}(\psi),
\label{eq:quadratic_estimator}
\end{equation}
}{
\begin{equation}
\begin{aligned}
\hat{\theta} = \frac{2}{n(n-1)} &\sum_{k \in M} \sum_{i \ne j} \phi_k(X_i)\phi_k(X_j)\psi(X_j)\\ 
- \frac{1}{n(n-1)} &\sum_{k,k' \in M} \sum_{i \ne j} \phi_k(X_i)\phi_{k'}(X_j)b_{k,k'}(\psi),
\end{aligned}
\label{eq:quadratic_estimator}
\end{equation}
}
where $b_{k,k'}(\psi) = \int \phi_k(x) \phi_{k'}(x) \psi(x)dx$. 
The first term in the estimator is motivated by the same line of reasoning as in the bilinear estimator while the second term significantly reduces the bias without impacting the variance.




Our final estimators for $T(p,q)$ are:
\begin{eqnarray*}
\Thatplugin &=& T(\phat,\qhat)\\
\Thatlin &=& C_1 T(\phat,\qhat) + \hat{\theta}_{1,1}^p + \hat{\theta}_{1,1}^q \\
\Thatquad &=& C_2 T(\phat,\qhat) + \sum_{i=1,2 \atop f = p,q}\hat{\theta}_{2,i}^f + \hat{\theta}_{2,2}^{p,q}.
\end{eqnarray*}

Before proceeding to our theoretical analysis, we mention some algorithmic considerations.
We estimate $\phat, \qhat$ with kernel density estimators, which, except for in $\Thatplugin$, we only train on half of the sample.
This gives us independent samples to estimate the $\hat{\theta}_{(\cdot), (\cdot)}^{(\cdot)}$ terms. 
Second, in our analysis, we will require that the KDEs are bounded above and below. 
Under the assumption that $p$ and $q$ are bounded above and below, we will show that clipping the original KDE will not affect the convergence rate.

Another important issue with density estimation over bounded domains, that applies to our setting, is that the standard KDE suffers high bias near the boundary.
To correct this bias, we adopt the strategy used by Liu et al.~\yrcite{liu2012exponential} of ``mirroring'' the data set over the boundaries.
We do not dwell too much on this issue, noting that this technique can be shown to suitably correct for boundary bias without substantially increasing the variance.
This augmented estimator can be shown to match the rates of convergence in the literature~\cite{devroye1985nonparametric, tsybakov2009introduction}.

Lastly, the estimators all require integration of the term $T(\phat,\qhat)$, which can be computationally burdensome, particularly in high dimension.
However, whenever $\alpha +\beta= 1$, as in the R\'{e}nyi-$\alpha$ and Tsallis-$\alpha$ divergences, the constants $C_1, C_2$ are zero, so the first term may be omitted. 
In this case $\Thatlin$ is remarkably simple; it involves training KDEs and estimating a specific linear functional of them via the sample mean.
Although this estimator is not minimax optimal, it enjoys a fairly fast rate of convergence while being computationally practical.
Unfortunately, even when $C_2=0$, the quadratic estimator still involves integration of the $b_{i,i'}$ terms.
We therefore advocate for $\Thatlin$ over $\Thatquad$ in practice, as $\Thatlin$ exhibits a better tradeoff between computational and statistical efficiency. 

\section{Theoretical Results}
\label{sec:results}
\begin{figure*}
\begin{center}
\includegraphics[width=\textwidth]{./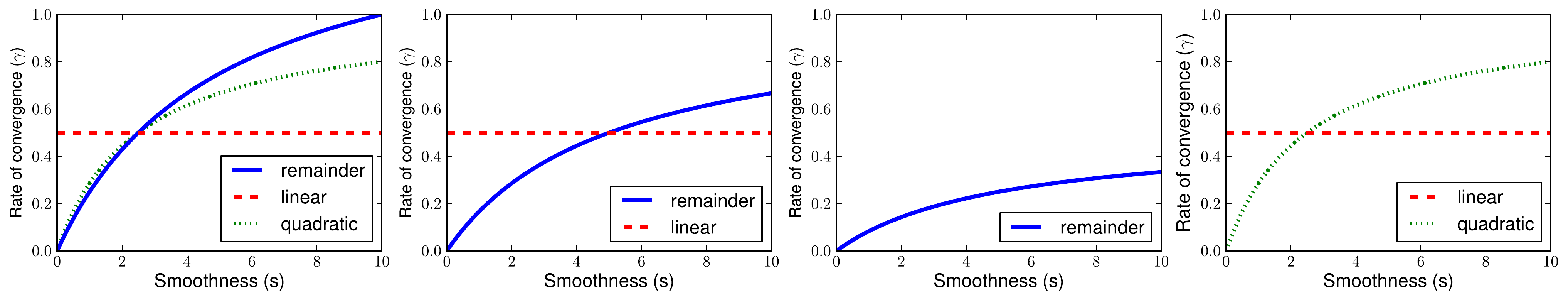}
\end{center}
\caption{Rates of convergence of the estimators $\Thatquad, \Thatlin, \Thatplugin$ along with the rate of convergence in the lower bound (Theorem~\ref{thm:lower_bound}).
Plot is $\gamma$ vs. smoothness $s$ with $d=10$, where the rate of convergence is $O(n^{-\gamma})$.
The rate of convergence for each estimator is the smallest of the rates of all terms in the von Mises expansion, which translates to the value of the lowest curves in the figure. }
\label{fig:rate_plot}
\end{figure*}

For our theoretical analysis, we will assume that the densities $p,q$ belong to $\Sigma(s,L)$, the \textbf{periodic H\"{o}lder class} of smoothness $s$, defined as follows:
\begin{definition}
For any tuple $r = (r_1, \ldots, r_d)$ define $D^r = \frac{\partial^{r_1+\ldots+r_d}}{\partial x_1^{r_1}\ldots\partial x_d^{r_d}}$.
The \textbf{periodic H\"{o}lder class} $\Sigma(s,L)$ is the subset of $L_2([0,1]^d)$ where for each $f\in \Sigma(s,L)$, the $r$th derivative is periodic for any tuple $r$ with $\sum_j r_j < s$ and:
\begin{eqnarray}
|D^r f(x) - D^r f(y)| \le L \|x-y\|^{s-|r|},
\end{eqnarray}
for all $x,y$ and for all tuples $r$ with $\sum_j r_j = \lfloor s \rfloor$ the largest integer strictly smaller than $s$. 
\end{definition}

We are now ready to state our main assumptions:
\begin{assum}[Smoothness]
$p,q \in \Sigma(s,L)$ for some known smoothness $s$. 
\label{ass:smoothness}
\end{assum}
\begin{assum}[Boundedness]
The densities are bounded above and below by known parameters $\kappa_l, \kappa_u$. 
Formally $0 < \kappa_l \le  p(x),q(x) \le \kappa_u < \infty$ for all $x \in [0,1]^d$.
\label{ass:bound}
\end{assum}
\begin{assum}[Kernel Properties]
The kernel $K \in \RR^d \rightarrow \RR$ satisfies:
\begin{align*}
(i) &\ \textrm{supp}(K) \in (-1, 1)^d\\
(ii) &\int K(x)d\mu(x) = 1\\
(iii) &\int \prod_{i=1}^d x_i^{r_i}K(x)d\mu(x) = 0, \forall r \in \NN^d : \sum_{i} r_i \le \lfloor s \rfloor
\end{align*}
\label{ass:kernel}
\end{assum}
\begin{assum}[Parameter Selection]
Set the KDE bandwidth $h \asymp n^{\frac{-1}{2s+d}}$.
For any projection-style estimator, set the number of basis elements $m \asymp n^{\frac{2d}{4s+d}}$.
\label{ass:hyper}
\end{assum}
The H\"{o}lderian assumption is standard in the nonparametric literature while the periodic assumption subsumes more standard boundary smoothness conditions~\cite{liu2012exponential}.
It is fairly straightforward to construct kernels meeting Assumption~\ref{ass:kernel}~\cite{tsybakov2009introduction}, while the boundedness assumption is common in the literature on estimating integral functionals of a density~\cite{birge1995estimation}.

The following theorem characterizes the rate of convergence of our estimators $\Thatplugin, \Thatlin, \Thatquad$:
\begin{theorem}
Under Assumptions~\ref{ass:smoothness}-~\ref{ass:hyper} we have:
\begin{eqnarray}
\EE\left[|\Thatplugin - T(p,q)| \right] &=& O\left(n^{\frac{-s}{2s+d}}\right)\\
\EE\left[|\Thatlin - T(p,q)| \right] &=& O\left(n^{-1/2} + n^{\frac{-2s}{2s+d}}\right)\\
\EE\left[|\Thatquad - T(p,q)| \right] &=& O\left(n^{-1/2} + n^{\frac{-3s}{2s+d}}\right). 
\end{eqnarray}
All expectations are taken with respect to $X_1^n, Y_1^n$.
When $s = d/4$, $\Thatquad$ enjoys $O(n^{-1/2+\epsilon})$ rate of convergence for any $\epsilon > 0$\footnote{The constant is exponential in $\epsilon$ and is infinite for $\epsilon = 0$.}.
$\Thatlin$ and $\Thatquad$ achieve the parametric rate when $s > d/2, d/4$ respectively.
\label{thm:upper_bound}
\end{theorem}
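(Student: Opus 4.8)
The plan is to run each estimator's error through the matching von Mises expansion displayed above, so that the error splits into an expansion remainder $R_a$ plus the errors $\hat\theta-\theta$ incurred in estimating the linear, bilinear and quadratic terms, and then to bound each piece separately. The organizing device is sample splitting: since the clipped kernel density estimators $\phat,\qhat$ are built from one half of the data, after conditioning on that half every weight function $\psi$ occurring in the $\theta$'s (e.g. $\psi=\tfrac12\alpha(\alpha-1)\phat^{\alpha-2}\qhat^{\beta}$) is a fixed, deterministic function, and it is uniformly bounded by a constant depending only on $\alpha,\beta,\kappa_l,\kappa_u$ by Assumption~\ref{ass:bound} together with clipping. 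The remaining half of the data is independent of these weights, so I can analyze each $\hat\theta$ as an estimator of a fixed functional and take expectations over the density-estimation half only at the very end.

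\textbf{Remainders and linear terms.} Since $R_a=O(\|p-\phat\|_a^a+\|q-\qhat\|_a^a)$, the first task is to bound $\EE\|\phat-p\|_a^a$ for $a=1,2,3$. Under Assumptions~\ref{ass:smoothness}--\ref{ass:kernel}, standard bias--variance calculations for the KDE give $\EE|\phat(x)-p(x)|^a=O(h^{as}+(nh^d)^{-a/2})$ pointwise (using Rosenthal's inequality for the centered part when $a=3$), and clipping to $[\kappa_l,\kappa_u]$ only decreases the error while the mirror-image correction kills boundary bias without inflating the variance; integrating over $[0,1]^d$ and plugging in $h\asymp n^{-1/(2s+d)}$ gives $\EE\|\phat-p\|_a^a=O(n^{-as/(2s+d)})$, and similarly for $\qhat$. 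This already yields $\EE|\Thatplugin-T(p,q)|=\EE|R_1|=O(n^{-s/(2s+d)})$, the first bound, and supplies the $n^{-2s/(2s+d)}$ and $n^{-3s/(2s+d)}$ contributions for $\Thatlin$ and $\Thatquad$. For the linear pieces, each $\theta^{\cdot}_{\cdot,1}=\EE_{Z\sim f}[\psi(Z)]$ with $\psi$ a fixed bounded function given the density-estimation half, and $\hat\theta$ is the sample mean of $\psi$ over the independent half, so $\EE[\hat\theta\mid \text{KDEs}]=\theta$ and $\operatorname{Var}(\hat\theta\mid\text{KDEs})\le n^{-1}\|\psi\|_\infty^2=O(1/n)$; by Jensen $\EE|\hat\theta-\theta|\le(\EE\operatorname{Var}(\hat\theta\mid\text{KDEs}))^{1/2}=O(n^{-1/2})$. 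Combining with the remainder bound gives $\EE|\Thatlin-T(p,q)|=O(n^{-1/2}+n^{-2s/(2s+d)})$.

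\textbf{Quadratic and bilinear terms, and conclusion.} The terms $\theta^{p}_{2,2},\theta^{q}_{2,2}$ (of the form $\int\psi f^2$) are estimated by~\eqref{eq:quadratic_estimator} and $\theta^{p,q}_{2,2}$ (of the form $\int\psi fg$) by~\eqref{eq:both_estimator}. Conditioning on the density-estimation half, $\psi$ is fixed and bounded, and I would invoke a $d$-dimensional extension of Laurent's analysis~\cite{laurent1996efficient}: the bias is $O(m^{-2s/d})$ --- here the subtracted $b_{k,k'}(\psi)$ term in~\eqref{eq:quadratic_estimator} is exactly what makes the bias depend on the smoothness $s$ of the density rather than on that of $\psi$ --- while the variance of the (degenerate) $U$-statistic is $O(1/n+m/n^2)$; with $m\asymp n^{2d/(4s+d)}$ these balance to $\EE|\hat\theta-\theta|=O(n^{-1/2}+n^{-4s/(4s+d)})$, and the same for the bilinear estimator. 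Hence $\EE|\Thatquad-T(p,q)|=O(n^{-1/2}+n^{-4s/(4s+d)}+n^{-3s/(2s+d)})$, and a one-line exponent comparison --- $4s/(4s+d)\ge 1/2$ iff $s\ge d/4$, and $4s/(4s+d)\ge 3s/(2s+d)$ iff $s\le d/4$ --- shows the middle term is always dominated, giving $O(n^{-1/2}+n^{-3s/(2s+d)})$. The residual claims are then immediate: $n^{-3s/(2s+d)}\le n^{-1/2}$ iff $s\ge d/4$ and $n^{-2s/(2s+d)}\le n^{-1/2}$ iff $s\ge d/2$, and at $s=d/4$ the quadratic estimator sits exactly at the critical bandwidth, where the variance/bias balance only holds up to the logarithmic (equivalently $n^{\epsilon}$) slack recorded in the footnote.

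\textbf{Main obstacle.} The delicate step is the quadratic/bilinear analysis: porting Laurent's one-dimensional projection-estimator bounds to $[0,1]^d$ with a general orthonormal basis, and --- because the weight $\psi$ is itself random through the KDEs --- verifying uniformly over realizations of the density-estimation half that $\psi$ stays bounded (from clipping) and that its roughness does not leak into the leading $m^{-2s/d}$ bias term. Everything else is routine bias--variance bookkeeping for kernel and projection estimators.
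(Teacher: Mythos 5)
Your proposal is correct and follows essentially the same route as the paper: von Mises expansion plus triangle inequality, the truncated/mirrored KDE rate (Theorem~\ref{thm:kde_rate}) for the remainders $R_a$, a conditional sample-mean variance bound for the linear terms, and Laurent-style projection estimators for the quadratic and bilinear terms (Theorem~\ref{thm:quadratic_rate} and Theorem~\ref{thm:bilinear_sub}), with the same exponent comparison showing the $n^{-4s/(4s+d)}$-type term is always dominated.

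The one point where you diverge from the paper, and where your account is not quite right, is the bias of the projection estimators and the resulting footnote at $s=d/4$. The bound $\sup_{k\notin M}|c_k|^4$ in the Laurent-type analysis is a Sobolev-ellipsoid statement, and the paper only gets it for H\"{o}lder densities via the embedding $\Sigma(s,L)\subset\Wcal(s',L')$ with $s'<s$ (Lemma~\ref{lem:sob_holder}); hence the quadratic/bilinear rate is $n^{-4s'/(4s'+d)}$ for any $s'<s$, not $n^{-4s/(4s+d)}$ as you claim. This loss is immaterial for the three displayed bounds (for $s\ne d/4$ one can choose $s'<s$ close enough to $s$ that the term remains dominated), but it is exactly the source of the $n^{-1/2+\epsilon}$ statement at $s=d/4$, with the constant blowing up as $\epsilon\to 0$; your attribution of that slack to a logarithmic ``critical bandwidth'' effect is not the mechanism here, and indeed your own claimed bound would give exactly $n^{-1/2}$ at $s=d/4$, contradicting the footnote. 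If you either prove the coefficient-decay bound directly for periodic H\"{o}lder densities or insert the $s'<s$ embedding step, the rest of your argument goes through as written.
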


Before commenting on the upper bound and presenting some consequences, we address the question of \emph{statistical efficiency}.
Clearly $\Thatplugin$ and $\Thatlin$ are not rate-optimal, since $\Thatquad$ achieves a faster rate of convergence, but is $\Thatquad$ minimax optimal?
We make some progress in this direction with a minimax lower bound on the rate of convergence.

\begin{theorem}
\label{thm:lower_bound}
Under Assumptions~\ref{ass:smoothness} and~\ref{ass:bound}, as long as both $\alpha,\beta \ne 0, 1$, then with $\gamma_\star = \min\{4s/(4s+d), 1/2\}$ and for any $\epsilon > 0$:
\begin{align*}
\liminf_{n \rightarrow \infty} \inf_{\wT_n} \sup_{p,q\in \Sigma(s,L)} \PP_{p,q}^{n}\left[ |\wT_n - T| \ge \epsilon n^{-\gamma_\star} \right] \ge c> 0.
\end{align*}
\end{theorem}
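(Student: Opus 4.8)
The plan is a two‑regime Le Cam argument. Throughout I would freeze $q\equiv 1$, the unique constant density on $[0,1]^d$, which lies in $\Sigma(s,L)$ and in $[\kappa_l,\kappa_u]$ since any density on the unit cube has average value $1$; then $T(p,1)=\int p^\alpha$ and I would use only that $\alpha\neq 0,1$ (the case exploiting $\beta\neq 0,1$ is symmetric, perturbing $q$ instead). Since $\gamma_\star=\min\{4s/(4s+d),1/2\}$, it suffices to prove two lower bounds and keep whichever dominates: an $n^{-1/2}$ bound valid for all $s$, and an $n^{-4s/(4s+d)}$ bound, which is the binding one exactly when $s\le d/4$.

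For the $n^{-1/2}$ part I would run a standard two‑point construction around a \emph{non‑uniform} base density. Fix $p_0\in\Sigma(s,L)$, nonconstant, with $\kappa_l<p_0<\kappa_u$ and $s$‑Hölder seminorm strictly below $L$. Because $\alpha\neq 1$, $p_0^{\alpha-1}$ is nonconstant, so there is a smooth $g$ with $\int g=0$ and $\int p_0^{\alpha-1}g\neq 0$. Put $p_\delta=p_0+\delta g$ with $\delta=c_1 n^{-1/2}$; for $n$ large $p_\delta$ still lies in the class and stays bounded below. A Gateaux expansion of $T$ gives $|T(p_\delta,1)-T(p_0,1)|=|\alpha\delta\int p_0^{\alpha-1}g|+O(\delta^2)\gtrsim n^{-1/2}$, while $H^2(p_\delta,p_0)\le \tfrac{1}{4\kappa_l}\|p_\delta-p_0\|_2^2\lesssim \delta^2=c_1^2/n$, so $H^2(P_{p_\delta}^n,P_{p_0}^n)$ and hence the total variation stay bounded away from their maxima for $c_1$ small. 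Le Cam's two‑point lemma then delivers the $n^{-1/2}$ bound.

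The heart of the proof is the $n^{-4s/(4s+d)}$ bound via a mixture (fuzzy hypotheses). Tile $[0,1]^d$ by $m\asymp n^{2d/(4s+d)}$ cubes of side $h=m^{-1/d}\asymp n^{-2/(4s+d)}$, discard the $O(m^{1-1/d})$ cubes meeting $\partial[0,1]^d$, and on each remaining cube $C_j$ (center $x_j$) set $g_j(x)=L'h^s\, g((x-x_j)/h)$, where $g$ is a fixed $C^\infty$ function supported in $(-1/2,1/2)^d$, odd in its first coordinate (so $\int g^{2k+1}=0$ for all $k\ge 0$), and $L'$ is small enough that $p_\omega:=1+\sum_j\omega_j g_j$ satisfies $\kappa_l\le p_\omega\le\kappa_u$ and $p_\omega\in\Sigma(s,L)$ for every $\omega\in\{-1,+1\}^m$ (each bump adds $L'h^s\cdot h^{-s}=L'$ to the seminorm, and disjoint interior supports mean these contributions do not accumulate); note $\int p_\omega=1$. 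Take the null $p\equiv 1$ and the alternative the uniform mixture over $\{p_\omega\}$. Two computations finish the argument. (i) \emph{Functional gap}: on $C_j$, $p_\omega=1+\omega_j g_j$, and expanding $(1+u)^\alpha$ with the odd moments of $g$ killing every odd power of $\omega_j g_j$ gives $T(p_\omega,1)=1+\sum_{k\ge 1}\binom{\alpha}{2k}\sum_j\int g_j^{2k}$, which is \emph{independent of $\omega$}; its leading term is $\tfrac{\alpha(\alpha-1)}{2}m\|g_1\|_2^2=\tfrac{\alpha(\alpha-1)}{2}(L')^2\|g\|_2^2\,h^{2s}$ (using $mh^d=1$), the rest being $O(h^{4s})$, so since $\alpha\neq 0,1$ the gap is a nonzero constant times $h^{2s}\asymp n^{-4s/(4s+d)}=n^{-\gamma_\star}$ in this regime. (ii) \emph{Indistinguishability}: with $v:=\|g_1\|_2^2=(L')^2\|g\|_2^2 h^{2s+d}$, and using $\int g_j=0$, disjoint supports and $p\equiv 1$, one gets $\int p_\omega p_{\omega'}=1+v\langle\omega,\omega'\rangle$, hence $1+\chi^2(\bar P_1^n\|P_0^n)=\EE_{\omega,\omega'}(1+v\langle\omega,\omega'\rangle)^n\le\EE_{\omega,\omega'}e^{nv\langle\omega,\omega'\rangle}=(\cosh nv)^m\le e^{mn^2v^2/2}$, and $mn^2v^2\asymp n^2 h^{4s+d}\asymp 1$ by the choice of $h$; tuning $L'$ keeps $\chi^2$, hence $\mathrm{TV}(\bar P_1^n,P_0^n)$, below a fixed constant $<1$. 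The usual reduction of estimation to testing (the balls of radius $\tfrac12|T(p_\omega,1)-1|$ around the two functional values are disjoint) then yields $\sup_{p,q}\PP_{p,q}^n[|\wT_n-T|\ge\epsilon n^{-\gamma_\star}]\ge c>0$ for a small fixed $\epsilon$ and all large $n$, uniformly in $\wT_n$; smaller $\epsilon$ only enlarges the event, so the $\liminf$ of the infimum over estimators is bounded below, as claimed.

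The main obstacle is computation (i): one must verify that the \emph{linear} response of $T$ to the bumps vanishes (it does, because the bumps are orthogonal to constants and the gradient of $p\mapsto\int p^\alpha$ at the uniform density is constant), so the leading effect is quadratic and of the exact order $h^{2s}$; that this quadratic effect does not fluctuate with the random signs $\omega$ (handled by the vanishing odd moments, so the alternative has a single well‑defined functional value); and that the Hölder, boundedness and periodicity constraints hold simultaneously for every $p_\omega$ (compactly supported, disjoint, interior bumps of small amplitude, with Assumption~\ref{ass:bound} also underpinning the uniform control of the $(1+u)^\alpha$ expansion and of the Hellinger distances in the $n^{-1/2}$ step). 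The rest — the $\chi^2$ estimate and the testing reduction — is routine, and combining the two regimes gives the bound at rate $n^{-\gamma_\star}$.
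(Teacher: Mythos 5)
Your proposal is correct, but it reaches the theorem by a genuinely different route in both regimes, so a comparison is worthwhile. For the $n^{-4s/(4s+d)}$ part, the paper perturbs a general interior pair $(p,q)$ and must impose the extra orthogonality $\int_{R_j} p^{\alpha-1}q^\beta u_j = 0$ on each bump so that the first-order von Mises term cancels; it then only needs the one-sided separation $T(p,q) - T(g_\lambda,q) \ge c_1 K^2$ and controls indistinguishability through the Hellinger bound of Theorem~\ref{thm:bm_hellinger} combined with Lemma~\ref{lem:lecam}. You instead center the construction at $p=q\equiv 1$, where the Gateaux derivative of $p \mapsto \int p^\alpha q^\beta$ is constant, so mean-zero bumps kill the linear term automatically; your additional vanishing-odd-moments trick makes $T(p_\omega,1)$ literally constant in $\omega$, and you bound the distance by the Ingster-type $\chi^2$ computation $(\cosh(nv))^m \le e^{mn^2v^2/2}$. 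Your version is more self-contained (no appeal to the Birg\'{e}--Massart Hellinger lemma) and the constancy of the alternative value is a clean, if stronger-than-necessary, way to meet the separation hypothesis of the testing reduction; the paper's version is more flexible in that it perturbs around arbitrary $(p,q)$, which is what its general-purpose Lemma~\ref{lem:lecam} is built for. For the $n^{-1/2}$ part the divergence is larger: you give the classical two-point argument, perturbing a \emph{nonconstant} base density $p_0$ by $\delta g$ with $\int g = 0$, $\int p_0^{\alpha-1} g \ne 0$ and $\delta \asymp n^{-1/2}$, exploiting $\alpha \ne 0,1$ so that the first-order term does not vanish (note that around the uniform density it would vanish, which is presumably why a naive two-point attempt there fails); this works for every $s$ and is elementary. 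The paper instead proves an $n^{-1/2}$ lower bound for a class of quadratic functionals (Theorem~\ref{thm:linear_lower_bound}) and then derives a contradiction by converting a hypothetical $o(n^{-1/2})$ estimator of $T$ into a too-good estimator of that quadratic functional using the KDE rate of Theorem~\ref{thm:kde_rate}; that reduction is the "novel technique" the authors highlight, but it is only needed for $s>d/4$, and your direct argument suffices for the statement as given. The remaining details you flag (interior, compactly supported bumps so that each $p_\omega$ stays in the periodic H\"{o}lder class and within $[\kappa_l,\kappa_u]$, the $\mathbf{1}+x \le e^x$ step being legitimate because $vm \to 0$, and the irrelevance of the $Y$-sample since $q$ is common to both hypotheses) all check out.
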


For a pictorial understanding of the rates of convergence and the lower bound, we plot the exponent $\gamma$ for each of the terms in the von Mises expansion as a function of the smoothness $s$ in Figure~\ref{fig:rate_plot}.
The estimator $\Thatquad$ has three terms, with rates $n^{-1/2}, n^{\frac{-4s}{4s+d}}$, and $n^{\frac{-3s}{2s+d}}$ respectively which achieves the parametric rate $n^{-1/2}$ when $s > d/4$ and is $n^{\frac{-3s}{2s+d}}$ in the low-smoothness regime. 
The linear estimator only achieves the parametric rate while $s > d/2$ while $\Thatplugin$ only approaches the parametric rate as $s \rightarrow \infty$. 
Consequently these estimators are statistically inferior to $\Thatquad$.
In the last plot we show a lower bound on the rate of convergence from Theorem~\ref{thm:lower_bound}, which is $n^{\frac{-4s}{4s+d}}$ when $s \le d/4$ and $n^{-1/2}$ when $s > d/4$. 

The lower bound rate deviates slightly from the upper bound for $\Thatquad$ in the low-smoothness regime, showing that $\Thatquad$ is also not minimax-optimal uniformly over $s$. 
This sub-optimality appears even when estimating integral functionals of a single density~\cite{birge1995estimation}.
In that context, achieving the optimal rate of convergence in the non-smooth regime involves further correction by the third order term in the expansion~\cite{kerkyacharian1996estimating}.
It seems as if the same ideas can be adapted to the two-density setting, although we believe computational considerations would render these estimators impractical.

In the smooth regime ($s > d/4$) we see that the parametric $n^{-1/2}$ rate is both necessary and sufficient. 
This critical smoothness index of $s = d/4$ was also observed in the context of estimating integral functionals of densities~\cite{birge1995estimation,laurent1996efficient}.

When $s = d/4$, the quadratic estimator achieves $n^{-1/2+\epsilon}$ rate for any $\epsilon > 0$, where the constant is exponential in $\epsilon$, and thus deviates slightly from the lower bound. 
This phenomenon arises from using the projection-based estimators for the quadratic term.
Establishing the rate of convergence for these estimators requires working in a Sobolev space rather than the H\"{o}lder class.
In translating back to the H\"{o}lderian assumption, we lose a small factor in the smoothness, since the Sobolev space only contains the H\"{o}lder space if the former is less smooth than the latter. 

The lower bound on estimating integral functionals in Theorem~\ref{thm:lower_bound} almost immediately implies a lower bound for Tsallis-$\alpha$ divergences.
For R\'{e}nyi-$\alpha$, some care must be taken in the translation, but we are able to prove the same lower bound as long as $D_\alpha(p,q)$ is bounded. 
The idea behind these extensions is to translate an estimator $\hat{D}$ for the divergence into an estimator $\hat{T}$ for $T(p,q)$.
We then argue that if $\hat{D}$ enjoyed a fast rate of convergence, so would $\hat{T}$, which leads to a contradiction of the theorem. 
Unfortunately, Theorem~\ref{thm:lower_bound} does not imply a lower bound for $L_2^2$ divergence, since we are unable to handle the $\alpha=\beta=1$ case, which is exactly the cross term in the $L_2^2$-divergence.

Our proof requires that both $\alpha, \beta$ are both not $0$ or $1$, which is not entirely surprising.
If $\alpha=\beta=0$, $T(p,q)$ is identically zero, so one should not be able to prove a lower bound.
Similarly $\alpha=0, \beta=1$ or vice versa, $T(p,q) = 1$ for any $p,q$, so we have efficient, trivial estimators.

The only non-trivial case is $\alpha=\beta=1$ and we conjecture that the $n^{-\gamma_\star}$ rate is minimax optimal there, although our proof does not apply. 
Our proof strategy involves fixing $q$ and perturbing $p$, or vice versa.
In this approach, one can view the optimal estimator as having knowledge of $q$, so if $\alpha=1$, the sample average is a $n^{-1/2}$-consistent estimator, which prevents us from achieving the $n^{-\gamma_\star}$ rate.
We believe this is an artifact of our proof, and by perturbing both $p$ and $q$ simultaneously, we conjecture that one can prove a minimax lower bound of $n^{-\gamma_\star}$ when $\alpha=\beta=1$.


\subsection{Some examples}
We now show how an estimate of $T(p,q)$ can be used to estimate the divergences mentioned above. 
Plugging $\Thatquad$ into the definition of R\'{e}nyi-$\alpha$ and Tsallis-$\alpha$ divergences, we immediately have the following corollary:
\begin{corollary}[Estimating R\'{e}nyi-$\alpha$, Tsallis-$\alpha$ divergences]
\label{cor:renyi}
Under Assumptions~\ref{ass:smoothness}-~\ref{ass:hyper}, as long as $D_\alpha(p,q) \ge c > 0$ for some constant $c$, the estimators:
\begin{align*}
\hat{D}_\alpha &= \frac{1}{\alpha-1}\log(\Thatquad), \qquad 
\hat{T}_\alpha = \frac{1}{\alpha-1}(\Thatquad - 1),
\end{align*}
both with $\beta=1-\alpha$, satisfy:
\begin{align}
\EE_{X_1^n, Y_1^n} |\hat{D}_\alpha - D_\alpha(p,q)| \le c\left(n^{-1/2} + n^{\frac{-3s}{2s+d}}\right)\\
\EE_{X_1^n, Y_1^n} |\hat{T}_\alpha - T_\alpha(p,q)| \le c\left(n^{-1/2} + n^{\frac{-3s}{2s+d}}\right)
\end{align}
\end{corollary}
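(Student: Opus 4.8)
The plan is to reduce both claims to Theorem~\ref{thm:upper_bound}, which controls $\EE|\Thatquad - T(p,q)|$ for $T(p,q) = \int p^\alpha q^{1-\alpha}d\mu$ (the hypotheses of that theorem hold here with $\beta = 1-\alpha$, and Assumptions~\ref{ass:smoothness}--\ref{ass:hyper} are in force). The Tsallis case is then immediate, since $\hat{T}_\alpha - T_\alpha(p,q) = \frac{1}{\alpha-1}\bigl(\Thatquad - T(p,q)\bigr)$; taking absolute values and expectations and absorbing the factor $1/|\alpha-1|$ into the constant gives the stated bound directly.

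For the R\'enyi case we have $\hat{D}_\alpha - D_\alpha(p,q) = \frac{1}{\alpha-1}\bigl(\log\Thatquad - \log T(p,q)\bigr)$, so the only real work is to pass from $\Thatquad - T(p,q)$ to $\log\Thatquad - \log T(p,q)$. The key observation is that under Assumption~\ref{ass:bound} the estimand $T(p,q) = \int_{[0,1]^d} p^\alpha q^{1-\alpha}d\mu$ lies in a fixed compact interval $[a,b]\subset(0,\infty)$ with endpoints depending only on $\alpha,\kappa_l,\kappa_u$ (for $\alpha\in(0,1)$ one can take $a=\kappa_l$, $b=\kappa_u$; for $\alpha>1$, $a=\kappa_l^\alpha\kappa_u^{1-\alpha}$, $b=\kappa_u^\alpha\kappa_l^{1-\alpha}$), because the integrand is pointwise in this interval and the domain has unit Lebesgue measure. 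Since the endpoints are known, I would replace $\Thatquad$ by its projection onto $[a,b]$, which cannot increase $|\Thatquad - T(p,q)|$ and forces both arguments of $\log$ into $[a,b]$; the mean value theorem then yields $|\log\Thatquad - \log T(p,q)| \le a^{-1}|\Thatquad - T(p,q)|$, and a second invocation of Theorem~\ref{thm:upper_bound} completes the proof after absorbing the constants $1/|\alpha-1|$ and $1/a$.

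The one delicate point is exactly this: ensuring $\log$ is evaluated at quantities bounded away from zero, which the clipping step dispatches deterministically. (If one insisted on not clipping, the alternative is to condition on the event $\{\Thatquad < a/2\}$, bound its probability by $O(\EE|\Thatquad - T(p,q)|/a)\to 0$ via Markov's inequality and Theorem~\ref{thm:upper_bound}, and treat that event separately — strictly messier, with no gain.) I would note that the hypothesis $D_\alpha(p,q)\ge c>0$ is not actually needed for the logarithm to be Lipschitz on the range of $T$ — boundedness already guarantees $T(p,q)\ge a>0$ — but it records that the estimand is non-degenerate and mirrors the ``$D_\alpha$ bounded'' requirement used in translating Theorem~\ref{thm:lower_bound} to R\'enyi divergence.
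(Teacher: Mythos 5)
Your proof is correct, and its skeleton --- reduce both bounds to Theorem~\ref{thm:upper_bound} with $\beta=1-\alpha$, treat the Tsallis case as an exact rescaling, and pass from $\Thatquad-T$ to $\log\Thatquad-\log T$ via a Lipschitz bound on the logarithm --- is the same reduction the paper uses. The difference is in how the logarithm is tamed. The paper's own argument simply writes $\EE|\hat{D}_\alpha - D_\alpha| \le \frac{1}{|\alpha-1|}\EE\bigl[\log(1+|T-\hat{T}|/T)\bigr] \le \frac{1}{|\alpha-1|}\,\EE|T-\hat{T}|/T(p,q)$ under the condition that $T(p,q)$ is bounded away from zero; it never addresses the event that $\Thatquad$ itself is close to zero or negative, where $|\log\Thatquad|$ is unbounded (or undefined), so the absolute value is handled only from one side. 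Your clipping of $\Thatquad$ onto the known interval $[a,b]$ determined by $\alpha,\kappa_l,\kappa_u$ (a valid containment, since the integrand $p^\alpha q^{1-\alpha}$ is pointwise in $[a,b]$ and the domain has unit measure) dispatches exactly this issue deterministically, at the mild cost of proving the bound for a clipped variant of the estimator rather than the literal $\frac{1}{\alpha-1}\log(\Thatquad)$ in the statement --- a modification the paper implicitly needs anyway, and which the authors' own truncated-KDE philosophy makes natural. Your parenthetical Markov-inequality alternative would, as you suspect, not suffice on its own, since on the bad event the loss is not bounded without some truncation. Finally, your observation about the hypothesis is consistent with the paper: the proof actually uses $T(p,q)\ge c>0$, which under Assumption~\ref{ass:bound} holds automatically with $c=a$, so the stated condition $D_\alpha(p,q)\ge c$ is not the operative one.
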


As we mentioned before, when $\beta = 1-\alpha$, for both the linear and quadratic estimators, one can omit the term $T(\phat, \qhat)$ as the constants $C_1,C_2 = 0$.
However, $\Thatquad$ is still somewhat impractical due to the numeric integration in the quadratic terms.
On the other hand, the linear estimator $\Thatlin$ is computationally very simple, although its convergence rate is $O(n^{-1/2} + n^{\frac{-2s}{2s+d}})$.


For the $L_2^2$ divergence, instead of applying Theorem~\ref{thm:upper_bound} directly, it is better to directly use the quadratic and bilinear estimators for the terms in the factorization. 
Specifically, let $\theta_p = \int p^2$ and define $\hat{\theta}_p$ by Equation~\ref{eq:quadratic_estimator} with $\psi(x) = 1$.
Define $\theta_q, \hat{\theta}_q$ analogously and finally define $\theta_{p,q} = 2 \int pq$ with $\hat{\theta}_{p,q}$ given by Equation~\ref{eq:both_estimator} where $\psi(x) = 2$. 
As a corollary of Theorem~\ref{thm:quadratic_rate} below, we have:
\begin{corollary}[Estimating $L_2^2$-divergence]
\label{cor:l2}
Under Assumptions~\ref{ass:smoothness}-~\ref{ass:hyper}, the estimator $\hat{L} = \hat{\theta}_p + \hat{\theta}_q - \hat{\theta}_{p,q}$ for $L_2^2(p,q)$ satisfies:
\begin{eqnarray}
\EE_{X_1^n, Y_1^n}\left[|\hat{L}- L_2^2(p,q)|\right] = O(n^{-1/2} + n^{\frac{-4s}{4s+d}}).
\end{eqnarray}
\end{corollary}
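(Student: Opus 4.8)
The plan is to exploit the elementary factorization of the $L_2^2$ divergence and reduce the claim to the convergence rates already established for the projection-based estimators that define $\hat L$. Writing
\[
L_2^2(p,q) = \int (p(x)-q(x))^2 d\mu(x) = \theta_p + \theta_q - \theta_{p,q}
\]
with $\theta_p = \int p^2$, $\theta_q = \int q^2$ and $\theta_{p,q} = 2\int pq$, we see that $\hat L - L_2^2(p,q) = (\hat\theta_p - \theta_p) + (\hat\theta_q - \theta_q) - (\hat\theta_{p,q} - \theta_{p,q})$. By the triangle inequality it therefore suffices to bound $\EE|\hat\theta_p - \theta_p|$, $\EE|\hat\theta_q - \theta_q|$ and $\EE|\hat\theta_{p,q} - \theta_{p,q}|$ individually; note that, unlike the general functional $T(p,q)$, this route uses only the projection estimators and no kernel density estimate.

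For the two quadratic terms, $\hat\theta_p$ is Laurent's estimator (Equation~\ref{eq:quadratic_estimator}) with $\psi \equiv 1$; since $b_{k,k'}(1) = \int \phi_k\phi_{k'} = \delta_{k,k'}$ by orthonormality, it collapses to the classical quadratic-functional estimator $\frac{1}{n(n-1)}\sum_{k\in M}\sum_{i\ne j}\phi_k(X_i)\phi_k(X_j)$, whose expectation is $\|p_M\|_2^2$ so that the bias is $\|p - p_M\|_2^2 = O(m^{-2s/d})$ and whose variance is $O(1/n + m/n^2)$. With $m \asymp n^{2d/(4s+d)}$ these balance to give $\EE|\hat\theta_p - \theta_p| = O(n^{-1/2} + n^{-4s/(4s+d)})$, and identically for $\hat\theta_q$; this is precisely the content of Theorem~\ref{thm:quadratic_rate}, with boundedness (Assumption~\ref{ass:bound}) entering through the moment estimates for the degenerate $U$-statistic terms in the variance.

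For the bilinear term, $\hat\theta_{p,q}$ is the estimator of Equation~\ref{eq:both_estimator} with $\psi \equiv 2$, i.e.\ $\frac{2}{n}\sum_j \hat p(Y_j)$ where $\hat p = \sum_{k\in M}\hat a_k\phi_k$ is the truncated Fourier estimate of $p$. Its bias equals $2\int (p - p_M)\,q$, which by orthogonality of $p - p_M$ to $\mathrm{span}\{\phi_k : k\in M\}$ equals $2\int (p - p_M)\,(q - q_M)$ and is hence $O(\|p - p_M\|_2\|q - q_M\|_2) = O(m^{-2s/d})$; its variance is $O(1/n + m/n^2)$, the $m/n^2$ contribution using boundedness to bound $\EE[K_M(X,Y)^2]$ by $O(m)$, where $K_M$ is the projection kernel. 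The same choice of $m$ yields $\EE|\hat\theta_{p,q} - \theta_{p,q}| = O(n^{-1/2} + n^{-4s/(4s+d)})$, and summing the three bounds gives the claim.

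The substantive difficulty lies entirely in Theorem~\ref{thm:quadratic_rate}, not in the corollary. The main obstacle there is controlling the bias of the projection estimators: the Fourier-truncation arguments are natural in a Sobolev ball, so one must first embed the periodic H\"{o}lder class $\Sigma(s,L)$ into a Sobolev space, which costs an arbitrarily small amount of smoothness (the source of the $n^{-1/2+\epsilon}$ behaviour at $s = d/4$), and then one must control the variances of the associated degenerate $U$-statistics using the upper bound $\kappa_u$ on the densities. Once that theorem is available, the corollary follows from the displayed decomposition and the triangle inequality with no further loss.
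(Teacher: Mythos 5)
Your proposal is correct and follows the same route as the paper: decompose $L_2^2(p,q) = \int p^2 + \int q^2 - 2\int pq$, estimate each term with the projection-based estimators of Equations~\ref{eq:quadratic_estimator} and~\ref{eq:both_estimator}, and conclude via the triangle inequality and Theorem~\ref{thm:quadratic_rate} (the paper's proof is exactly this one-line reduction). The bias/variance details you sketch for the quadratic and bilinear terms match the paper's proof of Theorem~\ref{thm:quadratic_rate} and Theorem~\ref{thm:bilinear_sub}, so no gap remains.
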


Notice that for both quadratic terms, the $b_{i,i'}$ terms in Equation~\ref{eq:quadratic_estimator} are $\mathbf{1}[i = i']$ since $\psi(x) = 1$ and since $\{\phi_k\}$ is an orthonormal collection.
Thus the estimator $\hat{L}$ is computationally attractive, as numeric integration is unnecessary. 
In addition, we do not need KDEs, removing the need for bandwidth selection, although we still must select the basis functions used in the projection.


\section{Proof Sketches}
\label{sec:proofs}
\subsection{Upper Bound}
The rates of convergence for $\Thatplugin, \Thatlin$, and $\Thatquad$ come from analyzing the kernel density estimators and the estimators for $\hat{\theta}_{(\cdot), (\cdot)}^{(\cdot)}$. 
Recall that we must use truncated KDEs $\phat, \qhat$ with boundary correction, so standard analysis does not immediately apply.
However, we do have the following theorem establishing that truncation does not affect the rate, which generalizes previous results to high dimension~\cite{birge1995estimation}.
\begin{theorem}
\label{thm:kde_rate}
Let $f$ be a density satisfying Assumptions~\ref{ass:smoothness}-~\ref{ass:hyper} and suppose we have $X_1^n \sim f$.
The truncated KDE $\hat{f}_n$ satisfies:
\begin{align*}
\EE_{X_1^n} \|\hat{f}_n - f\|_p^p \le C n^{\frac{-ps}{2s+d}}.
\end{align*}
\end{theorem}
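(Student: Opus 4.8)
The plan is to reduce the statement to the classical bias--variance analysis of a kernel density estimator, extended to the $L_p$ risk, to general dimension, and to the boundary-corrected, truncated version used here. Write $\tilde{f}_n(x) = \frac{1}{nh^d}\sum_{i=1}^n K\!\left(\frac{x-X_i}{h}\right)$ for the (mirrored) KDE before clipping, so that $\hat{f}_n$ is the pointwise projection of $\tilde{f}_n$ onto the interval $[\kappa_l,\kappa_u]$. The first step is to dispose of the truncation: by Assumption~\ref{ass:bound} we have $f(x)\in[\kappa_l,\kappa_u]$ for every $x$, and projection onto an interval is nonexpansive, so $|\hat{f}_n(x)-f(x)| \le |\tilde{f}_n(x)-f(x)|$ pointwise, whence $\EE\|\hat{f}_n-f\|_p^p \le \EE\|\tilde{f}_n-f\|_p^p$. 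It therefore suffices to bound the $L_p$ risk of the untruncated estimator.

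Next I would fix $x$ in the interior of $[0,1]^d$ and split $\tilde{f}_n(x)-f(x) = \big(\tilde{f}_n(x)-\EE\tilde{f}_n(x)\big) + \big(\EE\tilde{f}_n(x)-f(x)\big)$. For the bias, Taylor-expand $f$ about $x$ to order $\lfloor s\rfloor$; the polynomial part is annihilated by the vanishing-moment conditions of Assumption~\ref{ass:kernel} (the order-$0$ term surviving as $f(x)$ since $\int K=1$), and the Hölder bound of Assumption~\ref{ass:smoothness} on the top-order derivatives controls the remainder, giving $|\EE\tilde{f}_n(x)-f(x)|\le C_1 h^{s}$ with $C_1$ depending only on $L$, $d$, $K$. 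For the stochastic term, write it as $\frac1n\sum_i W_i$ with $W_i$ i.i.d., mean zero, $|W_i|\le C h^{-d}$ and $\EE W_i^2 \le C\kappa_u h^{-d}$ (using $\int K^2<\infty$ and boundedness of $f$). Rosenthal's inequality then gives, for $p\ge 2$,
\[
\EE\left|\tfrac1n\sum_i W_i\right|^{p} \le C_p\left((nh^d)^{-p/2} + (nh^d)^{-(p-1)}\right) \le C_p' (nh^d)^{-p/2},
\]
the last inequality because $nh^d\to\infty$ and $p-1\ge p/2$. Combining, $\EE|\tilde{f}_n(x)-f(x)|^p \le C\big(h^{ps} + (nh^d)^{-p/2}\big)$, and integrating over the bounded domain preserves this. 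Plugging in $h\asymp n^{-1/(2s+d)}$ from Assumption~\ref{ass:hyper} yields $h^{ps}\asymp n^{-ps/(2s+d)}$ and $(nh^d)^{-p/2}\asymp n^{-ps/(2s+d)}$, which balance; the case $p<2$ (in particular $p=1$) follows from the $p=2$ bound by Jensen's inequality on the bounded domain.

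The remaining and most delicate point is the behavior near $\partial[0,1]^d$, where a naive KDE integrates only part of the kernel mass and incurs $O(h)$ rather than $O(h^s)$ bias. Here I would invoke the mirroring construction of Liu et al.~\cite{liu2012exponential}: reflecting the sample across each face of the cube extends the density to a neighborhood of $[0,1]^d$, and because the derivatives of any $f\in\Sigma(s,L)$ are periodic (hence the one-sided derivatives match across faces), the reflected extension retains $s$ orders of Hölder smoothness, so the order-$\lfloor s\rfloor$ Taylor/vanishing-moment argument applies verbatim at boundary points with the same constant up to a factor depending only on $d$; the variance bound is unaffected since each $X_i$ contributes only $O_d(1)$ reflected copies within distance $h$ of a given $x$, so $|W_i|$ and $\EE W_i^2$ pick up only dimension-dependent constants, and the $W_i$ remain independent across $i$. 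This uniformity-up-to-the-boundary check, together with verifying that the mirrored estimator remains a legitimate density estimator, is the step I expect to require the most care; but it is essentially the argument of the cited works lifted from $d=1$ to general $d$, which is precisely the generalization being claimed. Assembling the pieces --- truncation domination, the interior bound, the boundary bound, and the choice of $h$ --- yields $\EE\|\hat{f}_n-f\|_p^p \le C n^{-ps/(2s+d)}$.
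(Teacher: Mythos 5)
Your proposal is correct in substance and follows the same bias--variance skeleton as the paper's proof (Taylor expansion against the vanishing-moment kernel for the bias, a $p$-th moment bound of $(nh^d)^{-p/2}$ for the stochastic term, and the bandwidth $h\asymp n^{-1/(2s+d)}$ to balance them), but it differs in three places worth flagging. First, truncation: the paper's truncated KDE is \emph{not} the pointwise projection onto $[\kappa_l,\kappa_u]$; it keeps $\tilde f(x)$ when $\tilde f(x)\in[\kappa_l-\epsilon,\kappa_u+\epsilon]$ and otherwise substitutes a fixed bounded function $f_0(x)$, and the paper controls the substitution event via Bernstein's inequality, showing its probability is exponentially small. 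Your nonexpansiveness argument is valid for the clipped estimator you define, and it even adapts to the paper's estimator with one extra observation (on the bad event $|\tilde f(x)-f(x)|\ge\epsilon$ while $|f_0(x)-f(x)|\le\kappa_u-\kappa_l$, so the truncated error is pointwise dominated by $\frac{\kappa_u-\kappa_l}{\epsilon}|\tilde f(x)-f(x)|$), but as written it does not literally apply to the estimator the theorem refers to. Second, the stochastic term: you use Rosenthal's inequality where the paper bounds $\EE\bigl|\frac1n\sum_i\eta_i(x)\bigr|^p$ by Cauchy--Schwarz against the $2p$-th moment and a combinatorial expansion; both give $(nh^d)^{-p/2}$, and your route is cleaner, covers non-integer $p\ge 2$, and your Jensen step for $p<2$ is fine. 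Third, the boundary: the paper's proof silently ignores boundary bias (deferring to the mirroring remark in the main text), so your explicit treatment is more careful than the source; however, your specific justification --- that periodicity of the derivatives makes the \emph{reflected} (even) extension $s$-H\"{o}lder --- is not quite right, since reflection preserves higher-order smoothness only when odd-order normal derivatives vanish at the faces, whereas the periodicity assumption licenses the periodic (toroidal) extension; the clean fix is to wrap the kernel around the torus or argue via the periodic extension, after which your interior argument applies at every $x$. None of these points changes the final rate $n^{-ps/(2s+d)}$.
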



It is simple exercise to show that the linear terms can be estimated at $n^{-1/2}$ rate. 
As for the quadratic terms $\theta_{2,2}^p, \theta_{2,2}^q$, and $\theta_{2,2}^{p,q}$, we let $D$ index the multi-dimensional Fourier basis where each function $\phi_k(x) = e^{2\pi i k^Tx}$ is indexed by a $d$-dimensional integral vector (i.e. $k \in \ZZ^d$). 
We have:
\begin{theorem}
\label{thm:quadratic_rate}
Let $f,g$ be densities in $\Sigma(s,L)$ and let $\psi$ be a known bounded function. 
Let $\phi_k$ be the Fourier basis and $M$ the set of basis elements with frequency not exceeding $m_0^{1/d}$, where $m_0 \asymp n^{\frac{2d}{4s'+d}}$ for some $s' < s$.
If $\theta = \int \psi(x) f(x)g(x)$ and $\hat{\theta}$ is given by Equation~\ref{eq:both_estimator} or if $\theta = \int \psi(x)f^2(x)$ and $\hat{\theta}$ is given by Equation~\ref{eq:quadratic_estimator},
then:
\begin{eqnarray}
\EE[(\hat{\theta}-\theta)^2] \le O\left(n^{-1} + n^{\frac{-8s'}{4s'+d}}\right).
\end{eqnarray}
\end{theorem}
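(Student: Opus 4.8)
The plan is to perform the standard bias--variance decomposition $\EE[(\hat\theta - \theta)^2] = (\EE\hat\theta - \theta)^2 + \mathrm{Var}(\hat\theta)$ for each of the two estimators, exploiting the fact that both are (generalized) U-statistics or closely related to them. First I would analyze the bilinear estimator~\eqref{eq:both_estimator}. Writing $f = \sum_{k\in D} a_k \phi_k$, the population mean of $\frac1n\sum_j (\frac1n\sum_i \phi_k(X_i))\phi_k(Y_j)\psi(Y_j)$ is $\sum_{k\in M} a_k \int \phi_k(y) g(y)\psi(y)\,d\mu(y)$, so the bias is exactly the tail sum $\sum_{k\notin M} a_k b_k$ where $b_k = \int \phi_k g\psi$. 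Since $f,g\in\Sigma(s,L)\subset$ a Sobolev ball of order $s'$ (here is where I pass to the Sobolev scale, losing the $s\to s'$ factor flagged in the text after Theorem~\ref{thm:upper_bound}), Cauchy--Schwarz on the tail gives $|\mathrm{bias}| \lesssim (\sum_{k\notin M}|a_k|^2 |k|^{2s'})^{1/2}(\sum_{k\notin M}|b_k|^2|k|^{-2s'})^{1/2} \lesssim m_0^{-2s'/d}$, using that $\psi$ bounded keeps $g\psi$ in the relevant smoothness class and that $M$ contains all frequencies up to $m_0^{1/d}$. Squaring gives a bias$^2$ of order $m_0^{-4s'/d}$.

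For the variance of the bilinear estimator I would expand $\hat\theta$ as a sum over pairs $(i,j)$ and over $k\in M$, separate diagonal ($i=j$) from off-diagonal terms, and bound the variance of the resulting degree-$2$ polynomial in the samples. The dominant contributions are: a term of order $1/n$ from the "known-$f$" part $\frac1n\sum_j f(Y_j)\psi(Y_j)$ (this is just a sample mean of a bounded function, using Assumption~\ref{ass:bound} and boundedness of $\psi$); and a term coming from the fluctuation of $\hat f = \sum_{k\in M}\hat a_k\phi_k$, whose variance involves $\frac1n \sum_{k\in M}\mathrm{Var}(\phi_k(X))\cdot(\text{stuff})$, i.e. of order $|M|/n^2 = m_0/n^2$. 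With $m_0 \asymp n^{2d/(4s'+d)}$ one checks $m_0/n^2 = n^{-8s'/(4s'+d)}$ and $m_0^{-4s'/d} = n^{-8s'/(4s'+d)}$, so the two error sources balance and, together with the $1/n$ term, yield the claimed $O(n^{-1} + n^{-8s'/(4s'+d)})$. The boundedness assumptions are used throughout to control the $L_\infty$ norms of $\hat f$ and hence cross terms.

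For the quadratic estimator~\eqref{eq:quadratic_estimator}, the structure is a genuine U-statistic of order $2$, and the extra subtracted $b_{k,k'}$ term is precisely Laurent's bias-reduction device: I would show that $\EE\hat\theta = \sum_{k,k'\in M} a_k a_{k'} b_{k,k'}(\psi)$ minus a correction that cancels the $\sum_k a_k^2 b_{k,k}$ "self-interaction", leaving a bias equal to a tail sum that is again $O(m_0^{-2s'/d})$ in absolute value by the same Sobolev/Cauchy--Schwarz argument with $f=g$. The variance I would handle via the Hoeffding decomposition of the U-statistic: the linear (Hájek projection) part contributes $O(1/n)$ and the fully degenerate quadratic part contributes $O(|M|/n^2) = O(m_0/n^2)$, matching what we need; here one must check that introducing the $b_{k,k'}$ correction does not inflate the variance, which is Laurent's key observation and follows because the correction is a degenerate quadratic form with the same $|M|^{?}/n^2$ scaling. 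I would cite Laurent~\yrcite{laurent1996efficient} for the one-dimensional analogue and indicate that the multi-dimensional Fourier basis behaves identically once frequencies are ordered by magnitude.

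The main obstacle I anticipate is the variance bound for the degenerate quadratic parts of both estimators: one must verify that summing over the $d$-dimensional frequency set $M$ (rather than a one-dimensional one) still gives exactly $|M|/n^2$ and not something worse, which requires carefully using orthonormality of $\{\phi_k\}$ and the bound $\|\psi\|_\infty < \infty$ to control cross-frequency terms $\sum_{k,k'}\EE[\phi_k(X)\phi_{k'}(X)]\cdots$; and one must confirm that passing from the H\"older class $\Sigma(s,L)$ to a Sobolev ball of some $s'<s$ is legitimate and loses only an arbitrarily small amount of smoothness (this is the source of the $s=d/4$ boundary $\epsilon$-loss noted in the discussion after Theorem~\ref{thm:upper_bound}). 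The remaining cross-terms between the "known-$f$" sample-mean part and the estimation-error part are lower-order and I would dispatch them with Cauchy--Schwarz against the two dominant rates.
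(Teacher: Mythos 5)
Your overall architecture matches the paper's: a bias--variance split, U-statistic/Hoeffding-type variance bounds yielding $O(1/n)+O(m_0/n^2)$, Laurent's $b_{k,k'}$ correction to make the quadratic term's bias second order, the H\"older-to-Sobolev embedding with an arbitrarily small loss $s\to s'$ (the paper's Lemma~\ref{lem:sob_holder}), and the balancing $m_0\asymp n^{2d/(4s'+d)}$. The paper does this by proving a bilinear analogue of Laurent's theorem (Theorem~\ref{thm:bilinear_sub}) with $\mathrm{Bias}^2\le\|\psi\|_\infty^2L^2\sup_{k\notin M}|c_k|^4$ and $\Var\le\|\psi\|_\infty^2\|f\|_\infty\|g\|_\infty(2/n+(m+1)/n^2)$, and then instantiating both bounds with the Fourier basis.

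The genuine gap is your bias bound for the bilinear estimator. You split the tail sum $\sum_{k\notin M}a_kb_k$ (with $b_k$ the Fourier coefficients of $g\psi$) as $(\sum_{k\notin M}|a_k|^2|k|^{2s'})^{1/2}(\sum_{k\notin M}|b_k|^2|k|^{-2s'})^{1/2}$ and claim this is $O(m_0^{-2s'/d})$ ``using that $\psi$ bounded keeps $g\psi$ in the relevant smoothness class.'' That justification fails: the theorem only assumes $\psi$ bounded (and in the application $\psi$ is built from truncated KDEs, with no useful uniform Sobolev control), and multiplication by a merely bounded function does not keep $g\psi$ in a Sobolev ball of order $s'$. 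Without that, the only generic bound on your second factor is $\sup_{k\notin M}|k|^{-s'}\cdot\|g\psi\|_2\lesssim m_0^{-s'/d}$, so your argument as written gives bias $O(m_0^{-s'/d})$, i.e.\ squared bias $n^{-4s'/(4s'+d)}$, which is strictly weaker than the claimed $n^{-8s'/(4s'+d)}$ and would not support Theorem~\ref{thm:upper_bound}. The paper never asks for smoothness of $g\psi$: it writes the bias as $\int\psi\,(\Pcal_Mf-f)\,g$, pulls out $\|\psi\|_\infty$ so that only the coefficients $\alpha_k$ of $f$ and $\beta_k$ of $g$ appear, and then uses ellipsoid membership of \emph{both densities}, via $\sum_{k\notin M}|\alpha_k\beta_k|\le\tfrac12\sup_{k\notin M}|c_k|^2\sum_{k\notin M}(|\alpha_k|^2+|\beta_k|^2)/|c_k|^2\le L\sup_{k\notin M}|c_k|^2\asymp m_0^{-2s'/d}$; the second factor of smoothness comes from $g$, not from $\psi g$. (Your quadratic-term bias is fine: Laurent's correction makes it $-\int\psi(f-\Pcal_Mf)^2$, which is automatically second order in the tail with only $\|\psi\|_\infty$, consistent with your $f=g$ remark.) Replace your weighted Cauchy--Schwarz by this two-density ellipsoid argument and the rest of your proposal goes through essentially as in the paper.
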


Theorem~\ref{thm:upper_bound} follows from these results, the von Mises expansion, and the triangle inequality.



\subsection{Lower Bound}

The first part of the lower bound is an application of Le Cam's method and generalizes a proof of Birge and Massart~\yrcite{birge1995estimation}.
We begin by reducing the estimation problem to a simple-vs.-simple hypothesis testing problem.
We will use the squared Hellinger distance, defined as:
\begin{align*}
h^2(p,q) = \int \left(\sqrt{p(x)} - \sqrt{q(x)}\right)^2d \mu(x)
\end{align*}
\begin{lemma}
Let $T$ be a functional defined on some subset of a parameter space $\Theta \times \Theta$ which contains $(p,q)$ and $(g_\lambda, q) \forall \lambda$ in some index set $\Lambda$. 
Define $\bar{G}^n = \frac{1}{|\Lambda|} \sum_{\lambda \in \Lambda} G^n_\lambda$ where $G_\lambda$ has density $g_\lambda$. 
If:
\begin{align*}
(i)\ & h^2(P^n \times Q^n, \bar{G}^n \times Q^n) \le \gamma < 2\\
(ii)\ & T(p,q) \ge 2\beta + T(g_\lambda, q) \ \forall \lambda \in \Lambda
\end{align*}
Then:
\[
\inf_{\hat{T}_{n}} \sup_{p \in \Theta} \PP_{p,q}^{n}\left[ |\hat{T}_{n} - T(p,q)| > \beta\right] \ge c_\gamma,
\]
where $c_\gamma = \frac{1}{2} [ 1 - \sqrt{\gamma(1-\gamma/4)}]$.
\label{lem:lecam}
\end{lemma}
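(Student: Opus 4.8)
The plan is to run the classical Le Cam two-point argument, but with a \emph{mixture} (``fuzzy'') alternative, so that the divergence-estimation lower bound is reduced to the problem of testing $P^n\times Q^n$ against $\bar G^n\times Q^n$, and this testing error is then controlled by hypothesis (i) via the Hellinger-to-total-variation inequality.

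\emph{Step 1: reduction to a test.} Given any candidate estimator $\hat T_n$, form the test $\psi=\mathbf{1}\{\hat T_n\ge T(p,q)-\beta\}$, which is a function of the data alone (the number $T(p,q)$ is a fixed scalar, not something the estimator is allowed to use). Under $(p,q)$, the event $\{\psi=0\}$ forces $|\hat T_n-T(p,q)|>\beta$. Under $(g_\lambda,q)$, the separation hypothesis (ii) gives $T(p,q)-\beta\ge T(g_\lambda,q)+\beta$, so $\{\psi=1\}$ forces $|\hat T_n-T(g_\lambda,q)|\ge\beta$; the strict-versus-nonstrict distinction on one side is routine and immaterial (run the argument with $\beta$ replaced by $\beta(1-\delta)$ and let $\delta\downarrow0$). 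Since every $g_\lambda$ lies in $\Theta$, this yields
\[
\sup_{p'\in\Theta}\PP^n_{p',q}\!\left[|\hat T_n-T(p',q)|>\beta\right]\ \ge\ \max\!\left\{\PP^n_{p,q}[\psi=0],\ \frac{1}{|\Lambda|}\sum_{\lambda\in\Lambda}\PP^n_{g_\lambda,q}[\psi=1]\right\}.
\]

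\emph{Step 2: averaging and testing affinity.} Because $\psi$ does not depend on $\lambda$, the averaged alternative error equals $(\bar G^n\times Q^n)(\psi=1)$; bounding the max by the average, the displayed quantity is at least $\tfrac12\big((P^n\times Q^n)(\psi=0)+(\bar G^n\times Q^n)(\psi=1)\big)$. Minimizing this sum of error probabilities over all tests (in particular over all tests arising from estimators) gives the standard identity $\inf_\psi\big((P^n\times Q^n)(\psi=0)+(\bar G^n\times Q^n)(\psi=1)\big)=1-V(P^n\times Q^n,\bar G^n\times Q^n)$, where $V$ is total variation distance. Finally I apply the classical bound $V(R,S)\le\sqrt{h^2(R,S)(1-h^2(R,S)/4)}$; since $x\mapsto x(1-x/4)$ is increasing on $[0,2]$ and $\gamma<2$, hypothesis (i) gives $V(P^n\times Q^n,\bar G^n\times Q^n)\le\sqrt{\gamma(1-\gamma/4)}$. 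Chaining the three steps produces the claim with $c_\gamma=\tfrac12\big(1-\sqrt{\gamma(1-\gamma/4)}\big)$, which is positive precisely because $(\gamma-2)^2>0$.

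\emph{Main obstacle.} The lemma itself is essentially a careful assembly of classical facts, so the only delicate bookkeeping point is that the mixture is formed over the first coordinate alone while the $q$-marginal is held fixed; this is exactly what makes the test in Step 1 genuinely $\lambda$-free and makes the averaging in Step 2 collapse to $\bar G^n\times Q^n$ rather than to a mixture of products in both coordinates. The substantive work is not here but in the \emph{application} of the lemma within the proof of Theorem~\ref{thm:lower_bound}: one must exhibit a sufficiently rich perturbation family $\{g_\lambda\}$ around $p$ (a wavelet/trigonometric bump construction with random signs, respecting $\Sigma(s,L)$ and the boundedness assumption) for which the Hellinger distance between $P^n\times Q^n$ and $\bar G^n\times Q^n$ stays bounded below $2$ while $T$ is displaced by $\gtrsim n^{-\gamma_\star}$, i.e.\ verifying (i) and (ii) simultaneously at the critical scale. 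That is where the $\alpha,\beta\ne0,1$ hypothesis and the exponent $\gamma_\star=\min\{4s/(4s+d),1/2\}$ enter.
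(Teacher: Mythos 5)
Your proposal is correct and follows essentially the same route as the paper: reduce estimation to testing $P^n\times Q^n$ against the mixture $\bar G^n\times Q^n$ using the $2\beta$-separation in (ii), then bound the minimax testing error by $\tfrac12\bigl(1-\sqrt{\gamma(1-\gamma/4)}\bigr)$ via the Hellinger/total-variation inequality (the paper invokes Tsybakov's Theorem 2.2 plus a convexity remark where you unpack the same inequality explicitly). The strict-versus-nonstrict inequality point you flag is glossed over identically in the paper and is immaterial.
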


To construct the $g_\lambda$ functions, we partition the space $[0,1]^d$ into $m$ cubes $R_j$ and construct functions $u_j$ that are compactly supported on $R_j$.
We then set $g_\lambda = p+K \sum_{j=1}^m \lambda_j u_j$ for $\lambda \in \Lambda = \{-1,1\}^m$.
By appropriately selecting the functions $u_j$, we can ensure that:
\begin{eqnarray*}
g_\lambda &\in& \Sigma(s,L),\\
T(p,q) - T(g_\lambda, q) &\ge& \Omega(K^2)\\
h^2(P^n\times Q^m, \bar{G}^n \times Q^m) &\le& O(n^2 K^4/m).
\end{eqnarray*}
Ensuring smoothness requires $K = O(m^{-s/d})$ at which point, making the Hellinger distance $O(1)$ requires $m = \Omega(n^{\frac{2d}{4s+d}})$. 
With these choices we can apply Lemma~\ref{lem:lecam} and arrive at the lower bound since $K^2 = m^{-2s/d} = n^{\frac{-4s}{4s+d}}$. 

As for the second part of the theorem, the $n^{-1/2}$ lower bound, we use a (to our knowledge) novel proof technique which we believe may be applicable in other settings.
The first ingredient of our proof is a lower bound showing that one cannot estimate a wide class of quadratic functionals at better than $n^{-1/2}$ rate.
We provide a proof of this result based on Le Cam's method in the appendix although related results appear in the literature~\cite{donoho1991geometrizing}.
Then starting with the premise that there exists an estimator $\hat{T}$ for $T(p,q)$ with rate $n^{-1/2-\epsilon}$, we construct an estimator for a particular quadratic functional with $n^{-1/2-\epsilon}$ convergence rate, and thus arrive at a contradiction.
A somewhat surprising facet of this proof technique is that the proof has the flavor of an upper bound proof; in particular, we apply Theorem~\ref{thm:kde_rate} in this argument.

The proof works as follows:
Suppose there exists a $\hat{T}_n$ such that $|\hat{T}_n - T(p,q)| \le c_1 n^{-1/2-\epsilon}$ for all $n$. 
If we are given $2n$ samples, we can use the first half to train KDEs $\phat_n,\qhat_n$, and the second half to compute $\hat{T}_n$. 
Armed with these quantities, we can build an estimator for the first and second order terms in the von Mises expansion, which, once $\phat_n,\qhat_n$ are fixed, is simply a quadratic functional of the densities.
The precise estimator is $\hat{T}_n - C_2T(\phat_n, \qhat_n)$.
The triangle inequality along with Theorem~\ref{thm:kde_rate} shows that this estimator converges at rate $n^{-1/2-\epsilon} +n^{\frac{-3s}{2s+d}}$ which is $o(n^{-1/2})$ as soon as $s > d/4$. 
This contradicts the minimax lower bound for estimating quadratic functionals of H\"{o}lder smooth densities.
We refer the interested reader to the appendix for details of the proof.

\section{Experiments}
\label{sec:experiments}

\ifthenelse{\equal{\version}{arxiv}}{
\begin{figure}[h]
\begin{center}
\includegraphics[scale=0.3]{./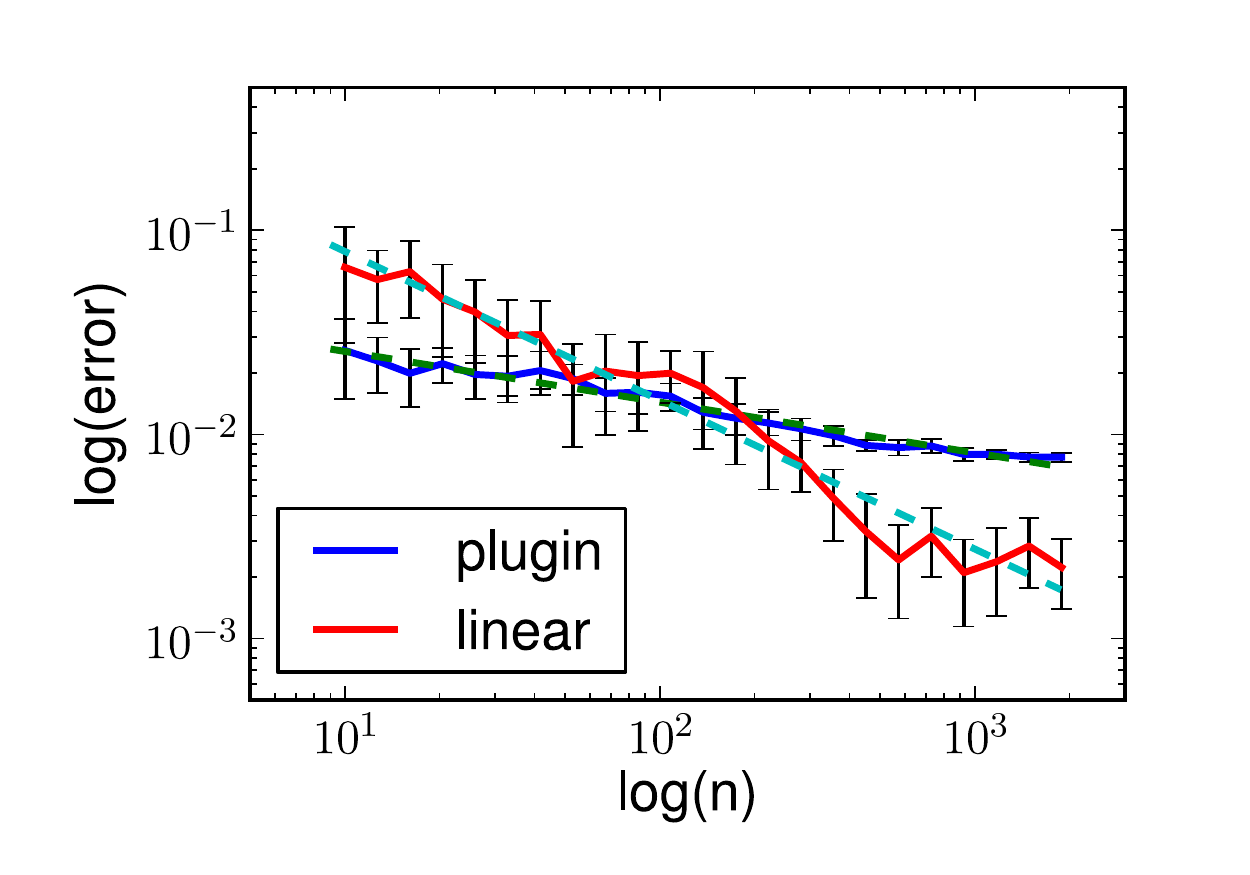} \hspace{-0.4cm}
\includegraphics[scale=0.3]{./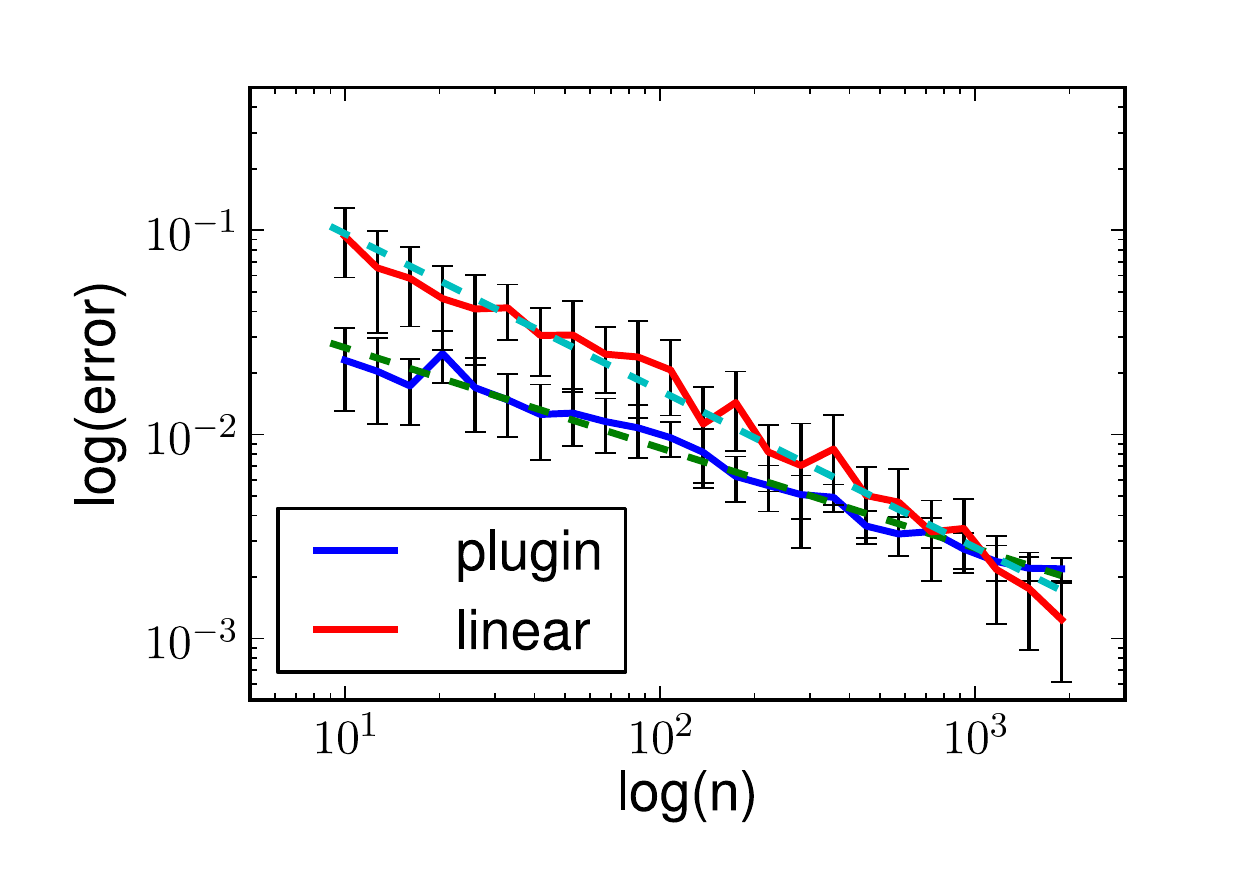} \hspace{-0.4cm}
\includegraphics[scale=0.3]{./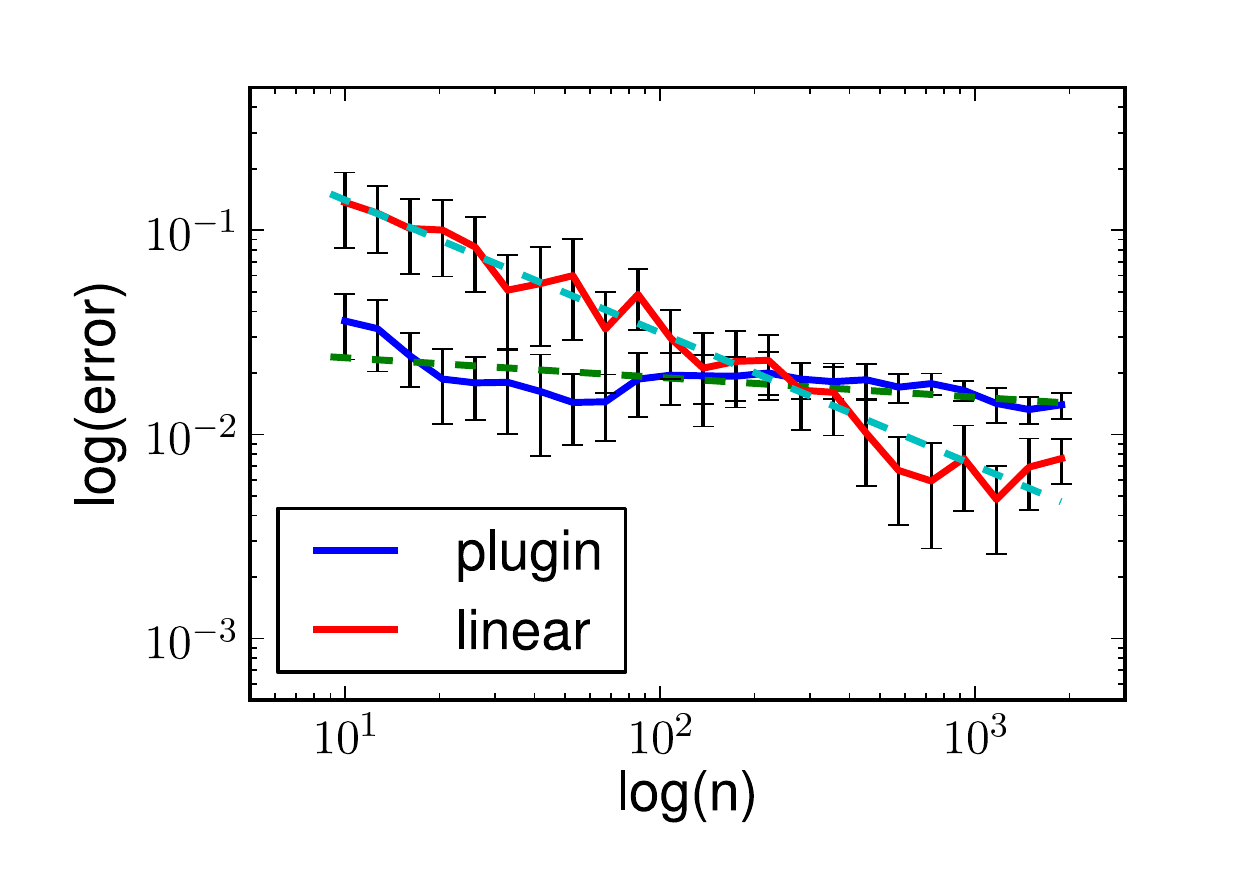} \hspace{-0.4cm}
\includegraphics[scale=0.3]{./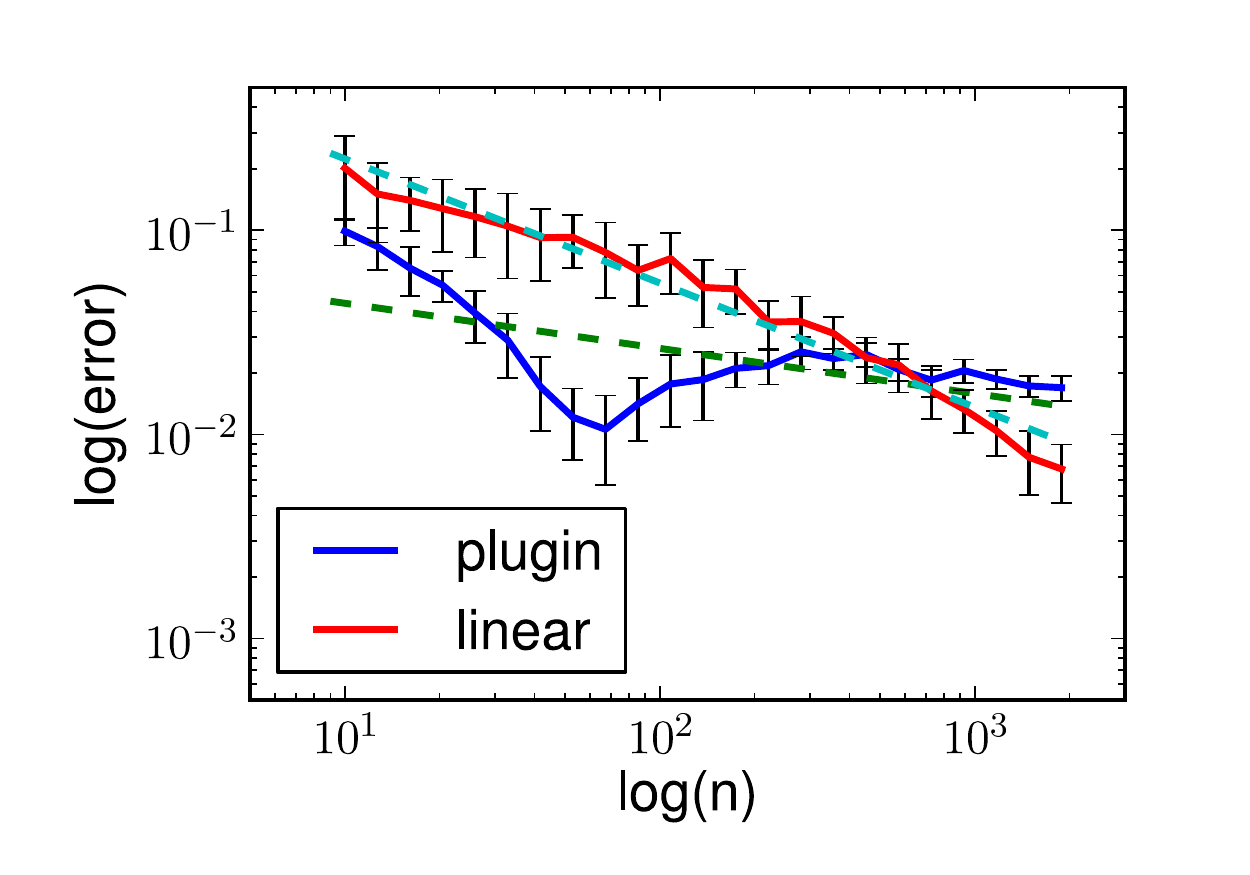}\\
\includegraphics[scale=0.3]{./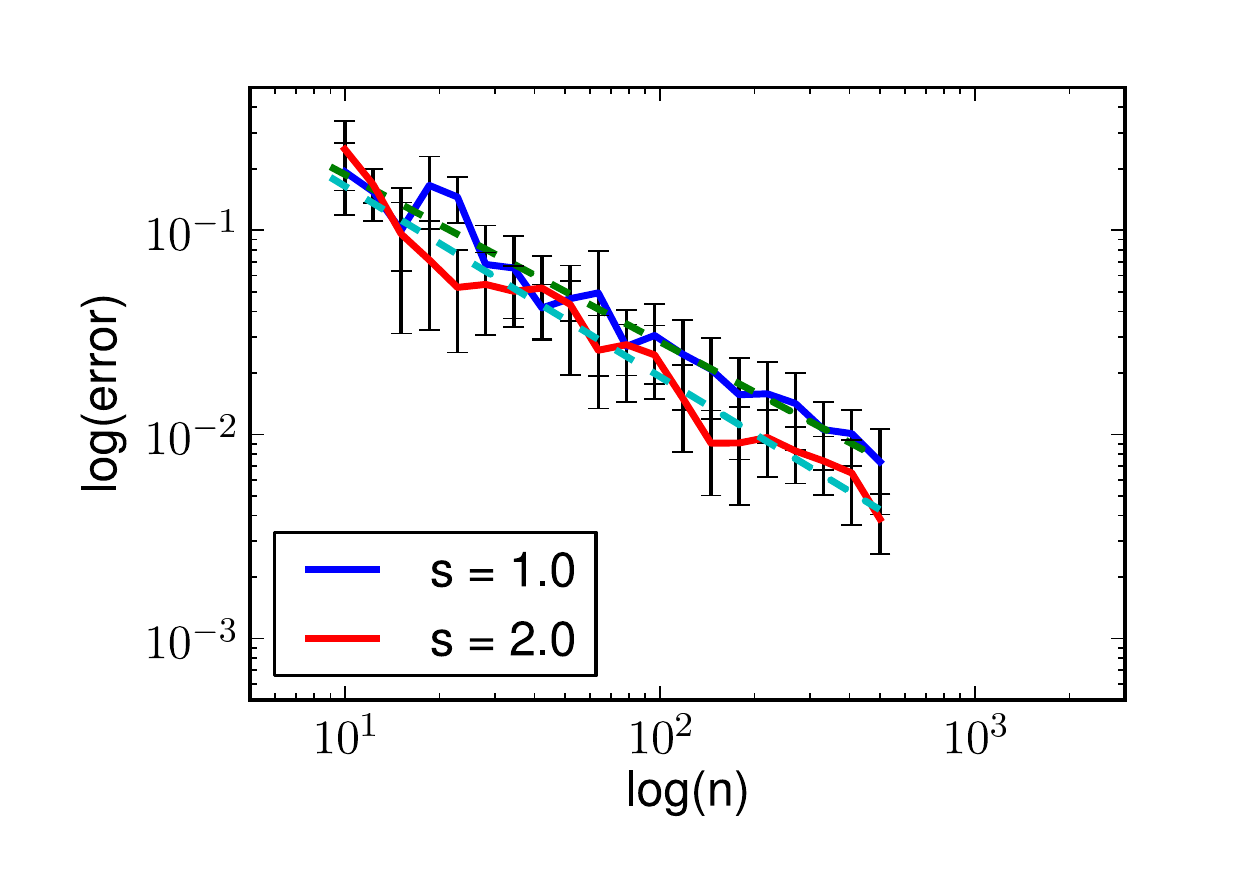} \hspace{-0.4cm}
\includegraphics[scale=0.3]{./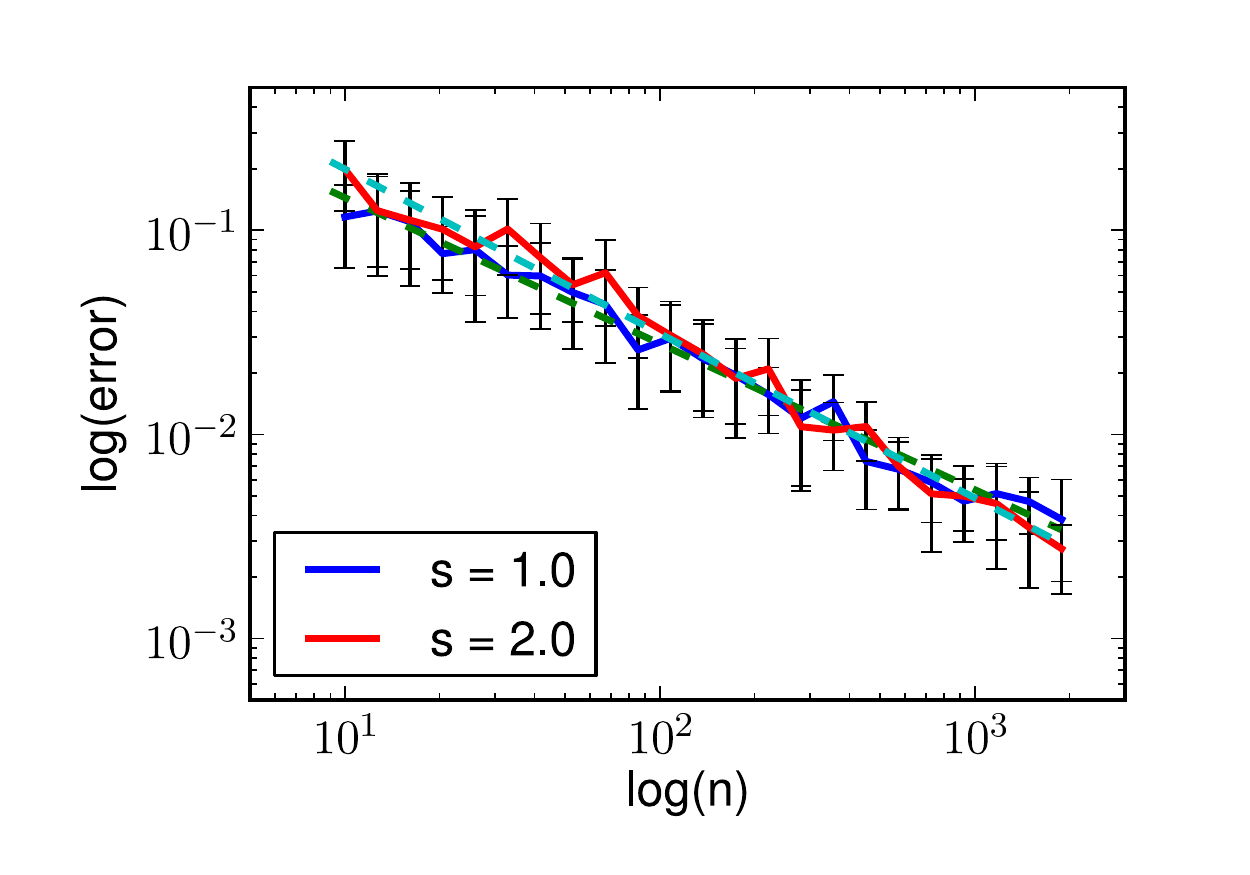} \hspace{-0.4cm}
\includegraphics[scale=0.3]{./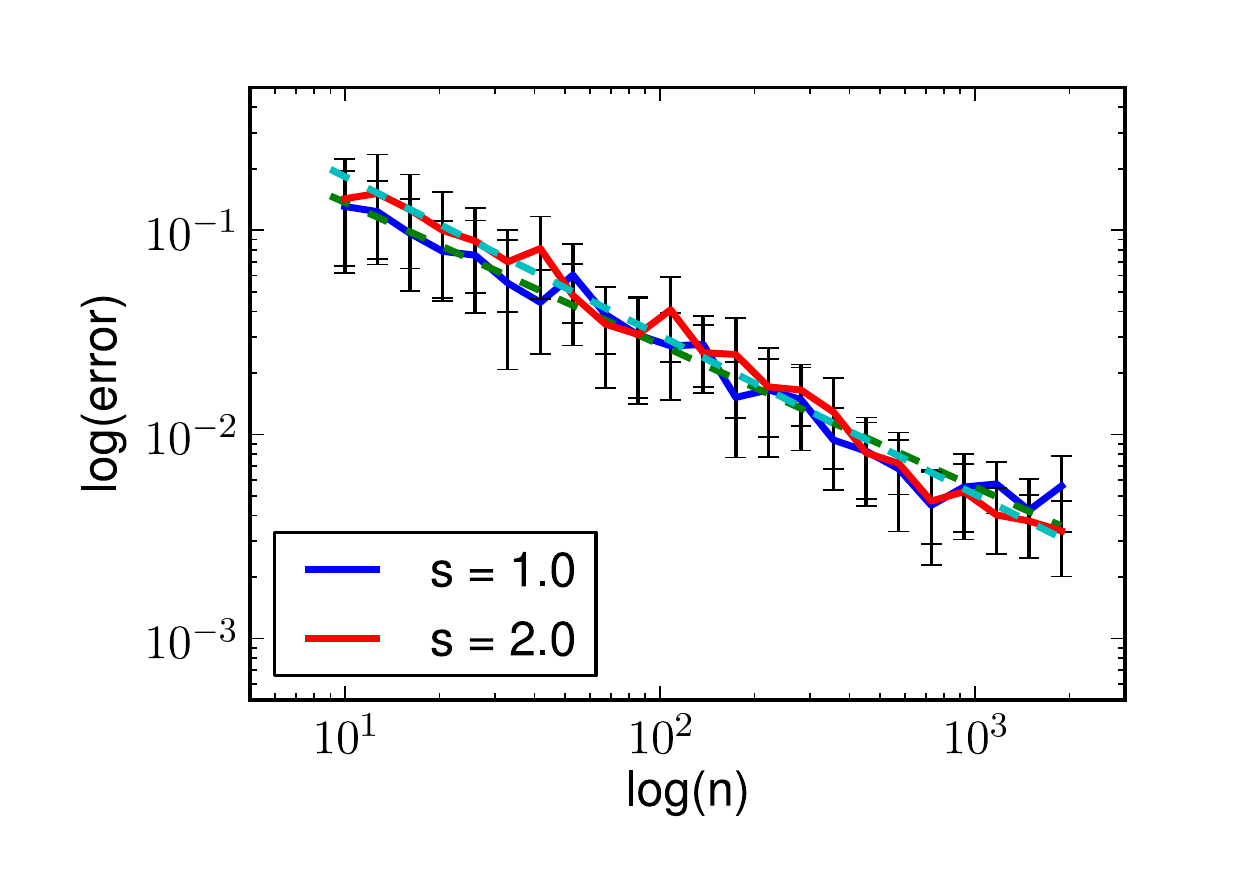} \hspace{-0.4cm}
\includegraphics[scale=0.3]{./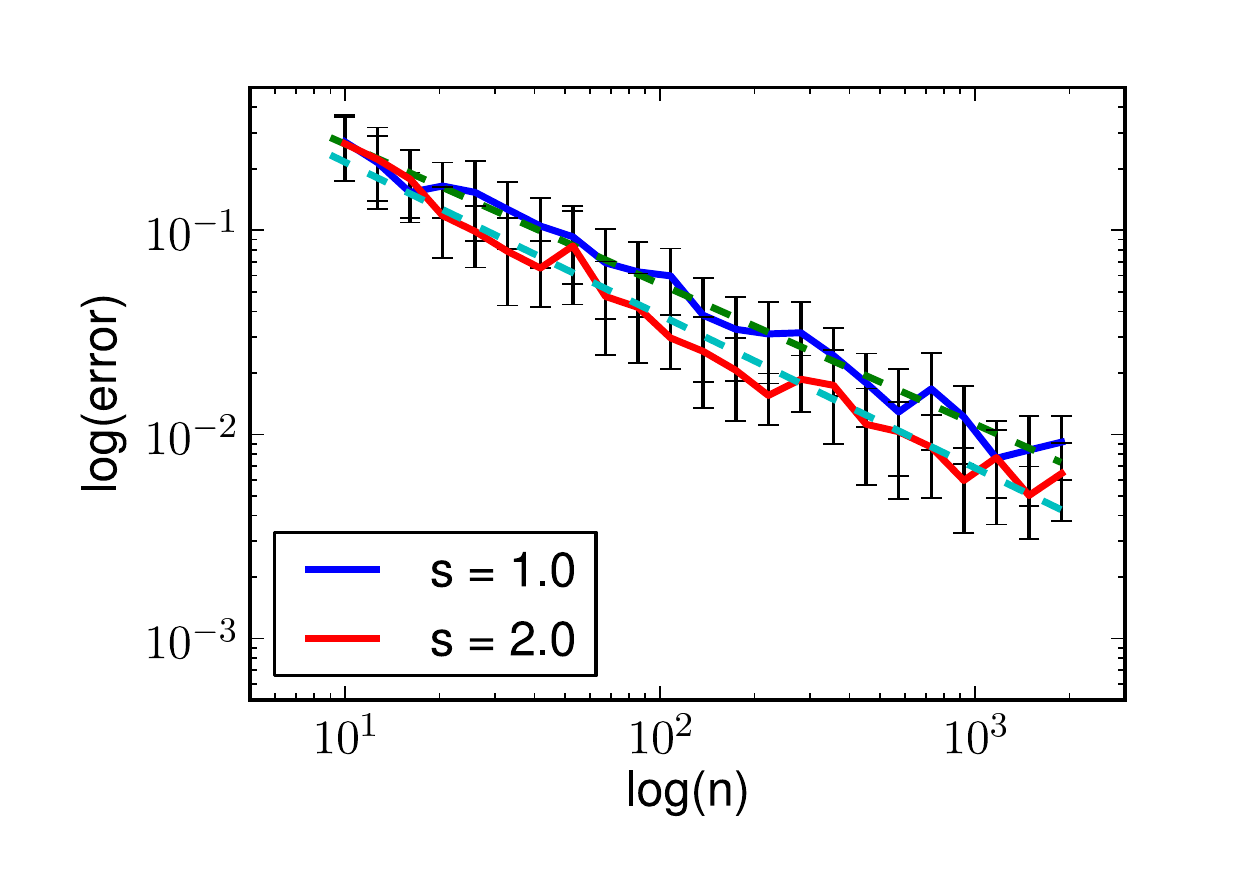}
\end{center}
\caption{Top row: Rates of convergence for $\Thatplugin, \Thatlin$ on a $\log$-$\log$ scale for: left: $d=1, s=1$, second from left: $d=1,s=2$, second from right: $d=2, s=2$, right: $d=2,s=4$.
Bottom Row: Left: Rate of convergence for $\Thatquad$ with $d=1, s=1.0, 2.0$. Middle two: Rates for linear estimator of $D_{0.5}(p,q), T_{0.5}(p,q)$ (respectively).
Right: Rate for $L_2^2$ estimator.
Dashed lines are fitted to the curves.
}
\label{fig:experiments}
\end{figure}
}{
\begin{figure*}[t]
\begin{center}
\hspace{-.6cm}
\includegraphics[scale=0.37]{./log_log_d=1_s=1.pdf} \hspace{-0.63cm}
\includegraphics[scale=0.37]{./log_log_d=1_s=2.pdf} \hspace{-0.63cm}
\includegraphics[scale=0.37]{./log_log_d=2_s=2.pdf} \hspace{-0.63cm}
\includegraphics[scale=0.37]{./log_log_d=2_s=4.pdf}\\
\hspace{-0.6cm}
\includegraphics[scale=0.37]{./quadratic_log_log_d=1.pdf} \hspace{-0.63cm}
\includegraphics[scale=0.37]{./renyi_log_log_d=1.pdf} \hspace{-0.63cm}
\includegraphics[scale=0.37]{./tsallis_log_log_d=1.pdf} \hspace{-0.63cm}
\includegraphics[scale=0.37]{./l2_log_log_d=1.pdf}
\end{center}
\caption{Top row: Rates of convergence for $\Thatplugin, \Thatlin$ on a $\log$-$\log$ scale for: left: $d=1, s=1$, second from left: $d=1,s=2$, second from right: $d=2, s=2$, right: $d=2,s=4$.
Bottom Row: Left: Rate of convergence for $\Thatquad$ with $d=1, s=1.0, 2.0$. Middle two: Rates for linear estimator of $D_{0.5}(p,q), T_{0.5}(p,q)$ (respectively).
Right: Rate for $L_2^2$ estimator.
Dashed lines are fitted to the curves.
}
\label{fig:experiments}
\end{figure*}
}

We conducted some simulations to examine the empirical rates of convergence of our estimators.
We plotted the error as a function of the number of samples $n$ on a $\log$-$\log$ scale in Figure~\ref{fig:experiments} for each estimator and over a number of problem settings.
Since our theoretical results are asymptotic in nature, we are not concerned with some discrepancy between the empirical and theoretical rates.

In the top row of Figure~\ref{fig:experiments}, we plot the performance of $\Thatplugin$ and $\Thatlin$ across four different problem settings: $d=1,s=1$; $d=1, s=2$; $d=2, s=2$; and $d=2, s=4$. 
The lines fit to the plug-in estimator's error rate have slopes $-0.25, -0.5, -0.1, -0.2$ from left to right while the lines for the linear estimator have slopes $-0.7, -0.75, -0.65, -0.6$.
Qualitatively we see that the $\Thatlin$ is consistently better than $\Thatplugin$. 
We also see that increasing the smoothness $s$ appears to improve the rate of convergence of both estimators.

In the first plot on the bottom row, we record the error rate for $\Thatquad$ with $d=1$ and $s=1.0, 2.0$.
The fitted lines have slopes $-0.82, -0.93$ respectively, which demonstrate that $\Thatquad$ is indeed a better estimator than $\Thatlin$, at least statistically speaking.
Recall that we studied $\Thatquad$ primarily for its theoretical properties and to establish the critical smoothness index of $s > d/4$ for this problem.
Computing this estimator is quite demanding, so we did not evaluate it for larger sample size and in higher dimension.


Finally in the last three plots we show the rate of convergence for our divergence estimators, that is $\Thatlin$ plugged into the equations for $D_\alpha$ or $T_\alpha$ and the quadratic-based estimator for $L_2^2$.
Qualitatively, it is clear that the estimators converge fairly quickly and moreover we can verify that increasing the smoothness $s$ does have some effect on the rate of convergence.


\section{Discussion}
\label{sec:discussion}
In this paper, we address the problem of divergence estimation with corrections of the plug-in estimator.
We prove that our estimators enjoy parametric rates of convergence as long as the densities are sufficiently smooth.
Moreover, through information theoretic techniques, we show that our best estimator $\Thatquad$ is nearly minimax optimal.

Several open questions remain. 
\begin{packed_enum}
\item Can we construct divergence estimators that are computationally \textrm{and} statistically efficient?
Recall that $\Thatquad$ involves numeric integration and is computationally impractical, yet $\Thatlin$, while statistically inferior, is surprisingly simple when applied to the divergences we consider.
At this point we advocate for the use of $\Thatlin$, in spite of its sub-optimality.
\item What other properties do these estimators enjoy?
Can we construct confidence intervals and statistical tests from them?
In particular, can we use our estimators to test for independence between two random variables?
\item Do our techniques yield estimators for other divergences, such as $f$-divergence and the Kullback-Leibler divergence?
\item Lastly, can one prove a lower bound for the case where $\alpha = \beta = 1$, i.e. the $L_2$ inner product?
\end{packed_enum}

We hope to address these questions in future work.

\section*{Acknowledgements}
This research is supported by DOE grant DESC0011114, NSF Grants DMS-0806009, IIS1247658, and IIS1250350, and Air Force Grant FA95500910373.
AK is supported in part by a NSF Graduate Research Fellowship.

\bibliography{divergence}
\bibliographystyle{plain}

\appendix
\section{The von Mises Expansion}
\label{app:von}
Before diving into the auxiliary results of Section~\ref{sec:proofs}, let us first derive some properties of the von Mises expansion. 
It is a simple calculation to verify that the Gateaux derivative is simply the functional derivative of $\phi$ in the event that $T(F) = \int \phi(f)$. 
\begin{lemma}
Let $T(F) = \int \phi(f)d\mu$ where $f = dF/d\mu$ is the Radon-Nikodym derivative, $\phi$ is differentiable and let $G$ be some other distribution with density $g = dG/d\mu$. 
Then:
\begin{align}
dT(G; F - G) =  \int \frac{\partial \phi(g(x))}{\partial g(x)} (f(x) - g(x))d\mu(x).
\end{align}
\label{lem:gateaux}
\end{lemma}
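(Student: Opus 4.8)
The plan is to compute the Gateaux derivative directly from its definition as a one-dimensional limit, exploiting the fact that $T(F) = \int \phi(f)\,d\mu$ decomposes pointwise through $\phi$. First I would write $G + \tau(F - G)$ as a distribution whose density is $g + \tau(f - g)$ — this is legitimate since $\mu$ is a fixed dominating measure and Radon-Nikodym derivatives are linear in the measure. Then by definition
\begin{align*}
dT(G; F - G) = \lim_{\tau \to 0} \frac{1}{\tau}\left( \int \phi\big(g(x) + \tau(f(x) - g(x))\big)\,d\mu(x) - \int \phi(g(x))\,d\mu(x) \right).
\end{align*}
Pulling the two integrals together and using the (one-variable) fundamental theorem of calculus or a first-order Taylor expansion of $\phi$ at each point $g(x)$, the integrand equals $\frac{\partial \phi(g(x))}{\partial g(x)}(f(x) - g(x)) + o(\tau)$ pointwise as $\tau \to 0$.

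The second step is to justify interchanging the limit $\tau \to 0$ with the integral $\int \cdot\, d\mu$. Here I would invoke dominated convergence: the difference quotient $\tau^{-1}(\phi(g + \tau(f-g)) - \phi(g))$ equals $\phi'(\xi_{x,\tau})(f(x) - g(x))$ for some intermediate point $\xi_{x,\tau}$ by the mean value theorem, so if $\phi'$ is bounded on the relevant range of densities (which holds under the paper's boundedness Assumption~\ref{ass:bound}, since the densities take values in $[\kappa_l, \kappa_u]$, and $\phi$ is the smooth power map $\phi(t) = t^\alpha$ or similar), the integrand is dominated by a constant times $|f - g|$, which is integrable since $f, g$ are densities. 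Passing to the limit then yields the claimed formula.

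The main obstacle — really the only subtlety — is making the domination and interchange argument clean without additional regularity hypotheses; in full generality one needs $\phi'$ to be locally bounded and the path $g + \tau(f - g)$ to stay in the domain of $\phi$ for small $\tau$, which is where I would lean on the context established earlier in the paper (densities bounded above and below, $\phi$ a power function, so $\phi'$ is continuous hence bounded on the compact interval $[\kappa_l, \kappa_u]$). With those facts in hand the interchange is routine and the lemma follows immediately. I would keep the write-up short: state the density of the interpolant, apply the mean value theorem pointwise, cite dominated convergence, and conclude.
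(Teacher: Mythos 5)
Your proposal follows essentially the same route as the paper's proof: write the Gateaux derivative as a limit of difference quotients, note that the interpolant has density $g + \tau(f-g)$, exchange limit and integral, and differentiate pointwise. In fact you are slightly more careful than the paper, which performs the limit--integral interchange without comment, whereas you supply the mean value theorem plus dominated convergence justification (valid under the boundedness assumption); this is a welcome addition, not a deviation.
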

\begin{proof}
\begin{align*}
dT(G; F - G) &= \lim_{\tau \rightarrow 0} \frac{T(G + \tau(F-G)) - T(G)}{\tau} = \lim_{\tau \rightarrow 0} \int \frac{1}{\tau}\left[\phi(g(x) + \tau (f(x) - g(x))) - \phi(g(x))\right]d\mu(x)\\
& = \int \lim_{\tau \rightarrow 0} \frac{1}{\tau}\left[\phi(g(x) + \tau (f(x) - g(x))) - \phi(g(x))\right]d\mu(x)\\
& = \int \frac{\partial \phi(g(x))}{\partial g(x)} (f(x) - g(x))d\mu(x)
\end{align*}
\end{proof}

We now demonstrate that the remainder for the $t$th order von Mises expansion is $O(\|p-\phat\|_{t+1}^{t+1} + \|q-\qhat\|_{t+1}^{t+1})$ under the assumption that $p,\phat,q,\qhat$ are all bounded above and below. 
\begin{lemma}
Let $T(p,q) = \int p^\alpha q^\beta d\mu$ and uppose that $p, \phat, q, \qhat$ are all bounded from above and below. 
Then $R_t$, the remainder of the $t$th order von Mises expansion of $T(p,q)$ around $T(\phat,\qhat)$ satisfies:
\begin{align}
R_t = O\left(\|\phat - p\|_t^t + \|\qhat - q\|_t^t\right)
\end{align}
\label{lem:remainder}
\end{lemma}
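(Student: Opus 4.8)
The plan is to prove the remainder bound by writing out the $t$th order functional Taylor expansion of the map $(p,q) \mapsto \int p^\alpha q^\beta d\mu$ around $(\phat, \qhat)$ and controlling the integral form of the remainder pointwise. Concretely, for each fixed $x$ consider the bivariate function $\Phi(u,v) = u^\alpha v^\beta$ and Taylor-expand $\Phi(p(x), q(x))$ around $(\phat(x), \qhat(x))$ to order $t-1$ with the Lagrange (or integral) form of the remainder. Since $p, \phat, q, \qhat$ all lie in a fixed interval $[\kappa_l, \kappa_u]$ with $\kappa_l > 0$, every partial derivative $\partial^a_u \partial^b_v \Phi$ of total order $t$ is uniformly bounded on the relevant domain — it is a product of a bounded constant with $u^{\alpha - a} v^{\beta - b}$, and on $[\kappa_l,\kappa_u]$ these factors are bounded above and below. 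Therefore the pointwise remainder is bounded by a constant times $\sum_{a+b = t} |p(x) - \phat(x)|^a |q(x) - \qhat(x)|^b$.

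The key steps, in order: (1) fix $x$, apply the $d=2$ multivariate Taylor theorem to $\Phi$ with remainder of order $t$; (2) invoke Assumption~\ref{ass:bound} (and the earlier remark that the clipped KDEs are likewise bounded in $[\kappa_l,\kappa_u]$) to bound the order-$t$ partials uniformly by some constant $C = C(t,\alpha,\beta,\kappa_l,\kappa_u)$; (3) integrate over $x$ and use $|a|^j |b|^{t-j} \le |a|^t + |b|^t$ (Young's inequality, or just $\max$) termwise to get $R_t \le C' \int (|p - \phat|^t + |q - \qhat|^t) d\mu = C'(\|\phat - p\|_t^t + \|\qhat - q\|_t^t)$; (4) check that the order-$0$ through order-$(t-1)$ terms of this pointwise expansion, once integrated, are exactly the terms $C_{t-1} T(\phat,\qhat) + \sum \theta^{(\cdot)}_{(\cdot),(\cdot)}$ appearing in the stated von Mises expansion $T_{t-1}(p,q)$, so that what is left over really is the claimed $R_t$. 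One should also note that the Gateaux-derivative identity of Lemma~\ref{lem:gateaux} identifies the first-order term with the integral against $\partial\phi/\partial g$, matching step (4) at $t=2$; the higher-order Gateaux derivatives are handled identically, by differentiating under the integral, which is justified again by boundedness away from $0$ and $\infty$.

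I expect the main obstacle to be bookkeeping rather than anything deep: one has to verify that the low-order terms in the pointwise Taylor expansion, after integration, collapse to precisely the constants $C_1, C_2$ times $T(\phat,\qhat)$ plus the $\theta$-terms as written in Section~\ref{sec:estimators} — this requires expanding $p = \phat + (p - \phat)$, $q = \qhat + (q - \qhat)$ inside $\int \phat^{\alpha}\qhat^{\beta}(\cdots)$ and collecting, which is where the somewhat mysterious coefficients $1-\alpha-\beta$ and $1 - \tfrac32(\alpha+\beta) + \tfrac12(\alpha+\beta)^2$ come from. A secondary technical point is the justification of interchanging limit/derivative and integral when defining the higher-order Gateaux derivatives, but since the integrands are uniformly bounded (all densities confined to $[\kappa_l,\kappa_u]$) this is routine dominated convergence. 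No delicate analysis is needed because the positivity lower bound $\kappa_l$ removes any singularity in the derivatives of $u^\alpha v^\beta$.
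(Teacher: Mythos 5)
Your proposal is correct and follows essentially the same route as the paper: a pointwise Taylor expansion of $(u,v)\mapsto u^\alpha v^\beta$ with Lagrange-form remainder, uniform boundedness of the order-$t$ partials via $\kappa_l,\kappa_u$ (the paper writes the intermediate values as bounded functions $\xi_1,\xi_2$), and the pointwise bound $|a|^j|b|^{t-j}\le|a|^t+|b|^t$ before integrating. Your step (4), matching the low-order terms to $C_1,C_2$ and the $\theta$ terms, is likewise what the paper does in the calculations immediately following the lemma.
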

\begin{proof}
The $t$th order term in the von Mises expansion is:
\begin{eqnarray*}
\frac{1}{t!} \sum_{a=0}^{t} {t \choose a} \int  \frac{\partial^{t} \phat^{\alpha}(x)\qhat^{\beta}}{\partial \phat(x)^{a} \partial \qhat(x)^{t-a}} (p(x)-\phat(x))^a(q(x)-\qhat(x))^{t-a}dx = \\
\frac{1}{t!} \sum_{a=0}^{t} {t \choose a} \int \prod_{i=0}^a (\alpha-i) \prod_{i=0}^{t-a} (\beta-i) \phat^{\alpha-a}(x)\qhat^{\beta-(t-a)}(x) (p(x)-\phat(x))^a(q(x)-\qhat(x))^{t-a}dx,
\end{eqnarray*}
where $\prod_{i=0}^0 a_i = 1$. 
If we are to take a $t-1$st order expansion, the remainder is of the same form as the $t$th term, except that the terms $\phat^{\alpha-a}(x), \qhat^{\beta-(t-a)}(x)$ are replaced by functions $\xi_1^{\alpha-a}(x), \xi_2^{\beta-(t-a)}(x)$ for some functions $\xi_1, \xi_2$ that are bounded between $p, \phat$ and $q,\qhat$ respectively.
In our setting, $p,q \in [\kappa_l, \kappa_u]$ and $\phat,\qhat \in [\kappa_l-\epsilon, \kappa_u+\epsilon]$ so $\xi_1, \xi_2$ are bounded functions.
With this bound, we can simplify the remainder term $R_{t-1}$ to:
\[
R_{t-1} \le C(\alpha,\beta, \kappa_l, \kappa_u, \epsilon, t) \frac{1}{t!}\sum_{a = 0}^t {t \choose a} \int |p(x) - \phat(x)|^a |q(x)- \qhat(x)|^{t-a} dx.
\]
Looking at the integral pointwise, either $|p(x)-\phat(x)| \le |q(x) - \qhat(x)|$ in which case the expression is upper bounded by $|q(x)- \qhat(x)|^t$ or the opposite is true in which case it is bounded by $|p(x)-\phat(x)|^t$.
Either way, we can upper bound the integral by the sum.
This gives:
\[
R_{t-1} \le C(\alpha,\beta, \kappa_l, \kappa_u, \epsilon, t) \frac{2^t}{t!} \left(\|p-\phat\|_t^t + \|q-\qhat\|_t^t\right).
\]
\end{proof}

In many cases, the constant can be worked out:
\begin{enumerate}
\item If $\alpha = \beta = 1$, then $R_1 = \alpha\beta$ while $R_{2}, \ldots, = 0$. 
\item If $\alpha,\beta > 0, \alpha+\beta=1$ as in the R\'{e}nyi Divergence, $R_2 = 1$ while $R_3 = \frac{5}{6}\kappa_{\epsilon}^{-2} \alpha \beta$ where $\kappa_{\epsilon} = \min\{\kappa_l-\epsilon, (\kappa_u+\epsilon)^{-1}\}$.
\end{enumerate}

The first order von Mises expansion is:
\begin{eqnarray*}
T(p,q) &=& T(\phat, \qhat) + \int \frac{\partial \phat^\alpha(x) \qhat^\beta(x)}{\partial \phat(x)}(p(x) - \phat(x)) + \int \frac{\partial \phat^\alpha(x) \qhat^\beta(x)}{\partial \qhat(x)}(q(x) - \qhat(x)) + O(\|p - \phat\|_2^2+ \|q - \qhat\|_2^2)\\
&=& T(\phat, \qhat) + \int \alpha \phat^{\alpha-1}(x) \qhat^\beta(x)(p(x) - \phat(x)) + \int \beta \phat^\alpha(x) \qhat^{\beta-1}(x)(q(x) - \qhat(x)) + O(\|p - \phat\|_2^2+ \|q - \qhat\|_2^2)\\
&=& (1-\alpha-\beta)T(\phat,\qhat) + \int \alpha \phat^{\alpha-1}(x) \qhat^\beta(x)p(x) + \int \beta \phat^\alpha(x) \qhat^{\beta-1}(x)q(x) + O(\|p - \phat\|_2^2+ \|q - \qhat\|_2^2)\\
&=& C_1T(\phat,\qhat) + \theta_{1,1}^p + \theta_{1,1}^q + R_2.
\end{eqnarray*}

The second order expansion is computed similarly.
The three second order terms are:
\begin{eqnarray*}
&&\frac{1}{2} \int \alpha(\alpha-1)\phat^{\alpha-2}(x)\qhat^\beta(x)(p(x) - \phat(x))^2\\
&&\int \alpha\beta \phat^{\alpha-1}(x)\qhat^{\beta-1}(x)(p(x) - \phat(x))(q(x) - \qhat(x))\\
&&\frac{1}{2}\int \beta(\beta-1)\phat^{\alpha}(x)\qhat^{\beta-1}(x)(q(x) - \qhat(x))^2.
\end{eqnarray*}
Adding these together along with the linear terms, expanding and regrouping terms we get:
\[
T_2(p,q) = C_2 T(\phat,\qhat) + \sum_{i=1,2 \atop f = p,q}\theta_{2,i}^f + \theta_{2,2}^{p,q} + R_3.
\]

\section{Full Specification of the Estimators}
\label{app:est}
Here we write out the complete expressions for the estimators $\Thatplugin, \Thatlin, \Thatquad$.
Recall that we have samples $X_1^n \sim p, Y_1^n \sim q$ and our goal is to estimate $T(p,q) = \int p^\alpha q^\beta$. 
Define:
\begin{align*}
\phat(x) = \frac{1}{n} \sum_{i=1}^n K_h(X_i - x) & \qquad \qhat(x) = \frac{1}{n} \sum_{j=1}^n K_h(Y_j - x)\\
\phat_{DS}(x) = \frac{2}{n} \sum_{i=1}^{n/2} K_h(X_i - x) & \qquad \qhat_{DS}(x) = \frac{2}{n} \sum_{j=1}^{n/2} K_h(Y_j - x),
\end{align*}
where $DS$ is used to denote that we are data splitting, and $K_h$ is a kernel with bandwidth $h$ meeting Assumption~\ref{ass:kernel}.
The estimator $\Thatplugin$ is formed by simply plugging in $\phat,\qhat$ into the function $T$. 
Formally:
\begin{align}
\Thatplugin = \int \phat^\alpha(x) \qhat^\beta(x) d\mu(x)
\end{align}
The estimator $\Thatlin$ is formed by a first order correction but we must used the data split KDEs to ensure independence between the multiple terms in the estimator.
\begin{align}
\Thatlin = (1 - \alpha - \beta) \int \phat_{DS}^\alpha(x) \qhat_{DS}^\beta(x) d\mu(x) + \frac{2}{n}\sum_{i=n/2+1}^{n} \alpha \phat_{DS}^{\alpha-1}(X_i)\qhat_{DS}^\beta(X_i) + \frac{2}{n}\sum_{j=n/2+1}^{n} \alpha \phat_{DS}^{\alpha}(Y_j)\qhat_{DS}^\beta(Y_j).
\end{align}

For the quadratic term we perform an additional correction:
\begin{align*}
\Thatquad &= (1 - 3\alpha/2 - 3\beta/2 + 1/2(\alpha+\beta)^2) \int \phat_{DS}^\alpha(x) \qhat_{DS}^\beta(x) d\mu(x) + \\
& + \frac{2}{n}\sum_{i=n/2+1}^n\alpha (2-\alpha-\beta)\phat_{DS}^{\alpha-1}(X_i)\qhat_{DS}^{\beta}(X_i) + \frac{2}{n}\sum_{j=n/2+1}^n\beta (2-\alpha-\beta)\phat_{DS}^{\alpha}(Y_j)\qhat_{DS}^{\beta-2}(Y_j)\\
& + \frac{4}{n(n/2-1)}\sum_{k \in M}\sum_{i_1 \ne i_2=n/2+1}^n \phi_k(X_{i_1})\phi_k(X_{i_2})\left[\frac{1}{2}\alpha(\alpha-1)\phat_{DS}^{\alpha-2}(X_{i_2})\qhat_{DS}^\beta(X_{i_2})\right]\\
& - \frac{2}{n(n/2-1)}\sum_{k,k' \in M}\sum_{i_1 \ne i_2=n/2+1}^n \phi_k(X_{i_1})\phi_{k'}(X_{i_2}) \left[\frac{1}{2}\alpha(\alpha-1) \int \phi_k(x)\phi_{k'}(x)\phat_{DS}^{\alpha-2}(x)\qhat_{DS}^\beta(x)d\mu(x)\right]\\
& + \frac{4}{n(n/2-1)}\sum_{k \in M}\sum_{j_1 \ne j_2=n/2+1}^n \phi_k(Y_{j_1})\phi_k(Y_{j_2})\left[\frac{1}{2}\beta(\beta-1)\phat_{DS}^{\alpha}(Y_{j_2})\qhat_{DS}^{\beta-2}(Y_{i_2})\right]\\
& - \frac{2}{n(n/2-1)}\sum_{k,k' \in M}\sum_{j_1 \ne j_2=n/2+1}^n \phi_k(Y_{j_1})\phi_{k'}(Y_{j_2}) \left[\frac{1}{2}\beta(\beta-1) \int \phi_k(y)\phi_{k'}(y)\phat_{DS}^{\alpha}(y)\qhat_{DS}^{\beta-2}(y)d\mu(y)\right]\\
& + \frac{2}{n}\sum_{j=n/2+1}^n \sum_{k \in M} \left(\frac{2}{n}\sum_{i=n/2+1}^n \phi_k(X_i)\right)\phi_k(Y_j) \left(\alpha \beta \phat_{DS}^{\alpha-1}(Y_j) \qhat_{DS}^{\beta-1}(Y_j)\right).
\end{align*}
Recall that $\{\phi_k\}_{k \in D}$ is an orthonormal basis for $L_2([0,1]^d)$, and $M$ is an appropriately chosen subset of $D$. 
The first line of the estimator is simply the plugin term, while the second lines makes up the two linear terms. 
The third through sixth lines are the two quadratic terms, one involving the data from $p$ and the other involving the data from $q$. 
Finally the last line is the bilinear term.

\section{Detailed Proofs of Upper Bound}
Let us now prove the the auxiliary results stated in Section~\ref{sec:proofs}

\subsection{Proof of Theorem~\ref{thm:kde_rate}}
The truncated kernel density estimator takes the following form: We select a parameter $\epsilon > 0$. 
If $\tilde{f}$ is the usual kernel density estimator for $f$, we set $\fhat(x)= \tilde{f}(x)$ if $\tilde{f}(x) \in [\kappa_l - \epsilon, \kappa_u+\epsilon]$ and otherwise we set $\fhat(x) = f_0(x)$ for some fixed function bounded between $\kappa_l, \kappa_u$. 

Recall Assumption~\ref{ass:kernel} ensures that the kernel $K:\RR^d \rightarrow \RR$ satisfies:
\begin{enumerate}
\item $\textrm{supp} K \in (-1,1)^d$
\item $\int K(x)dx = 1$ and $\int \prod_{i} x_i^{p_i} K(x)dx = 0$ for all tuples $p = (p_1, \ldots, p_d)$ with $\sum p_i \le \lfloor s \rfloor$. 
\end{enumerate}
Note that we can use the Legendre polynomials to construct kernels meeting these properties~\cite{tsybakov2009introduction}.

Let us first establish the rate of convergence of $\tilde{f}$ the regular kernel density estimator in $\ell_p^p$, which is $\tilde{f}(x) = \frac{1}{nh^d} \sum_{i=1}^nK\left(\frac{x - X_i}{h}\right)$. 
Denote by $\bar{f}(x) = \EE[\tilde{f}(x)] = \EE_{X \sim f} [\frac{1}{h^d}K\left(\frac{x-X}{h}\right)]$. 
Then:
\[
\EE[ \|\tilde{f} - f\|_p^p ] \le 2^p \left(\EE[ \|\tilde{f}- \bar{f}\|_p^p] + \|\bar{f} - f\|_p^p\right).
\]
To bound the first term, let us write $\eta_i(x) = \frac{1}{h^d} K\left(\frac{x - X_i}{h}\right) - \EE_{X \sim f}[\frac{1}{h^d}K\left(\frac{x-X}{h}\right)]$. 
Exchanging integrals, we can look at fixed $x$ and we have:
\begin{eqnarray}
\EE |\tilde{f}(x) - \bar{f}(x)|^p = \EE | \frac{1}{n} \sum_{i=1}^n \eta_i(x) |^p \le \left(\frac{1}{n^{2p}} \EE[(\sum_{i=1}^n \eta_i(x))^{2p}]\right)^{1/2}.
\label{eq:kde_variance_1}
\end{eqnarray}
If we expand the expectation and drop the terms that vanish we get all terms of the form:
\[
\sum_{i_1 \ne i_2 \ldots \ne i_t=1}^n\prod_{j=1}^t {p-\sum_{k=1}^{j-1}p_k \choose p_j} \eta_{i_j}(x)^{p_j} = \frac{n!}{(n-t)!}\prod_{j=1}^t {p-\sum_{k=1}^{j-1}p_k \choose p_j} \eta_{i_j}(x)^{p_j},
\]
where $1 \le t < p$, $\sum p_j = p$ and $p_j \ne 1 \forall j$.
That is, we pick a term in the polynomial with $t$ unique variables, then assign powers $p_j$ to each of the terms, then count the number of ways to assign those powers to those terms (which results in the binomial coefficients). 
Since $\EE[\eta_j(x)] = 0$, the terms where there is some $p_j = 1$ are all zero.

By linearity of expectation and independence, we therefore need to control $\EE[|\eta_i(x)|^q]$ for $2 \le q \le p$. 
Applying Jensen's inequality, we get:
\[
\EE[|\eta_i(x)|^q] \le 2^q\EE[|\frac{1}{h^d}K\left(\frac{X- x}{h}\right)|^q] \le 2^q \kappa_uh^{-(q-1)d} \int |K(u)^q| du,
\]
where the last expression comes from expanding the integral, performing a substitution and bounding $f(x) \le \kappa_u$. 
So we can bound by $C(q,\kappa_u, K) h^{-(q-1)d}$. 
Plugging this into the expression above, we get:
\[
\frac{n!}{(n-t)!} C(p_1^j, \kappa_u, K) h^{-pd+td} \le n^{t} C'(p_1^j, \kappa_u, K) h^{-pd+td} \le C'(p_1^j, \kappa_u, K) \frac{n^{p}}{h^{pd}}.
\]
The second inequality holds for $n$ sufficiently large.
The third inequality holds whenever $nh^{d} \ge 1$ which will be true for $n$ sufficiently large, given our setting of $h$. 
To summarize, all of the terms can be upper bounded by $c(n^{p}/h^{pd})$ and there are a constant-in-$p$ number of terms.
Plugging this into Equation~\ref{eq:kde_variance_1} we get
\begin{eqnarray}
\EE[\|\tilde{f} - \bar{f}\|_p^p] \le C (nh^d)^{-p/2}.
\label{eq:kde_variance_bound}
\end{eqnarray}
\begin{remark}
The constant here has exponential dependence on $p$ but we are only concerned with cases where $p$ is a small constant (at most $4$).
\end{remark}

As for the bias (note that $x,u,t$ are all $d$-dimensional vectors here):
\[
|\bar{f}(x) - f(x)| = \int \frac{1}{h^d} K\left(\frac{x - t}{h}\right) f(t)dt - f(x) = \int (f(x-uh) + f(x))K(u)du.
\]
Let us define $m = \lfloor s \rfloor$. 
Taking the $(m-1)$st order von Mises expansion of $f(x+uh)$ about $f(x)$ we get terms of the form:
\[
\sum_{r_1,\ldots, r_d | \sum r_i \le m-1} \frac{1}{|r|!} D^rf(x) h^{|r|} \int \prod_{i}u_i^{r_i} K(u) du
\]
which are all zero by our assumption on $K$.
The remainder term, gives us:
\[
\sum_{r_1, \ldots, r_d | \sum r_i = m} \frac{h^m}{m!} \int \xi(r, x,uh) \prod_i u_i^{r_i}K(u) du \le \sum_{r_1, \ldots, r_d | \sum r_i = m} \frac{Lh^s}{m!} \int \|u\|^{s - m} \prod_i u_i^{r_i} K(u) du,
\]
which we will denote $C(m, K, d) L h^s$.
Here the function $\xi$ is between $D^rf(x)$ and $D^rf(x-uh)$ and to reach the last expression, we use the fact that $|D^rf(x) - D^rf(x-uh)| \le L\|uh\|^{s-r}$, i.e. the H\"{o}lderian assumption on $f$. 
In applying the H\"{o}lderian assumption, there is another term of the form $D^rf(x)\int \prod_i u_i^{r_i}K(u)du$ which is zero by the assumption on $K$. 
Equipped with this bound, we can bound the bias:
\begin{eqnarray}
\|\bar{f} - f\|_p^p \le C(m,K,d)L^p h^{ps}.
\end{eqnarray}
In trading off the bias and the variance, we set $h \asymp n^{\frac{-1}{2s+d}}$ and see that the rate of convergence is $\EE[\|\tilde{f} - f\|_p^p = O(n^{\frac{-ps}{2s+d}})$.

To prove Theorem~\ref{thm:kde_rate}, we just have to show that truncation does not significantly affect the rate. 
Fix $\epsilon > 0$ and define $S_\epsilon = \{x: \kappa_l - \epsilon \le \tilde{f}(x) \le \kappa_u+\epsilon\}$. 
We have:
\begin{eqnarray*}
\EE[ \|\fhat - f\|_p^p ] &=& \EE \left[ \int_{S_\epsilon} |\tilde{f}(x) - f(x)|^p dx + \int_{S_\epsilon^C} |f_0(x) - f(x)| dx\right]\\
& \le & \EE\left[ \|\tilde{f} - f\|_p^p\right] + \|f_0 - f\|_{\infty}^p \EE\left[ \int \mathbf{1}[x \notin S_\epsilon] dx \right]\\
& = & \EE\left[ \|\tilde{f} - f\|_p^p\right] + \|f_0 - f\|_{\infty}^p \int \PP_{X_1^n}[x \notin S_\epsilon] dx,
\end{eqnarray*}
so we must control the probability that $x \notin S_\epsilon$.
This can be done via Bernstein's inequality. 
First observe that the bias $|\bar{f} - f| \rightarrow 0$ with our choice of $h$ so that for sufficiently large $n$, $\sup_x \bar{f}(x) - f(x) \le \epsilon/2$. 
Once this happens, it is clear that $x \notin S_\epsilon$ implies that $\tilde{f}(x) - \bar{f}(x) \ge \epsilon/2$. 
Therefore:
\begin{eqnarray*}
\PP[x \notin S_\epsilon] \le \PP[|\tilde{f}(x) - \bar{f}(x)| \ge \epsilon/2] = \PP[|\frac{1}{n}\sum_i \eta_i(x)| \ge \epsilon/2] \le 2 \exp\left(\frac{-nh^d\epsilon^2/4}{\kappa_u\|K\|_2^2 + \frac{1}{3}\|K\|_{\infty}\epsilon}\right).
\end{eqnarray*}

This last inequality is an application of Bernstein's inequality noting that $|\eta_i(x)| \le \frac{2}{h^d} \|K\|_{\infty}$ and $\Var(\eta_i(x)) \le h^{-d} \kappa_u \|K\|_{2}^2$ since:
\[
\Var(\eta_i(x)) \le \EE_{X_i \sim f} \left[\frac{1}{h^{2d}} K^2(\frac{X_i - x}{h})\right] = \frac{1}{h^d}\int K^2(u)f(x+hu)du \le h^{-d} \kappa_u \|K\|_2^2.
\]
Using our definition $h \asymp n^{\frac{-1}{2s+d}}$ and using the fact that $\epsilon$ is some constant $\PP[x \notin S_\epsilon] \le 2 \exp(-C n^{\frac{2s}{2s+d}})$.
Plugging this bound in above, we have:
\begin{eqnarray*}
\EE\left[\|\fhat - f\|_p^p\right] \le \EE\left[\|\tilde{f} - f\|_p^p\right] +  2\|f_0 - f\|_{\infty}^p \exp\left( - C n^{\frac{2s}{2s+d}} \right) \textrm{vol}([0,1]^d)
 = O(n^{\frac{-ps}{2s+d}}),
\end{eqnarray*}
since the second term goes to zero exponentially quickly in $n$. 
This proves the theorem.

\subsection{Convergence Rate for Estimating Linear Functionals}
It is trivial to derive the convergence rate for estimating linear functionals:
\begin{eqnarray*}
\EE[(\hat{\theta} - \theta)^2] = \frac{1}{n} (\EE[\psi^2(X)]- \EE[\psi(X)]^2) \le 2\|\psi\|_{\infty}^2/n,
\end{eqnarray*}
And by Jensen's inequality, we have $\EE[|\hat{\theta} - \theta|] \le \sqrt{\EE[(\hat{\theta} - \theta)^2]}$, so the rate of convergence is $\sqrt{2} \|\psi\|_{\infty}/\sqrt{n}$. 

\subsection{Proof of Theorem~\ref{thm:quadratic_rate}}
For the quadratic terms, we use a result of Laurent~\yrcite{laurent1996efficient}:
\begin{theorem}[\cite{laurent1996efficient}]
Let $X_1^n$ be i.i.d random variables with common density $f$ that belongs to some Hilbert Space $L^2(d\mu)$. 
Let $\{\phi_i\}_{i \in D}$ be an orthonormal basis of $L^2(d\mu)$.
Assume that $f$ is uniformly bounded and belongs to the ellipsoid $\Ecal = \{\sum_{i \in D} a_i\phi_i : \sum_{i \in D} |a_i^2/c_i^2| \le 1\}$.
Let $\psi$ be bounded function and define $\theta = \int \psi(x) f(x) \mu(dx)$ and $\hat{\theta}$ as in Equation~\ref{eq:quadratic_estimator} where the set $M = M_n \subset D$ has size $m$.
Then whenever $n \ge n_0$ (some absolute constant), we have:
\begin{eqnarray}
\EE[(\hat{\theta} - \theta)^2] = \textrm{Bias}^2(\hat{\theta}) + \Var(\hat{\theta}) \le \|\psi\|_{\infty}^2 \sup_{i \notin M_n} |c_i|^4 + 72 \|\psi\|_{\infty}^2\|f\|_{\infty}^2 \left(\frac{2}{n} + \frac{m}{n^2}\right).
\end{eqnarray}
\end{theorem}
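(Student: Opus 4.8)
\emph{Proof proposal.} I read the target as the quadratic functional $\theta = \int \psi(x) f^2(x)\,d\mu(x)$ (this is what is used in Theorem~\ref{thm:quadratic_rate}, and the estimator in Equation~\ref{eq:quadratic_estimator} is manifestly quadratic in the data). The plan is to recognize $\hat\theta$ as a symmetric degree-two U-statistic, compute its mean exactly to read off the bias, and control its variance via the Hoeffding ANOVA decomposition, tracking constants throughout. Set $a_k = \EE[\phi_k(X)] = \int \phi_k f$, so $f = \sum_{k\in D} a_k\phi_k$; write $f_M = \sum_{k\in M} a_k\phi_k$ for the truncation, and note $\theta = \sum_{k,k'\in D} a_k a_{k'} b_{k,k'}(\psi)$. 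First I would rewrite $\hat\theta = \binom{n}{2}^{-1}\sum_{i<j}\tilde h(X_i,X_j)$, the U-statistic with symmetrized kernel $\tilde h$ built from
\[
h(x,y) = 2\psi(y)\sum_{k\in M}\phi_k(x)\phi_k(y) - \sum_{k,k'\in M}\phi_k(x)\phi_{k'}(y)\,b_{k,k'}(\psi).
\]

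For the bias, I would take expectations termwise over the independent pair and use $c_k := \int \phi_k\psi f = \sum_{k'\in D} a_{k'} b_{k,k'}(\psi)$. A short computation, in which the correction term cancels exactly the mixed contributions having one index inside and one outside $M$, yields
\[
\EE[\hat\theta] - \theta \;=\; -\sum_{k,k'\notin M} a_k a_{k'} b_{k,k'}(\psi) \;=\; -\int \psi\,(f - f_M)^2 .
\]
Hence $|\mathrm{Bias}(\hat\theta)| \le \|\psi\|_\infty \sum_{k\notin M} a_k^2$, and the ellipsoid constraint $\sum_{k\in D} a_k^2/c_k^2 \le 1$ gives $\sum_{k\notin M} a_k^2 \le \sup_{k\notin M}|c_k|^2$, so $\mathrm{Bias}^2(\hat\theta) \le \|\psi\|_\infty^2 \sup_{k\notin M}|c_k|^4$, which is the first term of the bound.

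For the variance I would invoke the standard identity $\Var(\hat\theta) = \binom{n}{2}^{-1}\big(2(n-2)\zeta_1 + \zeta_2\big)$, where $\zeta_1,\zeta_2$ are the variances of the first- and second-order Hoeffding projections of $\tilde h$, so $\Var(\hat\theta) \le 4\zeta_1/n + 4\zeta_2/n^2$. The second-order term is routine: bounding $\zeta_2 \le \EE[h(X_1,X_2)^2]$, expanding the square, collapsing cross-terms by orthonormality, and using $\EE[\phi_k(X)^2] \le \|f\|_\infty$ together with $\sum_{k\in M}\|\phi_k\|_2^2 = m$ gives $\zeta_2 = O\big(m\,\|\psi\|_\infty^2\|f\|_\infty^2\big)$. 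For the first-order projection I would compute $\tilde h_1(x) = \EE_{X_2}[\tilde h(x,X_2)]$ in closed form; here the correction term again removes the part that would otherwise scale with $m$, leaving $\tilde h_1$ a combination of $\psi f_M$ and $L^2$-projections thereof, whose $L^2$-norm is $O(\|\psi\|_\infty\sqrt{\|f\|_\infty})$ uniformly in $m$, so that $\zeta_1 \le \EE[\tilde h_1(X)^2] \le \|f\|_\infty\|\tilde h_1\|_2^2 = O(\|\psi\|_\infty^2\|f\|_\infty^2)$. Combining the two pieces gives $\Var(\hat\theta) = O\big(\|\psi\|_\infty^2\|f\|_\infty^2(n^{-1} + m n^{-2})\big)$, and carrying the explicit constants through the U-statistic identity, the $\EE[\phi_k^2]\le\|f\|_\infty$ step, and the projection contraction produces the stated factor $72$ and the $2/n + m/n^2$ shape.

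The step I expect to be the main obstacle is the first-order projection bound: one must verify carefully that the bias-reduction term $b_{k,k'}(\psi)$ cancels the $\Theta(m)$-sized part of the first Hoeffding projection, so that $\zeta_1$ stays bounded and the variance keeps its parametric $1/n$ leading order; without this cancellation the naive estimate of $\zeta_1$ would be $O(m)$ and the theorem would fail. Everything else is bookkeeping.
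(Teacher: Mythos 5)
Your proposal is correct in substance, but note that the paper does not actually prove this statement: it is imported verbatim from Laurent (1996), and the only proof the paper supplies is for the bilinear analogue (Theorem~\ref{thm:bilinear_sub}), where the two independent samples let the authors split the estimator into centered pieces $T_1,T_2,T_3$ whose cross-covariances vanish. Your one-sample argument via the degree-two U-statistic and the Hoeffding decomposition is the natural single-sample counterpart of that decomposition and is essentially how Laurent's original proof goes. The individual steps check out: you are right that the stated $\theta=\int\psi f\,d\mu$ is a typo for $\int\psi f^2\,d\mu$; the bias identity $\EE[\hat\theta]-\theta=-\sum_{k,k'\notin M}a_ka_{k'}b_{k,k'}(\psi)=-\int\psi(f-f_M)^2$ follows from the $2\sum_{k\in M,k'\in D}-\sum_{k,k'\in M}-\sum_{k,k'\in D}$ cancellation and gives the $\sup_{i\notin M}|c_i|^4$ term via the ellipsoid constraint; and you correctly isolate the one step that actually matters, namely that the first Hoeffding projection is $\Pcal_M(\psi(2f-f_M))$ plus $2\psi f_M-\Pcal_M(\psi f_M)$, whose $L^2(f\,d\mu)$ norms are $O(\|\psi\|_\infty^2\|f\|_\infty^2)$ uniformly in $m$ because the $b_{k,k'}$ correction cancels the would-be $\Theta(m)$ contribution — without that, the variance would not have the $2/n+m/n^2$ shape. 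Two cosmetic cautions: you overload the symbol $c_k$ for both the ellipsoid semi-axes and $\int\phi_k\psi f$, which should be disambiguated, and the constant $72$ is asserted rather than derived, though nothing in your structure prevents carrying it through.
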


For the bi-linear term $\theta_{2,2}^{p,q}$ we have the following theorem:
\begin{theorem}
Let $X_1^n$ be i.i.d random variables with common density $f$ and $Y_1^n$ be i.i.d. with common density $g$.
Let $f,g$ belong to some Hilbert space $L^2(d\mu)$ and let $\{\phi_i\}_{i \in D}$ be an orthonormal basis for $L^2(d\mu)$.
Assume that $f,g$ are uniformly bounded and both belong to the ellipsoid $\Ecal = \{\sum_{i \in D} a_i\phi_i : \sum_{i \in D} |a_i^2/c_i^2| \le 1\}$.
Let $\theta = \int \psi(x) f(x)g(x) \mu(dx)$ and $\hat{\theta}$ be defined by Equation~\ref{eq:both_estimator} where the set $M = M_n \subset D$ has size $m$.
Then whenever $n \ge n_0$ (some absolute constant), we have:
\begin{eqnarray}
\EE[(\hat{\theta} - \theta)^2] = \textrm{Bias}^2(\hat{\theta}) + \Var(\hat{\theta}) \le \|\psi\|_{\infty}^2 \sup_{i \notin M_n} |c_i|^4 + \|\psi\|_{\infty}^2\|f\|_{\infty}\|g\|_{\infty} \left(\frac{2}{n} + \frac{m+1}{n^2}\right).
\end{eqnarray}
\label{thm:bilinear_sub}
\end{theorem}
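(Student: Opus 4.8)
The plan is to mirror the structure of Laurent's proof for the quadratic functional, adapting it to the bilinear setting where we have two independent samples $X_1^n \sim f$ and $Y_1^n \sim g$. First I would rewrite the estimator $\hat\theta$ from Equation~\ref{eq:both_estimator} in the cleaner form $\hat\theta = \frac{1}{n}\sum_{j=1}^n \hat f_M(Y_j)\psi(Y_j)$ where $\hat f_M(\cdot) = \sum_{k\in M}\hat a_k \phi_k(\cdot)$ and $\hat a_k = \frac1n\sum_i \phi_k(X_i)$, and observe that this is the natural estimator for $\theta = \int \psi f g\, d\mu = \EE_{Y\sim g}[f(Y)\psi(Y)]$ with $f$ replaced by a projection estimate. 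Because the $X$ and $Y$ samples are independent, $\EE[\hat\theta] = \EE_X\EE_Y[\hat f_M(Y)\psi(Y)] = \int (\Pi_M f)(y)\psi(y)g(y)\,d\mu(y)$ where $\Pi_M f = \sum_{k\in M} a_k\phi_k$ is the $L_2$-projection onto $\mathrm{span}\{\phi_k: k\in M\}$, since $\EE[\hat a_k] = a_k = \langle f,\phi_k\rangle$.

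Second, for the bias: $\theta - \EE[\hat\theta] = \int (f - \Pi_M f)(y)\psi(y)g(y)\,d\mu(y) = \sum_{k\notin M} a_k \langle \phi_k, \psi g\rangle$. Since $\psi g$ is a bounded function in $L_2$, writing its Fourier coefficients as $b_k = \langle \phi_k, \psi g\rangle$ we have $|{\rm Bias}| \le \big(\sum_{k\notin M} a_k^2/c_k^2\big)^{1/2}\big(\sum_{k\notin M} c_k^2 b_k^2\big)^{1/2} \le \sup_{k\notin M}|c_k|^2 \cdot \|\psi g\|_2 \le \|\psi\|_\infty \sup_{k\notin M}|c_k|^2$ using $f\in\mathcal E$, $g$ a density, and Parseval. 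This gives the ${\rm Bias}^2 \le \|\psi\|_\infty^2 \sup_{k\notin M}|c_k|^4$ term.

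Third, the variance: decompose $\hat\theta - \EE[\hat\theta]$ using the independence of the two samples. Conditioning on $X_1^n$, $\hat\theta$ is an average of i.i.d.\ terms $\hat f_M(Y_j)\psi(Y_j)$, so ${\rm Var}(\hat\theta \mid X_1^n) \le \frac1n \EE_Y[\hat f_M(Y)^2\psi(Y)^2] \le \frac{\|\psi\|_\infty^2\|g\|_\infty}{n}\int \hat f_M^2\,d\mu = \frac{\|\psi\|_\infty^2\|g\|_\infty}{n}\sum_{k\in M}\hat a_k^2$; taking expectation over $X$ and using $\EE[\hat a_k^2] = a_k^2 + \frac1n{\rm Var}(\phi_k(X)) $ with $\sum_k a_k^2 = \|f\|_2^2$ and $\sum_{k\in M}{\rm Var}(\phi_k(X)) \le \sum_{k\in M}\EE[\phi_k(X)^2]$ bounded by $m\|f\|_\infty$ (for the Fourier basis $|\phi_k|\le 1$), this piece contributes the $\frac{2}{n}$ and part of the $\frac{m+1}{n^2}$ term. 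The remaining piece, ${\rm Var}_X(\EE_Y[\hat\theta\mid X])= {\rm Var}_X\big(\sum_{k\in M}\hat a_k b_k\big) = \frac1n{\rm Var}_X\big(\sum_{k\in M}b_k\phi_k(X)\big) \le \frac1n \EE_X[(\sum_{k\in M} b_k\phi_k(X))^2]$; since $\sum_{k\in M} b_k\phi_k = \Pi_M(\psi g)$ is bounded pointwise by $\|\psi g\|_\infty \le \|\psi\|_\infty\|g\|_\infty$ is not quite available, so instead bound $\EE_X[(\Pi_M(\psi g))^2(X)] \le \|f\|_\infty \|\Pi_M(\psi g)\|_2^2 \le \|f\|_\infty\|\psi\|_\infty^2$, contributing $O(1/n)$. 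Summing and combining constants yields the stated bound. Finally I would note that the hypotheses of the theorem statement (densities in $\Sigma(s,L)$, bounded, Fourier basis with $M$ the frequencies $\le m_0^{1/d}$) are exactly what is needed to instantiate this generic bound: $\Sigma(s',L)$ embeds in the Sobolev ellipsoid with $c_k \asymp \|k\|^{-s'}$, so $\sup_{k\notin M}|c_k|^4 \asymp m_0^{-4s'/d}$, and with $m_0 \asymp n^{2d/(4s'+d)}$ the bias-squared and the $m_0/n^2$ variance term both become $n^{-8s'/(4s'+d)}$, giving $\EE[(\hat\theta-\theta)^2] = O(n^{-1} + n^{-8s'/(4s'+d)})$ as in Theorem~\ref{thm:quadratic_rate}.

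I expect the main obstacle to be the variance analysis — specifically, carefully separating the randomness in the $X$-sample (which enters through the random coefficients $\hat a_k$) from the randomness in the $Y$-sample, and controlling the cross-terms. A naive expansion of $(\hat\theta - \EE\hat\theta)^2$ produces a sum over $k,k'\in M$ and over sample indices that is easy to mishandle; the clean route is the iterated-variance decomposition ${\rm Var}(\hat\theta) = \EE_X[{\rm Var}_Y(\hat\theta\mid X)] + {\rm Var}_X(\EE_Y[\hat\theta\mid X])$ together with the observation that summing $\EE[\hat a_k^2]$ over $k\in M$ costs only $m/n^2$ beyond $\|f\|_2^2$, which is the source of the $\frac{m+1}{n^2}$ term. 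One subtlety worth flagging is that we need $\|\Pi_M(\psi g)\|_\infty$ or at least $\|\Pi_M(\psi g)\|_2$ controlled; $\|\Pi_M(\psi g)\|_2 \le \|\psi g\|_2 \le \|\psi\|_\infty\|g\|_2 \le \|\psi\|_\infty$ suffices and avoids any projection-boundedness issues, so the constants come out matching Laurent's up to the obvious replacement of $\|f\|_\infty^2$ by $\|f\|_\infty\|g\|_\infty$.
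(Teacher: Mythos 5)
Your variance analysis is fine: the law-of-total-variance route ($\Var(\hat\theta)=\EE_X[\Var_Y(\hat\theta\mid X)]+\Var_X(\EE_Y[\hat\theta\mid X])$) is a clean alternative to the paper's explicit decomposition of $\hat\theta$ into a centered bilinear term plus two linear terms with vanishing covariances, and it produces the same $\|\psi\|_\infty^2\|f\|_\infty\|g\|_\infty(2/n+(m+1)/n^2)$ bound. The bias step, however, has a genuine gap. Writing $\mathrm{Bias}=\sum_{k\notin M}a_k b_k$ with $b_k=\langle\phi_k,\psi g\rangle$, your weighted Cauchy--Schwarz gives $\bigl(\sum_{k\notin M}c_k^2b_k^2\bigr)^{1/2}\le \sup_{k\notin M}|c_k|\cdot\|\psi g\|_2$, i.e.\ only \emph{one} power of $\sup_{k\notin M}|c_k|$, not the $\sup_{k\notin M}|c_k|^2$ you wrote. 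Your argument uses only $f\in\Ecal$ and treats $\psi g$ as a generic bounded $L_2$ function, and with that information the best you can conclude is $\mathrm{Bias}^2\lesssim\|\psi\|_\infty^2\sup_{k\notin M}|c_k|^2$, which is strictly weaker than the stated $\|\psi\|_\infty^2\sup_{k\notin M}|c_k|^4$. This is not a cosmetic loss: with $c_k\asymp\|k\|^{-s'}$ and $m\asymp n^{2d/(4s'+d)}$, your bound gives squared bias of order $n^{-4s'/(4s'+d)}$, which dominates $n^{-1}+m/n^2$ and destroys the $n^{-8s'/(4s'+d)}$ rate of Theorem~\ref{thm:quadratic_rate} (indeed your own rate calculation at the end silently invokes the quartic bound your derivation did not establish).

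The missing ingredient is the hypothesis you never used: $g\in\Ecal$ as well. The bias must be bounded \emph{quadratically} in the tail precisely because the functional is bilinear in two smooth densities. The paper's route is to pull out $\|\psi\|_\infty$ and bound $\sum_{k\notin M}\alpha_k\beta_k\le\tfrac12\sup_{k\notin M}|c_k|^2\sum_{k\notin M}(\alpha_k^2+\beta_k^2)/c_k^2\le L\sup_{k\notin M}|c_k|^2$, where $\alpha_k,\beta_k$ are the Fourier coefficients of $f$ and $g$; squaring gives the $\sup_{k\notin M}|c_k|^4$ term. (Equivalently, one needs coefficient decay for $\psi g$, not just boundedness, and that decay comes from the smoothness of $g$.) So to repair your proof, replace the Parseval bound $\|\psi g\|_2\le\|\psi\|_\infty$ by an argument that spends the ellipsoid membership of $g$ alongside that of $f$, exactly as in the paper.
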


\begin{proof}
The bias is:
\begin{eqnarray*}
\EE[\hat{\theta}] - \theta = \int \sum_{i \in M} \alpha_i \phi_i(x) \psi(x) g(x) dx  - \int \psi(x) f(x) g(x) = \int \psi(x) \left(\Pcal_M f(x) - f(x)\right)g(x) dx,
\end{eqnarray*}
where $\alpha_i = \int \phi_i(x) f(x)$ and $\Pcal_M f$ is the projection of $f$ onto the subspace defined by $M$. 
Define $\beta_i = \int \phi_i(x) g(x)$.
If $f,g$ live in the ellipsoid $\Ecal = \{ \sum a_i \phi_i | \sum |a_i|^2/|c_i|^2 \le L\}$ then:
\begin{eqnarray*}
\textrm{Bias}^2(\hat{\theta}) = \left( \sum_{i \notin M} \alpha_i \int \psi(x) g(x) \phi_i(x) dx\right)^2 \le \|\psi\|_{\infty}^2 \left( \sum_{i \notin M} \alpha_i \beta_i \right)^2.
\end{eqnarray*}
The term inside the parenthesis can be bounded as:
\[
\sum_{i \notin M} \alpha_i \beta_i \le \frac{1}{2} \sup_{i \notin M} |c_i|^2 \sum_{i \notin M} \frac{|\alpha_i|^2 + |\beta_i|^2}{|c_i|^2} \le L \sup_{i \notin M} |c_i|^2,
\]
so the bias is $\textrm{Bias}^2(\hat{\theta}) \le \|\psi\|_{\infty}^2 L^2 \sup_{i \notin M} |c_i|^4$.

As for the variance, let us define $Q(x)$ to be the $m$-dimensional vector of functions $\phi_i(x) - \alpha_i$ and $R(x)$ to be the $m$-dimensional vector of functions $\phi_i(x)\psi(x) - \int \psi \phi_i g$.
Further define $A, B$ to be the $m$-dimensional vectors with $i$th components $\alpha_i = \int \phi_i f$ and $\beta_i = \int \psi \phi_i g$ respectively. 
Then our estimator can alternatively be written as:
\begin{eqnarray*}
\hat{\theta} = \underbrace{\frac{1}{n^2} \sum_{j,k} Q(X_j)^T R(Y_k)}_{T_1} + \underbrace{\frac{1}{n}\sum_{j} Q(X_j)^TB}_{T_2} + \underbrace{\frac{1}{n}\sum_{k} A^T R(Y_k)}_{T_3} - A^TB.
\end{eqnarray*}
Notice that $Q, R$ are centered functions.
Since $X$s are independent of the $Y$s, $\textrm{Cov}(T_2, T_3) = 0$.
Since $T_2$ is independent of $Y$ and $\EE[R(Y_k)] = 0$, we see that $\textrm{Cov}(T_1, T_2) = 0$.
Similarly, $\textrm{Cov}(T_1, T_3) = 0$. 

Therefore,
\[
\Var(\hat{\theta}) = \Var(T_1) + \Var(T_2) + \Var(T_3).
\]
Let us analyze $T_1$. 
By independence,
\begin{eqnarray*}
\Var(T_1) &=& \frac{1}{n^2} \Var(Q(X_1)^TR(Y_1))
= \frac{1}{n^2} \sum_{i,i' \in M} \int \phi_i(x) \phi_{i'}(x) \phi_i(y) \phi_{i'}(y) \psi(y)^2 f(x) g(y) dx dy\\
 &-& \int \alpha_i\alpha_{i'} \phi_{i}(y)\phi_{i'}(y)\psi(y)^2g(y)dy - \int \beta_{i}\beta_{i'} \phi_i(x) \phi_{i'}(x)f(x) + \alpha_i\alpha_{i'}\beta_i\beta_{i'}\\
& \le & \frac{1}{n^2} \sum_{i,i' \in M} \int \phi_i(x) \phi_{i'}(x) \phi_i(y) \phi_{i'}(y) \psi(y)^2 f(x) g(y) dx dy + \frac{1}{n^2} (\sum_i \alpha_i \beta_i)^2\\
& = & \frac{1}{n^2} \int \left(\sum_{i \in M} \phi_i(x) \phi_i(y)\right)^2 \psi(y)^2 f(x) g(y) dx dy + \frac{1}{n^2} (\sum_i \alpha_i \beta_i)^2\\
& \le & \frac{\|\psi\|_{\infty}^2\|f\|_{\infty} \|g\|_{\infty}}{n^2}  \int \left(\sum_{i \in M} \phi_{i}(x) \phi_i(y)\right)^2 dx dy + \frac{1}{n^2}\left(\sum_{i} \alpha_i^2\right)\left(\sum_{i} \beta_i^2\right)\\
& \le & \frac{\|\psi\|_{\infty}^2 \|f\|_{\infty} \|g\|_{\infty} m}{n^2} + \frac{1}{n^2}(\int f^2)(\int g^2 \psi^2)
\le \frac{\|\psi\|_{\infty}^2 \|f\|_{\infty} \|g\|_{\infty} (m+1)}{n^2} .
\end{eqnarray*}
To arrive at the third line, notice that the cross terms are non-negative, since $\sum_{i, i'} \alpha_i \alpha_{i'} \phi_i(y) \phi_{i'}(y) = \left(\sum_i \alpha_i \phi_i(y)\right)^2$ (and analogously for the other cross term).
Therefore we can simply omit them and provide an upper bound. 
To go from the fourth to fifth lines, we use H\"{o}lder's inequality on the first term and Cauchy-Schwarz on the second term. 
Notice that the expression involving $\phi_i(x) \phi_i(y)$ is positive, so we can drop the absolute values in the $\ell_1$ norm term of H\"{o}lder's inequality.
To arrive at the fifth line, we expand out the square and use the fact that $\phi_i$s are orthornormal.

For $T_2$ again by independence we have:
\begin{eqnarray*}
\Var(T_2) &=& \frac{1}{n} \Var(Q(X_1)^TB) = \EE[ \left(\sum_{i \in M} (\phi_i(X_1) - \alpha_i) \int \psi \phi_i g \right)^2]\\
& = & \sum_{i,i'\in M} \int \phi_i(x)\phi_{i'}(x) f(x) \int \psi \phi_i g \int \psi \phi_{i'} g - \int \alpha_i \psi \phi_i g \int \alpha_{i'} \psi \phi_{i'} g\\
& = & \int (\sum_{i \in M} \beta_i \phi_i(x))^2 f(x) - \left(\int (\Pcal_M ) \psi g\right)^2 \le \int (\Pcal_M(\psi g))^2 f.
\end{eqnarray*}
Here the last inequality follows from the fact that $\beta_i = \int \psi \phi_i g$ is the $i$th fourier coefficient of $\psi g$ so $\sum_i \beta_i \phi_i$ is the projection onto $M$. 
Of course this quantity is bounded by:
\[
\Var(T_2) \le \frac{1}{n} \|f\|_{\infty} \int \psi^2(x)g^2(x) dx \le \frac{\|\psi\|_{\infty}^2 \|f\|_{\infty} \|g\|_{\infty}}{n}.
\]
Essentially the same argument reveals that $T_3$ is bounded in the same way. 
\begin{eqnarray*}
\Var(T_3) & = & \frac{1}{n} \Var(A^T R(Y_1)) \le \frac{\|\psi\|_{\infty}^2}{n} \sum_{i,i'}\alpha_i \alpha_{i'} \left[\int \phi_i(y) \phi_{i'}(y)g(y)dy - \int \phi_i g \int \phi_{i'}g\right]\\
& = & \frac{1}{n} \|\psi\|_{\infty}^2  \left[\int (\Pcal_M f)^2 g - (\int (\Pcal_M f) g)^2\right] \le \frac{\|\psi\|_{\infty}^2\|f\|_{\infty} \|g\|_{\infty}}{n},
\end{eqnarray*}
so the variance of the estimator is:
\[
\Var(\hat{\theta}) \le \|\psi\|_{\infty}^2 \|f\|_{\infty} \|g\|_{\infty} \left( \frac{m+1}{n^2} + \frac{2}{n}\right).
\]
\end{proof}

Both the quadratic and bilinear terms exhibit the same dependence on $\sup_{i \notin M_n} |c_i|, m, n$ so choosing $M_n$ appropriately will give the rate of convergence for both terms. 
To establish Theorem~\ref{thm:quadratic_rate} we work with the fourier basis $\{\phi_k\}_{k \in \ZZ^d}$ where $\phi_k(x) = e^{2\pi i k^Tx}$ and the Sobolev class $\Wcal(s,L)$ defined by:
\begin{eqnarray}
\Wcal(s,L) = \left\{f = \sum_{k \in \ZZ^d} a_k \phi_k \left| \sum_{k \in \ZZ^d} (\sum_{j=1}^d |k_j|^{2s})|a_k|^2 \le L\right.\right\}
\end{eqnarray}
In Lemma~\ref{lem:sob_holder} we show that the class $\Wcal(s',L')$ contains $\Sigma(s,L)$ as long as $s' < s$ and with appropriate choice of $L'$.
For now let us work in $\Wcal(s',L')$. 

Let us choose:
\begin{eqnarray*}
M_n = \{k \in \ZZ^d | |k_j| \le \frac{1}{2}m^{1/d}\}, \ \ 
m_0 = \left(18 \frac{d}{s'}2^{4s'/d}n^{-2}\right)^{\frac{-d}{4s'+d}} \asymp n^{\frac{2d}{4s'+d}}.
\end{eqnarray*}
Thinking of $M_n$ as an integer lattice with side lengths $m_0 = m^{1/d}$ we see that $|M_n| = m$.
Moreover $\sup_{i \notin M_n} |c_i|^4 = L^2 (2/m)^{4s'/d}$.
For the quadratic terms, this results in the bound:
\begin{eqnarray*}
\EE[(\hat{\theta}-\theta)^2] &\le& \|\psi\|_{\infty}^2\left( L^2(2/m)^{4s'/d} + 72 \|f\|_{\infty}^2 m/n^2 + 144 \|f\|_{\infty}^2/n\right)\\
& \le & \|\psi\|_{\infty}^2 \max\{1, \|f\|_{\infty}^2\}\max\{L^2,1\}\left((2/m)^{4s'/d} + 72m/n^2 + 144/n\right),
\end{eqnarray*}
and plugging in our definition of $m$ followed by some algebraic simplifications, we get
\begin{eqnarray*}
\EE[(\hat{\theta}-\theta)^2] \le 18 \|f\|_{\infty}^2\max\{1, \|p\|_{\infty}^2\} \max\{L^2, 1\} \left(\frac{8}{n} + n^{\frac{-8s'}{4s'+d}}\left[ 2^{\frac{8s'}{d}}d/s' + 3\right]\right).
\end{eqnarray*}

For the bilinear terms, plugging into Theorem~\ref{thm:bilinear_sub}, we get
\begin{eqnarray*}
\EE[(\hat{\theta} - \theta)^2] \le \|\psi\|_{\infty}^2 \max\{1, \|f\|_{\infty} \|g\|_{\infty}\} \max\{L^2,1\}\left((2/m)^{4s'/d} + m/n^2 + 3/n\right),
\end{eqnarray*}
which when we plug in for $m$ we get:
\begin{eqnarray*}
\EE[(\hat{\theta} - \theta)^2] \le \|\psi\|_{\infty}^2 \max\{1, \|f\|_{\infty} \|g\|_{\infty}\} \max\{L^2,1\}\left(3/n + n^{\frac{-8s'}{4s'+d}}\left[18 \times 2^{8s'/d} d/s' + 1\right]\right).
\end{eqnarray*}

\section{Proofs of Corollaries~\ref{cor:l2} and~\ref{cor:renyi}}
The proof of Corollary~\ref{cor:l2} is immediate given the decomposition $\|p-q\|_2^2 = \int p^2 + \int q^2 - 2\int pq$ and the Theorem~\ref{thm:quadratic_rate}.

For Corollary~\ref{cor:renyi}, if we use our estimator $\hat{T}$ for $T(p,q) = \int p^\alpha q^{1-\alpha}$ we can plug $\hat{T}$ into the definition of R\'{e}nyi divergence to obtain an estimator $\hat{D}_\alpha$. 
The rate of convergence is:
\[
\EE[|\hat{D}_\alpha - D_\alpha|] = \frac{1}{\alpha-1}\EE\left[\log\left( \hat{T}/T\right) \right] \le \frac{1}{\alpha-1} \EE\left[ \log(1 + |T - \hat{T}|/T)\right] \le \frac{1}{\alpha-1} cn^{-\gamma}/T(p,q)
\]
where $\gamma$ is the rate of convergence of our estimator.
This is $O(n^{-\gamma})$ as long as $T(p,q) \ge c > 0$.

\section{Detailed Proofs for Lower Bound}
To prove the main part of the theorem, the $\Omega(n^{\frac{-4s}{4s+d}})$ rate, we use Le Cam's method.
We decompose the proof into three parts.
In the first part, we adapt Le Cam's method to our setting.
In the second part, we show how the properties established on the functions $u_j$, $j \in [p]$ allow us to apply the technique and establish the theorem.
In the third part, we prove the existence of such functions $u_j$. 
We conclude this section with a proof of the $\Omega(n^{-1/2})$ when $s > d/4$.

\subsection{Proof of Lemma~\ref{lem:lecam}}
\begin{proof}
Define $\Theta_0 = \{g \in \Theta | T(g,q) \ge T(p,q)\}$ and $\Theta_1 = \{g \in \Theta | T(g,q) \le T(p,q) - 2\beta\}$ so that all $g_\lambda \in \Theta_1$ while $p \in \Theta_0$. 
Let $\tilde{\Theta}_i = \textrm{conv}(\{ G^n \times Q^n | g \in \Theta_i\})$ and consider the simple versus simple testing problem between $P \in \Theta_0$ and $G_\lambda \in \Theta_1$. 
The minimax probability of error $p_e$ of such a test is lower bounded by $\frac{1}{2} (1 - \sqrt{h^2(P, G_\lambda)(1 - h^2(P,G_\lambda))/4})$ by Theorem 2.2. of Tsybakov~\yrcite{tsybakov2009introduction}.
So for any test statistic $\psi$, taking supremum over $P \in \Theta_0, G \in \Theta_1$ we have:
\[
\sup_{\theta_{0,1} \in \tilde{\Theta}_{0,1}} p_e(\psi; \theta_0, \theta_1) \ge \frac{1}{2}\left[ 1 - \sqrt{\gamma(1-\gamma/4)}\right],
\]
where $\gamma \ge h^2(P^n\times Q^n, \bar{G}^n\times Q^n \in \tilde{\Theta}_1)$, which holds since $P^n\times Q^n \in \tilde{\Theta}_0$ and $\bar{G}^n\times Q^n \in \tilde{\Theta}_1$ by convexity. 
The same bound holds for after taking infimum over $\psi$.
Finally, if we make an error in the testing problem, we suffer loss at least $\beta$ which results in the statement in the Lemma.
\end{proof}

\subsection{The properties of $u_j$}
Recall that in our proof we partition $[0,1]^d$ into $m$ cubes $R_1, \ldots, R_m$ of side length $m^{-1/d}$.
On each bin we require a function $u_j$ such that:
\[
\textrm{supp}(u_j) \subset \{x | B(x,\epsilon) \in R_j\}, \ \|u_j\|_2^2 = \Theta(m^{-1}), \ \int_{R_j}u_j = 0, \ \int_{R_j}p^{\alpha-1}q^\beta u_j = 0, \|D^ru_j\|_{\infty} \le m^{r/d},
\]
where the last inequality needs to hold for all tuples $r$ with $\sum_j r_j \le s+1$. 
Using these functions $u_j$, we construct the alternatives $g_\lambda = p + K \sum_{\lambda \in \Lambda}\lambda_j u_j \mathbf{1}_{R_j}$ for all $\lambda \in \Lambda = \{-1,1\}^m$.
The third property above ensures that $g_\lambda$ is a valid density.

Properties 2, 4, and 5 ensure that $T(p,q) - T(g_\lambda, q)$ is sufficiently large. 
Indeed, by the von Mises expansion:
\begin{eqnarray*}
T(p,q) - T(g_\lambda, q) &=& K \alpha \sum_{j=1}^m \lambda_j \int_{R_j}p^{\alpha-1}q^\beta u_j + K^2\alpha(\alpha-1) \sum_{j=1}^m \int_{R_j} \xi_p^{\alpha-2}(x)q^\beta(x)u_j^2(x)dx\\
& \ge & c_0 K^2 \sum_{j=1}^m \|u_j\|_2^2 \ge c_1 K^2.
\end{eqnarray*}
Here $\xi$ is the function in the Taylor's remainder theorem, bounded between $p$ and $g_\lambda$, both of which are bounded above and below.
$g_\lambda$ is bounded above and below by property 5 since $\|D_0u_j\|_{\infty} = \|u_j\|_{\infty} \le 1$ which means that $g_\lambda \in [1-K, 1+K]$.
$K$ will be decreasing with $n$, so this quantity will certainly be bounded for $n$ large enough.
Property 2 allows us to arrive at the last line since each $u_j$ is orthogonal to the derivative of $T$, so the first term in the expansion is zero.
Finally property 4 allows us to lower bound $\|u_j\|_2^2$. 

Property 2 is also critical in ensuring that $h^2(P^n\times Q^n, \bar{G}^n \times Q^n)$ is small through the following Theorem of Birge and Massart~\yrcite{birge1995estimation}.
\begin{theorem}[\cite{birge1995estimation}]
Consider a set of densities $p$ and $p_\lambda = p[1+\sum_j \lambda_j v_j(x)]$ for $\lambda \in \Lambda = \{-1,1\}^m$. 
Suppose that (i) $\|v_j\|_{\infty} \le 1$ (ii) $\|1_{R_j^C}v_j\|_1 = 0$, (iii) $\int v_j p = 0$ and (iv) $\int v_j^2 p = \alpha_j > 0$ all hold with:
\[
\alpha = \sup_j \|v_j\|_{\infty}, \ s = n\alpha^2 \sup_j P(R_j), \ c = n\sup_j \alpha_j.
\]
Define $\bar{P}_\Lambda^n = \frac{1}{|\Lambda|}\sum_{\lambda \in \Lambda} P_\lambda^n$.
Then:
\[
h^2(P^n, \bar{P}_\Lambda^n) \le C(\alpha, s, c)n^2\sum_{j=1}^m \alpha_j^2,
\]
where $C < 1/3$ is continuous and non-decreasing with respect to each argument and $C(0,0,0) = 1/16$. 
\label{thm:bm_hellinger}
\end{theorem}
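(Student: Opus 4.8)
The plan is to run the standard Ingster-type reduction for bounding the Hellinger distance between a product density and a mixture of product densities (this is exactly Birge and Massart's argument), while tracking the smallness of the parameters $\alpha,s,c$ so as to obtain the bound with a controlled constant. First I would pass from the squared Hellinger distance to a chi-square-type affinity. Write $\bar p_\Lambda^n = p^n(1 + R_n)$, where $R_n(x_1,\ldots,x_n) = \EE_\lambda \prod_{i=1}^n \big(1 + \sum_j \lambda_j v_j(x_i)\big) - 1$; since $\int \bar p_\Lambda^n\,d\mu^{\otimes n} = 1$ we have $\EE_{P^n}[R_n] = 0$. Using $\sqrt{1+u} \ge 1 + u/2 - u^2/2$ for all $u \ge -1$ (and $R_n \ge -1$ since $\bar p_\Lambda^n \ge 0$), the Hellinger affinity obeys $\int \sqrt{p^n \bar p_\Lambda^n}\,d\mu^{\otimes n} \ge 1 - \tfrac12\EE_{P^n}[R_n^2]$, hence
\[
h^2(P^n,\bar P_\Lambda^n) \le \EE_{P^n}[R_n^2] = \int \frac{(\bar p_\Lambda^n)^2}{p^n}\,d\mu^{\otimes n} - 1 .
\]

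Next I would evaluate the right-hand side by introducing an independent copy $\lambda'$ of $\lambda$. By the coordinatewise product structure,
\[
\int \frac{(\bar p_\Lambda^n)^2}{p^n}\,d\mu^{\otimes n} = \EE_{\lambda,\lambda'}\left(\int \frac{p_\lambda p_{\lambda'}}{p}\,d\mu\right)^n .
\]
Because the $v_j$ have disjoint supports (property (ii)) and $\int p v_j = 0$ (property (iii)), the inner integral equals $1 + \sum_{j=1}^m \lambda_j\lambda'_j\alpha_j$. The products $\epsilon_j := \lambda_j\lambda'_j$ are i.i.d.\ Rademacher, so the expression reduces to $\EE_\epsilon \big(1 + \sum_{j=1}^m \epsilon_j\alpha_j\big)^n$; note this is nonnegative because property (i) forces $\sum_j \alpha_j \le \sup_j\|v_j\|_\infty^2 \sum_j P(R_j) \le 1$.

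Finally I would expand and bound this expectation. Writing $Z = \sum_j \epsilon_j\alpha_j$, symmetry of the $\epsilon_j$ kills all odd moments, so $\EE_\epsilon(1+Z)^n = 1 + \binom{n}{2}\EE[Z^2] + \sum_{l \ge 2}\binom{n}{2l}\EE[Z^{2l}]$, with $\EE[Z^2] = \sum_j \alpha_j^2$, so the leading correction is $\binom n2 \sum_j \alpha_j^2 \le \tfrac{n^2}{2}\sum_j\alpha_j^2$. For the remaining terms one uses property (i) together with the bounds $\sum_j\alpha_j \le \alpha^2$ and $\alpha_j \le \alpha^2 P(R_j)$ built into the definitions $\alpha = \sup_j\|v_j\|_\infty$, $s = n\alpha^2\sup_j P(R_j)$, $c = n\sup_j\alpha_j$; a moment bound for Rademacher sums then shows $\sum_{l\ge2}\binom{n}{2l}\EE[Z^{2l}] \le \big(C(\alpha,s,c) - \tfrac12\big)\, n^2\sum_j\alpha_j^2$ for a function $C$ that is continuous and nondecreasing in each argument, stays below $1/3$, and tends to $1/16$ as $(\alpha,s,c)\to 0$. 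Chaining the three displays gives $h^2(P^n,\bar P_\Lambda^n) \le C(\alpha,s,c)\,n^2\sum_{j=1}^m\alpha_j^2$.

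The hard part is precisely this last step. The naive route --- $(1+Z)^n \le e^{nZ}$ followed by $\EE_\epsilon e^{nZ} \le \exp(\tfrac{n^2}{2}\sum_j\alpha_j^2)$ via $\cosh t \le e^{t^2/2}$ --- only yields $h^2 \le \exp(\tfrac{n^2}{2}\sum_j\alpha_j^2) - 1$, which is too lossy: under the relevant scaling $\sup_j\alpha_j \asymp c/n$ the exponent is $\asymp \alpha^2 c n$, so the bound is not even uniform in $n$, and it cannot produce a constant below $1/3$. Converting this into a genuinely linear-in-$n^2\sum_j\alpha_j^2$ estimate with constant approaching $1/16$ requires exploiting the smallness of all three parameters $\alpha,s,c$ simultaneously (and, for the optimal constant, replacing the crude $\sqrt{1+u}$ step above by Birge and Massart's finer expansion of the affinity), rather than collapsing everything into $\sum_j\alpha_j^2$.
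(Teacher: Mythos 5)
First, a point of reference: the paper does not prove this statement at all --- it is imported verbatim from Birg\'{e} and Massart (1995) and used as a black box in the lower-bound argument --- so there is no in-paper proof to compare against; your attempt has to be judged as a reconstruction of the cited lemma. Your setup is the right one and is carried out correctly: writing $\bar{p}_\Lambda^n = p^n(1+R_n)$ with $\EE_{P^n}[R_n]=0$, passing to the second moment, and using disjoint supports (ii) and centering (iii) to get $\int p_\lambda p_{\lambda'}/p \, d\mu = 1+\sum_j \lambda_j\lambda'_j\alpha_j$, hence $\EE_{P^n}[R_n^2] = \EE_\epsilon\bigl(1+\sum_j\epsilon_j\alpha_j\bigr)^n - 1$, is exactly the standard mixture-versus-product computation underlying the Birg\'{e}--Massart lemma.

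The gap is the final quantitative step, which is the entire content of the theorem, and as written your sketch cannot deliver it. You reduce via $\sqrt{1+u}\ge 1+u/2-u^2/2$ to $h^2 \le \EE_{P^n}[R_n^2]$, whose expansion already has leading term $\binom{n}{2}\sum_j\alpha_j^2 \approx \tfrac{1}{2}n^2\sum_j\alpha_j^2$; since the higher even moments contribute nonnegatively, no bound on them can bring the constant below $1/2$, let alone below $1/3$ or to $1/16$ in the limit. Indeed your claimed inequality $\sum_{l\ge 2}\binom{n}{2l}\EE[Z^{2l}] \le \bigl(C(\alpha,s,c)-\tfrac12\bigr)n^2\sum_j\alpha_j^2$ with $C<1/3$ asks a nonnegative quantity to be at most a negative one, so that display is internally inconsistent. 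You acknowledge that the crude $\sqrt{1+u}$ step must be replaced by ``Birg\'{e} and Massart's finer expansion of the affinity,'' but that finer expansion --- working directly with $\int\sqrt{p^n\bar{p}_\Lambda^n}$ rather than the $\chi^2$-type surrogate, and exploiting the smallness of $\alpha$, $s=n\alpha^2\sup_j P(R_j)$ and $c=n\sup_j\alpha_j$ to control the higher-order terms so that the constant is continuous, monotone, bounded by $1/3$ and equal to $1/16$ at the origin --- is precisely the part you defer, so the theorem as stated (with its specific constant function $C(\alpha,s,c)$) is not established by the proposal. If you only needed $h^2 \le \tfrac12 e^{c\alpha^2 s \cdot(\text{const})} n^2\sum_j\alpha_j^2$ or a similar qualitative bound, your route would suffice after an honest bound on the moment series; to get the stated lemma you must reproduce the affinity expansion from the source.
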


In bounding the Hellinger distance $h^2(P^n\times Q^n, \bar{G}^n \times Q^n)$ we first use the property that hellinger distance decomposes across product measures:
\[
h^2(P^n \times Q^n, \bar{G}^n \times Q^n) = 2 \left(1 - (1-h^2(P^n, \bar{G}^n)/2)(1-h^2(Q^n, Q^n)/2)\right) = h^2(P^n, \bar{G}^n).
\]
If we define $v_j(x) = Ku_j(x)/p(x)$ then we have $g_\lambda = p[1+\sum_j\lambda_j v_j]$ as needed by Theorem~\ref{thm:bm_hellinger}. 
We immediately satisfy requirements 1, 2, and 3 and we have $\int v_j^2 p = K^2 \int u_j^2/p \le K^2 \kappa_l/m = \alpha_j$.
Thus in applying the theorem we have:
\[
h^2(P^n \times Q^m, \bar{G}^n \times Q^m) \le (1/3) n^2 \sum_{j=1}^m \alpha_j^2 \le \frac{C n^2K^4}{m}.
\]

Property 1 and 5 ensure that $g_\lambda \in \Sigma(s,L)$ via the following argument. 
Defining $u_\lambda = K\sum_j \lambda_ju_j$, we will first show that $u_\lambda$ is holder smooth and $g_\lambda$ will be holder by a final application of the triangle inequality.
For $u_\lambda$, fix $r$ with $\sum_j r_j = s$ and fix $x,y$. 
Let $x_1$ be the boundary point of $R_j$, the bin containing $x$ along the line between $x$ and $y$ and let $y_1$ be the analogous boundary point for $y$.
\begin{eqnarray*}
|D^ru_\lambda(x) - D^ru_\lambda(y)| &\le& |D^ru_\lambda(x) - D^ru_\lambda(x_1)| + |D^ru_\lambda(x_1) - D^ru_\lambda(y_1)|+ |D^ru_\lambda(y_1) - D^ru_\lambda(y)|\\
& = & |D^ru_\lambda(x) - D^ru_\lambda(x_1)| + |D^ru_\lambda(y_1) - D^ru_\lambda(y)|\\
& = & \int_{\gamma(x,x_1)} \nabla D^r u_\lambda(z) dz + \int_{\gamma(y,y_1)} \nabla D^r u_\lambda(z) dz\\
& \le & K \|D^{r+1}u_j\|_\infty(\|x-x_1\|_2 + \|y-y_1\|_2)\\
& \le & K m^{(r+1)/d} \left( \|x-x_1\|_2^{s-r}\|x-x_1\|_2^{1-(s-r)} + \|y-y_1\|_2^{s-r}\|y-y_1\|_2^{1-(s-r)}\right)\\
& \le & K m^{(r+1)/d} \sqrt{d} m^{-\frac{1-(s-r)}{d}}\left( \|x - x_1\|_2^{s-r} + \|y-y_1\|_2^{s-r}\right) \\
& \le & K m^{s/d} \sqrt{d} \|x - y\|_2^{s-r} \le L \|x - y\|_2^{s-r}
\end{eqnarray*}
The first line is an application of the triangle inequality.
In the second line we use that $u_\lambda$ is zero and has all derivatives equal to zero on the boundaries of the cubes $R_j$. 
This follows from the fact that $u_j$ is not supported in the band around the border of $R_j$.
The third line is an application of the fundamental theorem of calculus, $\gamma(x,x_1)$ is the path between $x$ and $x_1$.
The fourth line follows from H\"{o}lder's inequality, we replace each derivative with its supremum and are left with just the path integral, which simplifies to the length of the path, i.e. $\|x-x_1\|_2$. 
In the fifth line we use the assumption $\|D^r u_j\|_{\infty} \le m^{r/d}$ for any derivative operator with $\sum_j r_j \le s+1$. 
To arrive at the sixth line, notice that since $x, x_1$ are in the same box $R_j$, we have $\|x - x_1\|_2 \le \sqrt{d} m^{-1/d}$ (there are $m$ boxes and each one has length $m^{-1/d}$ on each side). 
The last line is true since $x_1,y_1$ are on the line segment between $x,y$. 

In other words, $g_\lambda$ is holder smooth as long as $Km^{s/d}\sqrt{d} \asymp L$, imposing the requirement that $K = O(m^{-s/d})$. 
So if we pick $m = n^{\frac{2d}{4s+d}}$ and $K = m^{-s/d} = n^{\frac{-2s}{4s+d}}$ we get that $g_\lambda \in \Sigma(s,L)$ as long as there is some wiggle room around $p$.
We also get that the Hellinger distance is bounded by $O( n^2 n^{\frac{-8s}{4s+d}} n^{\frac{-2d}{4s+d}}) = O(1)$ and the distance in our metric is $n^{\frac{-4s}{4s+d}}$ as we desired. 
We can apply Theorem~\ref{lem:lecam} and arrive at the result. 

\subsection{Existence of $u_j$}
To wrap up, we need to show that we can in fact find the functions $u_j$. 
We can do this by mapping $R_j$ to $[0,1]^d$ and using an orthonormal system $\{\phi_j\}_{j=1}^q$ for $L^2([0,1]^d)$ with $q \ge 3$.
Suppose that  $\phi_j$ satisfy (i) $\phi_1 = 1$, $\phi_j(x) = 0$ for $x \notin [\epsilon, 1-\epsilon]^d$ and (iii) $\|D^r\phi_j\|_{\infty} \le K < \infty$ for all $j$.
Certainly we can find such an orthonormal system.

Now for any function $f \in L^2([0,1]^d)$, we can easily find a unit-normed function $\tilde{v} \in \textrm{span}(\{\phi_j\})$ such that $\tilde{v} \perp \phi_1$, and $\tilde{v} \perp f$. 
If we write $\tilde{v} = \sum_i c_i \phi_i$ we have that $D^r v = c_i D^r \phi_i$ so that $\|D^rv\|_{\infty} \le K \sum_i |c_i| \le K \sqrt{q}$ since $\tilde{v}$ is unit-normed. 
Notice that the vector $v = \tilde{v}(K\sqrt{q})^{-1}$ has upper and lower-bounded $\ell_2^2$-norm while having all $\|D^rv\|_{\infty} \le 1$.

To construct the functions $u_j$, map the $R_j= \Pi_{i=1}^d[j_im^{-1/d}, (j_i+1)m^{-1/d}]$ to $[0,1]^d$ and let the function $f = p^{\alpha-1}(x)q^\beta(x)$ mapped appropriately to $[0,1]^d$. 
Use the function $v_j$ constructed in the previous paragraph.
In mapping back to $R_j$, let $u_j(x) = v_j(m^{1/d}(x - (j_1, \ldots, j_d))^T)$ so that $\int_{R_j} u_j^2(x)dx = m^{-1} \int v_j^2(x)dx = \Omega(1/m)$ and $\|D^r u_j\|_{\infty} \le m^{r/d}$.
These functions $u_j$ meet the requirements 1-5 outlined above, allowing us to apply Le Cam's method.

\subsection{An $n^{-1/2}$ Lower Bound when $s > d/4$}
To obtain the $n^{-1/2}$ lower bound for the highly-smooth setting, we will reduce the problem of estimating $T(p,q)$ to that of estimating a quadratic functional of the two densities:
\begin{eqnarray}
\theta(p,q) = \int a_1(x)p(x) + a_2(x)q(x) + a_3(x) p(x)q(x) + a_4(x)p^2(x) + a_5(x)q(x) d \mu(x)
\label{eq:quadratic_functional}
\end{eqnarray}
for some known functions $a_i: [0,1]^d \rightarrow \RR$, $i \in \{1, \ldots, 5\}$. 
We will then use the following lower bound on the rate of estimating these functionals to establish a lower bound in our problem:
\begin{theorem}
Let $a_i:[0,1]^d \rightarrow \RR, i \in \{1, \ldots, 5\}$ be continuous, bounded, non-constant functions and let $\theta(p,q)$ be as in Equation~\ref{eq:quadratic_functional}.
Then:
\begin{eqnarray}
\liminf_{n \rightarrow \infty} \inf_{\hat{\theta}_n} \sup_{p,q \in \Sigma(s,L)}\PP_{X_1^n \sim p,Y_1^n \sim q}[|\hat{\theta}_n - \theta(p,q)| \ge \epsilon n^{-1/2}] \ge c > 0
\end{eqnarray}
For some constants $\epsilon,c > 0$. 
\label{thm:linear_lower_bound}
\end{theorem}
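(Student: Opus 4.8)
The plan is to reduce the estimation problem to a two-point testing problem and apply Le Cam's method (Theorem 2.2 of \cite{tsybakov2009introduction}, exactly as in the proof of Lemma~\ref{lem:lecam}). The driving observation is that, with $q$ held fixed at a smooth base density, $\theta(\cdot,q)$ has a non-vanishing derivative along a smooth, mean-zero perturbation of $p$; hence two densities differing by an $O(n^{-1/2})$-sized perturbation of this kind are statistically indistinguishable yet carry $\theta$-values that are $\Omega(n^{-1/2})$ apart. Since such a perturbation can be taken arbitrarily smooth, no amount of smoothness helps, which is exactly why the bound is the parametric $n^{-1/2}$ regardless of $s$.

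First I would fix base densities $p_0,q_0$ in the interior of $\Sigma(s,L)$ that are bounded above and below; for concreteness take them to be positive constants $c_1,c_2$, chosen generically. Holding $q=q_0$, the first variation of $\theta(\cdot,q_0)$ at $p_0$ in a direction $\psi$ with $\int \psi\,d\mu = 0$ is $\int\big(a_1(x)+a_3(x)q_0(x)+2a_4(x)p_0(x)\big)\psi(x)\,d\mu(x)$, since only the terms of $\theta$ involving $p$ contribute. A continuous function has nonzero inner product against some smooth, compactly-supported, mean-zero function if and only if it is non-constant; and because $a_1,a_3,a_4$ are non-constant, the multiplier $m_0 := a_1 + a_3 q_0 + 2a_4 p_0$ is non-constant for a generic choice of $(c_1,c_2)$. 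Hence one can pick $\psi\in C^\infty$ supported strictly inside $[0,1]^d$ (so all periodicity and boundary conditions are automatic) with $\int\psi\,d\mu=0$, $\|\psi\|_\infty\le 1$, and $\delta := \int m_0\,\psi\,d\mu \ne 0$. Put $p_t = p_0 + t\psi$: for $|t|\le t_0$ small enough, $p_t$ is a density bounded above and below, and since $t\psi$ is H\"older with constant $O(|t|)$, $p_t\in\Sigma(s,L)$.

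Next I would take the hypotheses $H_0=(p_0,q_0)$ and $H_1=(p_{t_n},q_0)$ with $t_n = c\,n^{-1/2}$. The two $n$-fold product measures differ only through the $p$-marginals, and $\chi^2(p_{t_n},p_0) = t_n^2\int \psi^2/p_0\,d\mu \le C t_n^2$, so by tensorization $\chi^2(P_{t_n}^n,P_0^n) \le (1+Ct_n^2)^n - 1 \to e^{Cc^2}-1$, which is bounded and can be made arbitrarily small by choosing $c$ small; equivalently the Hellinger affinity is bounded away from $0$. On the other hand, a first-order expansion gives $\theta(p_{t_n},q_0)-\theta(p_0,q_0) = t_n\delta + O(t_n^2)$, so the two hypotheses are separated by at least $\tfrac12|\delta|\,c\,n^{-1/2}$ in the value of $\theta$ once $n$ is large. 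Le Cam's two-point lemma then lower-bounds the error of any test between $H_0$ and $H_1$ by a constant $p_e>0$. Finally, given any estimator $\hat\theta_n$, the test that declares $H_1$ iff $\hat\theta_n$ is closer to $\theta(p_{t_n},q_0)$ than to $\theta(p_0,q_0)$ errs only on the event $\{|\hat\theta_n-\theta|\ge\epsilon n^{-1/2}\}$ with $\epsilon=\tfrac14|\delta|\,c$, so $\sup_{p,q\in\Sigma(s,L)}\PP_{p,q}^n[|\hat\theta_n-\theta(p,q)|\ge\epsilon n^{-1/2}] \ge p_e$ for all large $n$, which yields the claimed $\liminf$ bound.

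The one nontrivial step is the joint construction in the second paragraph: $\psi$ must be simultaneously smooth, mean-zero, supported in the interior, small enough that $p_t$ stays a valid density bounded away from $0$ and $\infty$ and inside $\Sigma(s,L)$, and chosen so that $\delta\ne 0$. The last requirement is where the non-constancy of the $a_i$ is used — it guarantees $m_0$ is not identically constant for a generic base point — while the remaining requirements are handled by localizing $\psi$ strictly inside the cube and scaling it down. Everything else is routine Le Cam bookkeeping and is insensitive to $s$.
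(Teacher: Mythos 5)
Your proposal follows essentially the same route as the paper: fix $q$ at a smooth base, perturb $p$ around a flat base by an $n^{-1/2}$-scaled, smooth, mean-zero direction chosen to have nonzero inner product with the linear-term multiplier $f = a_1 + a_3 q + 2a_4 p$, and run a two-point Le Cam argument (the paper controls the product-measure distance via $KL(p_1^n\times q^n, p_0^n\times q^n) \le n\|u\|_2^2$, you via $\chi^2$ tensorization; the paper builds the direction inside a finite orthonormal system to certify H\"{o}lder membership, you take a $C^\infty$ bump supported in the interior — all equivalent bookkeeping).

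One step as written does not work: you cannot choose the constants $(c_1,c_2)$ ``generically,'' because a constant density on $[0,1]^d$ must be identically $1$, so $p_0 = q_0 = 1$ is forced and the multiplier is pinned to $m_0 = a_1 + a_3 + 2a_4$, which can be constant (even zero) despite each $a_i$ being non-constant — e.g.\ $a_1 = a_3 = \sin$, $a_4 = -\sin$. The genericity you want is still available, but it must come from the base \emph{density}, not from constants: take $q_0 = 1 + \eta w$ for a small smooth mean-zero $w$; if $a_1 + a_3 q_0 + 2a_4$ were constant for two distinct choices of $\eta$ (or $w$), then $a_3$ would be constant, a contradiction, so a suitable base exists and the rest of your argument goes through unchanged. (The paper's own proof makes the analogous non-degeneracy assertion — that $f$ is not a multiple of $\phi_1 = 1$ — without spelling this out, so your proof is on the same footing once this repair is made.)
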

\begin{proof}
We will use Le Cam's Method to establish the lower bound. 
Let us fix $q$ once and for all. We will only vary $p$.
Let $p_0(x) = 1$ and $p_1(x) = 1+u(x)$ for some function $u(x)$ that we will select later.
By Theorem 2.2 of~\cite{tsybakov2009introduction} (essentially the Neyman-Pearson Lemma) if we can upper bound $KL(p_1^n \times q^n, p_0^n \times q^n)$ we have a lower bound on the probability of making an error in the simple versus simple hypothesis test between the two possible distributions when $X_1^n,\sim p_1$ and $Y_1^n \sim q$. 
Mathematically, define $p_{e,1}(\psi) = \PP_{X_1^n \sim p_1, Y_1^n \sim q} [ \psi(X_1^n, Y_1^n) \ne 1]$ for a test statistic $\psi$ taking values in $\{0,1\}$. 
Also define $p_{e,1} = \inf_{\psi} p_{e,1}(\psi)$.
Then Theorem 2.2 of~\cite{tsybakov2009introduction} says that if $KL(p_1^n \times q^n, p_0^n \times q^n) \le \alpha < \infty$ then
\[
p_{e,1} \ge \max\left(\frac{1}{4}\exp(-\alpha), \frac{1-\sqrt{\alpha/2}}{2}\right)
\]
So let us bound the KL-divergence:
\begin{eqnarray*}
KL(p_1^{n}\times q^n, p_0^{n} \times q^n) = nKL(p_1, p_0) = n \int (1+u(x)) \log(1+u(x)) dx \le n \int u(x) +u^2(x) dx = n \|u\|_2^2
\end{eqnarray*}
Here we used that $\int u(x) = 0$ if $p_1$ is to remain a density. 
This is one of the requirements on the function $u$ that we will pick.
If the KL-divergence is to remain bounded, we will also require that $\|u\|_2^2 \le c/n$ for some constant.

If we make a mistake in the testing problem, we suffer at least $1/2 |\theta(p_0,q) - \theta(p_1, q)|$ loss in the estimation problem.
So we must lower bound the absolute difference between the two functional values.
\begin{eqnarray*}
|\theta(p_0, q) - \theta(p_1, q)| &=& | \int a_1(x)u(x) + a_3(x)q(x)u(x) + 2 a_4(x) u(x) + a_4(x) u^2(x) d\mu(x)|\\
& = & |\int f(x) u(x) + a_4(x) u^2(x) d\mu(x)|
\end{eqnarray*}
where $f(x) = a_1(x) + a_3(x)q(x) + 2a_4(x)$.
Suppose we had a function $v$ such that:
\[
 \int v(x) = 0, \  \|v(x)\|_2^2 = O(1),  \ p_1 = 1+1/\sqrt{n} v(x) \in \Sigma(s,L), \  \int f(x)v(x) = \Omega(1)
\]
Then if we use $u(x) =n^{-1/2} v(x)$ the loss we suffer is at least $c_1/\sqrt{n} - c_2/n \ge \epsilon n^{-1/2}$ for some $\epsilon > 0$ for $n$ sufficiently large. 
At the same time, the KL-divergence between the two hypothesis is also $O(1)$.
So we would be able to apply Le Cam's inequality.

So, we just need to find a sufficiently smooth function $v$ with constant $\ell_2^2$ norm and constant inner product with $f$.
To do this, consider an orthonormal system $\phi_1, \ldots, \phi_q$ with $q \ge 3$ of $L^2([0,1]^d)$ such that (i) $\phi_j(x) = 1$, (ii) $f \in \textrm{span}(\{\phi_j\}_{j=1}^q)$ and (iii) $\|D^r\phi_j\|_{\infty} \le K < \infty$ for all $j$ and all tuples $r$ with $\sum_j r_j \le s + 1$. 
It is always possible to construct such a system as long as $f$ itself has bounded $r$-th derivatives, which is true since $f$ itself is a continuous, bounded function over a compact domain. 
Let $L$ denote the linear space spanned by $\{\phi_j\}$. 
Earlier we showed that if $v \in L$, then $v \in \Sigma(s, A)$ for sufficiently large constant $A$. 
So we can let $v$ be any unit-normed function in $L' = \{v \in L | \langle v, f \rangle = c, \langle v, \phi_1 \rangle = 0\}$, which is an affine space of dimension at least $1$ (since $f \ne c \phi_1$).

Then $u(x) = v(x)/\sqrt{n}$ meets all of the requirements.
Notice that since $v \in \Sigma(s,A)$, we have that $u \in \Sigma(s, A/\sqrt{n}) \subset \Sigma(s, L)$ for $n$ sufficiently large. 
\end{proof}


In what follows, the functional $\theta$ that we are trying to estimate will actually be a random quantity. 
However, since Theorem~\ref{thm:linear_lower_bound} applies to any set of five bounded continuous function $a_1, \ldots, a_5$, it actually applies to any distribution over this space of five bounded continuous functions.
Mathematically, for any distribution $\Dcal$ over this space of bounded continuous functions:
\[
\liminf_{n \rightarrow \infty} \inf_{\hat{\theta}_n} \sup_{p,q \in \Sigma(s,L)} \PP_{X_1^n \sim p, Y_1^n \sim q, (a_1, \ldots, a_5) \sim \Dcal} \left[ |\hat{\theta}_n(a_1^5) - \theta(a_1^5,p,q)| \ge \epsilon n^{-1/2}\right] \ge c > 0
\]
where $\theta(a_1^5,p,q)$ is given in Equation~\ref{eq:quadratic_functional}.

Let us use Theorem~\ref{thm:linear_lower_bound} to prove a lower bound for estimating $T(p,q) = \int p^\alpha q^\beta$. 
Suppose we had an estimator $\widehat{T}_n$ for $T(p,q)$ that converges at rate $o(n^{-1/2})$, say $\forall p, q, n, \EE[|\widehat{T}_n - T(p,q)|] \le c_1n^{-1/2-\epsilon}$ for some constants $c_1, \epsilon > 0$. 
We will use it to construct an estimator for a quadratic functional of $p,q$ with better-than-$\sqrt{n}$ rate, which will contradict Theorem~\ref{thm:linear_lower_bound}.

The quadratic functional of $p,q$ will be the terms in the second order expansion of $T(p,q)$ about $T(\phat_n, \qhat_n)$. 

Given $2n$ samples, as in our upper bound, we use the first $n$ to construct estimators $\phat_n, \qhat_n$ for $p,q$ respectively. 
We use the second $n$ samples to compute $\hat{T}_n$. 
The estimator for $\theta$ will be $\hat{\theta}_{2n} = \widehat{T}_n - C_2 T(\phat_n,\qhat_n)$.
Where we are collecting all of the terms of the form $T(\phat_n, \qhat_n)$ together.
Recall that $C_2$ is the coefficient for all of these terms.

The risk of the estimator is:
\begin{eqnarray*}
\EE_{X_1^{2n}}[|\hat{\theta}_n - \theta|] &\le& \EE_{X_{n+1}^{2n}}[|\widehat{T}_n - T|] + \EE_{X_1^{2n}}[|T - C_2 T(\phat,\qhat) - \theta|]\\
& \le & c_1n^{-1/2-\epsilon} + O(\EE_{X_1^n}[\|p - \phat\|_3^3 + \|q -\qhat\|_3^3])\\
& \le & c_1n^{-1/2-\epsilon} + c_2n^{\frac{-3s}{2s+d}}
\end{eqnarray*}
for constants $c_1, c_2 > 0$.
Now if $s > d/4$, both terms are $o(n^{-1/2})$, so we have $\EE[|\hat{\theta}_n - \theta|] = o(n^{-1/2})$.
The functions $\phat_n,\qhat_n$ are deterministic functions of $X_1^n, Y_1^n$, so we can think of $X_1^n$ as encoding a distribution over functions $\phat_n,\qhat_n$. 

More formally, let $\Dcal$ encode the following distribution: We drawn $X_1^n, Y_1^n$ from $p,q$ respectively and compute $\phat_n, \qhat_n$.
With these, the five functions $a_1, \ldots, a_5$ are:
\begin{eqnarray*}
a_1 &=& \alpha (2 - \alpha - \beta)\phat_n^{\alpha-1} \qhat_n^\beta\\
a_2 &=& \beta (2 - \alpha - \beta) \phat_n^\alpha\qhat_n^{\beta-1}\\
a_3 &=& \alpha\beta \phat_n^{\alpha-1} \qhat_n^{\beta-1}\\
a_4 &=& 1/2 \alpha(\alpha-1) \phat_n^{\alpha-2}\qhat_n^\beta\\
a_5 &=& 1/2 \beta(\beta-1) \phat_n^\alpha \qhat_n^{\beta-2}
\end{eqnarray*}
Notice that all of these functions are continuous and they can be bounded from above and below if we use the truncated kernel density estimators.
Now whenever $s> d/4$:
\[
\EE_{(a_1, \ldots, a_5) \sim \Dcal} \EE_{X_{1}^n \sim p, Y_1^n \sim q} \left[ |\hat{\theta} - \theta|\right] = \EE_{X_1^{2n} \sim p, Y_1^{2n} \sim q}\left[|\hat{\theta} - \theta|\right] \le cn^{-1/2-\epsilon}
\]
which contradicts the lower bound.
Via Markov's inequality, $\PP_{X_1^{2n}}[ |\hat{\theta}_n - \theta| \ge c_4 n^{-1/2}] \le o(n^{-1/2})/n^{-1/2} \rightarrow 0$ which contradicts our discussion following Theorem~\ref{thm:linear_lower_bound}.
This shows that when $s > d/4$, one cannot estimate $T(p,q)$ are faster than $\sqrt{n}$ rate. 

\subsection{Translating to $T_\alpha$ and $D_\alpha$}
Suppose we have an estimator $\hat{S}_\alpha$ for the Tsallis-$\alpha$ divergence, such that for all $p,q \in \Sigma(s,l) \EE[|\hat{S}_\alpha - S_\alpha|] \le \epsilon_n$.
We can define an estimator $\hat{T}$ for $T(p,q) = \int p^\alpha q^{1-\alpha}$ as $\hat{T} = (\alpha-1)\hat{S}_\alpha + 1$. 
The error between $\hat{T}$ and $T$ is:
\[
\EE[|\hat{T} - T|] = |\alpha-1| \EE[|\hat{S}_\alpha - S_\alpha|] \le |\alpha-1| \epsilon_n
\]
We therefore know that $\epsilon_n = \Omega(n^{-\gamma})$ where $\gamma = \min\{\frac{4s}{4s+d}, 1/2\}$ since otherwise we would have an estimator $\hat{T}$ for $T(p,q)$ with rate $o(n^{-\gamma})$, which contradicts Theorem~\ref{thm:lower_bound}.

For $D_\alpha$, we use the same proof structure, but computing the error for $\hat{T}$ is more involved. 
The estimator $\hat{T} = \exp\{(\alpha-1)\hat{D}\}$ has error:
\[
\EE[|\hat{T} - T|] = \EE\left[ |\exp\{(\alpha-1)\hat{D}\} - \exp\{(\alpha-1)D_\alpha\}|\right]
\]
We would like to eliminate the absolute value, so we will have to consider all of the cases.
If $\alpha < 1$ and $D > \hat{D}$ then the first term dominates the second so we can simply drop the absolute value sign. 
In this case we can use convexity of $e^x$ to upper bound by:
\[
\le (\alpha-1) \EE[e^{(\alpha-1)\hat{D}} (\hat{D} - D_\alpha)] = (1-\alpha) \EE[e^{(\alpha-1)\hat{D}} (D_\alpha - \hat{D})] \le C \epsilon_n
\]
as long as $D_\alpha$ is bounded from below, which implies that for $n$ large enough, $e^{(\alpha-1)\hat{D}} = O(1)$.
Actually the other cases are analogous, for example if $\hat{D} > D$, then to remove the absolute value, we must swap the two terms, after which we can use convexity to arrive at the same upper bound.
Thus we have shown that $\EE[|\hat{T} - T|] = O(\epsilon_n)$ which implies that $\EE[|\hat{D} - D|] = \Omega(n^{-\gamma})$ as claimed.

\section{More Auxiliary Results}

\begin{lemma}[H\"{o}lder is contained in Sobolev]
Let $f \in \Sigma(s, L)$ belong to the periodic holder class with smoothness $s$. 
Then $f$ belongs to the sobolev ellipsoid $\Wcal(s', L')$ where $\phi_k(x) = e^{2i \pi k^Tx}$ is the fourier basis, $k \in \ZZ^d$, $s' < s$ and:
\[
L' = \frac{d CL^2}{(2 \pi)^{2\lfloor s \rfloor}}
\]
with $C = \sum_{l=0}^{\infty} 4^{l(s'-s)}$.
\label{lem:sob_holder}
\end{lemma}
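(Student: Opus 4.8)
Write $m=\lfloor s\rfloor$ and $\alpha=s-m\in(0,1]$, and let $a_k=\int_{[0,1]^d}f(x)e^{-2\pi ik^Tx}\,d\mu(x)$ be the Fourier coefficients of $f$. The plan is to slice the $d$-dimensional estimate into $d$ one-dimensional ones. Fix a coordinate $j$, write $k=(k_j,k_{-j})$, and for each value $x_{-j}$ of the other coordinates consider the one-dimensional section $f_{x_{-j}}(t):=f(t,x_{-j})$, a periodic function on $[0,1]$ with Fourier coefficients $\widehat{f_{x_{-j}}}(k_j)=\sum_{k_{-j}\in\ZZ^{d-1}}a_k\,e^{2\pi ik_{-j}^Tx_{-j}}$. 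Parseval in the $d-1$ variables $x_{-j}$ gives $\sum_{k_{-j}}|a_k|^2=\int_{[0,1]^{d-1}}|\widehat{f_{x_{-j}}}(k_j)|^2\,dx_{-j}$; multiplying by $|k_j|^{2s'}$, summing over $k_j$, and then over $j=1,\dots,d$ reduces the target $\sum_k\bigl(\sum_j|k_j|^{2s'}\bigr)|a_k|^2\le L'$ to the one-dimensional bound
\[
\sum_{k_j\in\ZZ}|k_j|^{2s'}\,|\widehat{f_{x_{-j}}}(k_j)|^2\ \le\ \frac{C\,L^2}{(2\pi)^{2m}},
\]
to be shown uniformly in $x_{-j}$ (the sum over $j$ then produces the factor $d$).

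For the one-dimensional bound, fix $x_{-j}$ and set $g:=\partial_t^m f_{x_{-j}}$. By the definition of $\Sigma(s,L)$ applied to the multi-index $m e_j$, $g$ is periodic and $|g(t)-g(t')|\le L|t-t'|^\alpha$; and since $f_{x_{-j}}$ and its first $m-1$ derivatives are periodic, $m$-fold integration by parts gives $\widehat g(k_j)=(2\pi i k_j)^m\,\widehat{f_{x_{-j}}}(k_j)$ (for $m=0$ this is vacuous, $g=f_{x_{-j}}$). Parseval applied to the shift yields, for every $h$,
\[
\sum_{k_j\in\ZZ}\bigl|1-e^{2\pi i k_j h}\bigr|^2|\widehat g(k_j)|^2=\|g(\cdot+h)-g\|_{L^2[0,1]}^2\le L^2|h|^{2\alpha}.
\]
Now decompose $\ZZ\setminus\{0\}$ into dyadic blocks $B_l=\{k_j:2^l\le|k_j|<2^{l+1}\}$ and choose $h=h_l$ with $|h_l|\asymp 2^{-l}$ so that $2\pi k_j h_l$ stays in a fixed window bounded away from $2\pi\ZZ$ for all $k_j\in B_l$; then $|1-e^{2\pi i k_j h_l}|^2\ge c_0>0$ on $B_l$, whence $\sum_{k_j\in B_l}|\widehat g(k_j)|^2\le c_0^{-1}L^2|h_l|^{2\alpha}\le c\,L^2 4^{-l\alpha}$. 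Substituting $|\widehat{f_{x_{-j}}}(k_j)|^2=|\widehat g(k_j)|^2/(2\pi|k_j|)^{2m}$, using $|k_j|^{2(s'-m)}\le 2^{2(l+1)(s'-m)}$ on $B_l$ (taking $s'\ge\lfloor s\rfloor$, the range relevant to the application; smaller $s'$ only speeds convergence), and summing over $l\ge0$ leaves the geometric series $\sum_{l\ge0}4^{l(s'-m-\alpha)}=\sum_{l\ge0}4^{l(s'-s)}=C$, which converges exactly because $s'<s$. Tracking the constants through the two substitutions gives the stated $L'$ (up to absolute constants, which may be folded into $C$).

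\emph{Main obstacle.} The heart of the argument is the dyadic Fourier-decay estimate of the middle step. A purely pointwise bound $|\widehat g(k_j)|=O(L|k_j|^{-\alpha})$ (from the single shift $h=1/(2k_j)$) does not suffice: it only yields $\sum_{k_j}|k_j|^{2s'}|\widehat{f_{x_{-j}}}(k_j)|^2=O\bigl(\sum_{k_j\ne0}|k_j|^{2(s'-s)}\bigr)$, which diverges once $s-s'\le\tfrac12$, whereas the lemma must hold for all $s'<s$; one genuinely needs the $L^2$-averaged (Littlewood--Paley) form of Hölder decay, controlling a whole dyadic block at once. Reducing to one dimension first is what keeps this clean — in $\RR^d$ no single shift $h_l$ can separate all frequencies with $\|k\|_\infty\asymp 2^l$ simultaneously (their directions vary), so one would otherwise have to average $|1-e^{2\pi i k^Th}|^2$ over $h$ in a small cube; slicing into sections removes this complication. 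The remaining points — periodicity justifying both the integration by parts and the shift identity, and the accounting needed to match the precise constant — are routine.
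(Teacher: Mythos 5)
Your proof is correct and takes essentially the same route as the paper's: integration by parts via periodicity to pass to $D^{m e_j}f$, the $L^2$ shift identity combined with the H\"{o}lder bound, a dyadic block decomposition in $|k_j|$ with a block-adapted shift $h_l \asymp 2^{-l}$, and a geometric series that converges exactly when $s' < s$, summed over the $d$ coordinates. The only cosmetic difference is that you slice into one-dimensional sections via Parseval in $x_{-j}$, whereas the paper works with the $d$-dimensional coefficients directly but shifts only the $j$-th coordinate, so the multiplier $\sin^2(2\pi k_j h)$ depends on $k_j$ alone and the same dyadic argument goes through without slicing (the difficulty you anticipate about varying frequency directions never arises).
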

\begin{proof}
Let us decompose $s = r+\alpha$ where $r = \lfloor s \rfloor$ and $\alpha \in (0,1]$. 
We need to bound:
\[
\sum_{(k_1,\ldots, k_d) \in \ZZ^d} (\sum_{j=1}^d |k_j|^{2s'}) |\alpha_k|^2
\]
where $\alpha_k = \int f(x) \phi_k(x)dx$.
This is equivalent to bounding, for each $j = [d], \sum_{k \in \ZZ^d} |k_j|^{2s'} |\alpha_k|^2$ so let us fix a dimension $j$ for now.
Using repeated integration by parts and the fact that $D^{\vec{r}} f$ is period for all $\vec{r}$ with $\sum_j r_j \le r$. we get
\begin{eqnarray*}
\left|\int \frac{\partial^r}{\partial x_j^r}f(x)\phi_k(x) dx\right| = |2 \pi i k_j|^r |\int f(x) \phi_k(x) dx| = |2 \pi i k_j|^r |\alpha_k|
\end{eqnarray*}
Let us write $g(x) = \frac{\partial^r}{\partial x_j^r}f(x)$. 
Then since $f \in \Sigma(s, L)$, we know that $g$ satisfies:
\[
|g(x) - g(y)| \le L \|x - h\|^\alpha
\]
for all $x,y$. 
We will use this fact to bound $\sum_{k \in \ZZ^d} |k_j|^{2\alpha'} |b_k|^2$  where $b_k = \int g(x) \phi_k(x)$ and $\alpha' < \alpha$ which will give us a bound on $\sum_{k \in \ZZ^d} |k_j|^{2s'} |\alpha_k|$ via the above calculation.
In particular, suppose that $\sum_{k \in ZZ^d} |k_j|^{2 \alpha'} |b_k|^2 \le \gamma_j$, then:
\[
\sum_{k \in \ZZ^d} |k_j|^{2s'} |\alpha_k|^2 = \sum_{k \in \ZZ^d} |k_j|^{2r+2\alpha'}|\alpha_k^2| = |2\pi i|^{-2 r} \sum_{k \in \ZZ^d} |k_j|^{2 \alpha'} |b_k|^2 \le (2\pi)^{-2 r} \gamma_j
\]

Notice that:
\[
g(x_1, \ldots, x_j-h, \ldots, x_d) - g(x_1, \ldots, x_j+h, \ldots, x_d) = \sum_{k \in \ZZ^d} b_k e^{2 i \pi k^Tx} 2i \sin(2\pi k_j h)
\]
This means that:
\begin{eqnarray*}
4 \sum_{k \in \ZZ^d} |b_k|^2 \sin^2(2 \pi k_j h) = \int (g(x_1, \ldots, x_j-h, \ldots, x_d) - g(x_1, \ldots, x_j+h, \ldots, x_d))^2 dx \le L^2 |h|^{2\alpha}
\end{eqnarray*}
Notice that $\sin^2(\pi/2) > \sin^2(\pi/4) \ge 1/2$ so if we pick $h = 1/(8q)$ and $k_j \in \{q, \ldots, 2q-1\} \cup \{-q, \ldots, -2q+1\}$ we can lower bound the left hand side. 
To be concrete, letting $S_q = \{k \in \ZZ^d | k_j \in \{q, \ldots, 2q-1\} \cup \{-q, \ldots, -2q + 1\}\}$:
\begin{eqnarray*}
\sum_{k \in \ZZ^d} |b_k|^2 |k_j|^{2\alpha'} = \sum_{l = 0}^{\infty} \sum_{k \in S_{2^l}} |b_k|^2 |k_j|^{2\alpha'} \le \sum_{l=0}^{\infty} (2^{l+1})^{2\alpha'} \sum_{k\in S_{2^l}} |b_k|^2 
\end{eqnarray*}
But:
\begin{eqnarray*}
\sum_{k \in S_{2^l}} |b_k|^2 \le 2 \sum_{k \in S_{2^l}} |b_k|^2 \sin^2(2 \pi k_j (1/2^{l+3})) \le 2 \sum_{k \in \ZZ^d} |b_k|^2 \sin^2 (2 \pi k_j (1/2^{l+3})) \le \frac{L^2}{2} 2^{-2\alpha (l+3)}
\end{eqnarray*}
Using this bound above, we get:
\[
\sum_{k \in \ZZ^d} |b_k|^2|k_j|^{2\alpha'} \le \frac{L^2}{2} \frac{4^{2\alpha'}}{8^{2 \alpha}} \sum_{l=0}^{\infty} 4^{l(\alpha' - \alpha)} \le CL^2
\]
whenever the series converges (as long as $\alpha' < \alpha$).

Using this as our value for $\gamma_j$ and summing over the $d$ dimensions, we get:
\[
\sum_{j=1}^d \sum_{k \in \ZZ^d} |k_j|^{2s'} |\alpha_k| \le d (2\pi)^{-2 r} \gamma_j \le \frac{d CL^2}{(2 \pi)^{2r}}
\]
\end{proof}

\end{document}